\title{
  Avoidance of Concave Obstacles \\
  through Rotation of Nonlinear Dynamics
}
\author{Lukas Huber$^{1}$, Jean-Jacques Slotine$^2$,  Aude Billard$^1$ \vspace{-2.3ex}
\thanks{This work was supported by EU ERC grant SAHR.} 
\thanks{$^{1}$ LASA Laboratory, Swiss Federal School of Technology in Lausanne - EPFL, Switzerland. \tt $\{$lukas.huber;aude.billard$\}$@epfl.ch }
\thanks{$^{3}$ Nonlinear Systems Laboratory,  Massachusetts Institute of Technology, USA. \tt jjs@mit.edu}   
}
\newtheorem{theorem}{Theorem}[section]
\newtheorem{lemma}[theorem]{Lemma}
\theoremstyle{definition}
\newtheorem{definition}{Definition}[section]
\begin{document}
\newcommand{\vect}[1]{\bm{#1}}
\newcommand{\vecs}[1]{\bm{#1}}
\newcommand{\matr}[1]{\bm{#1}}

\newcommand{\angspace}[2]{\bm{k} \left( #1, \, #2 \right) }

\newcommand{\angs}[1]{\hat{\mathbf{#1}}}

\newcommand{\dotprod}[2]{\left\langle {#1}, \, {#2} \right\rangle}
\newcommand{\normdotprod}[2]{\frac{\left\langle #1, \, #2 \right\rangle}{\| #1 \| \, \| #2 \|}}

\def\sminus{-}
\def\const{\text{const.}}
\def\ker{\text{ker}}

\maketitle
\begin{abstract}
Controlling complex tasks in robotic systems, such as circular motion for cleaning or following curvy lines, can be dealt with using nonlinear vector fields. 
In this paper, we introduce a novel approach called rotational obstacle avoidance method (ROAM) for adapting the initial dynamics when the workspace is partially occluded by obstacles.
ROAM presents a closed-form solution that effectively avoids star-shaped obstacles in spaces of arbitrary dimensions by rotating the initial dynamics towards the tangent space.
The algorithm enables navigation within obstacle hulls and can be customized to actively move away from surfaces, while guaranteeing the presence of only a single saddle point on the boundary of each obstacle. 
We introduce a sequence of mappings to extend the approach for general nonlinear dynamics. 
Moreover, ROAM extends its capabilities to handle multi-obstacle environments and provides the ability to constrain dynamics within a safe tube. 
By utilizing weighted vector-tree summation, we successfully navigate around general concave obstacles represented as a tree-of-stars. 
Through experimental evaluation, ROAM demonstrates superior performance in terms of minimizing occurrences of local minima and maintaining similarity to the initial dynamics, outperforming existing approaches in multi-obstacle simulations. 
The proposed method is highly reactive, owing to its simplicity, and can be applied effectively in dynamic environments. 
This was demonstrated during the collision-free navigation of a 7 degree-of-freedom robot arm around dynamic obstacles.
\end{abstract}


\section{Introduction}  \label{sec:introduction}
Reactive motion plays a crucial role in numerous real-world robotics applications. When operating outside the controlled environments of factory floors, robots are exposed to unpredictable and dynamic surroundings, making precise estimation challenging. As a result, real-time adaptive controllers are essential to enable robots to adapt and reevaluate their actions in response to changing conditions.

A primary constraint when navigating in dynamic and cluttered environments is to ensure the safety of individuals moving around the robot. This requires the robot to constantly and rapidly replan its path to avoid collisions while making every effort to maintain its intended task and adhere to the originally intended movement dynamics. Furthermore, to protect physical hardware from potential damage caused by high accelerations, it is crucial to design a smooth system. Additionally, collision avoidance needs to seamlessly integrate with reactive control techniques (Figure~\ref{fig:qolo_in_maze}).

\begin{figure}[t]
  \centering
    \begin{subfigure}{.8\columnwidth}
    \centering
    \includegraphics[width=\textwidth]{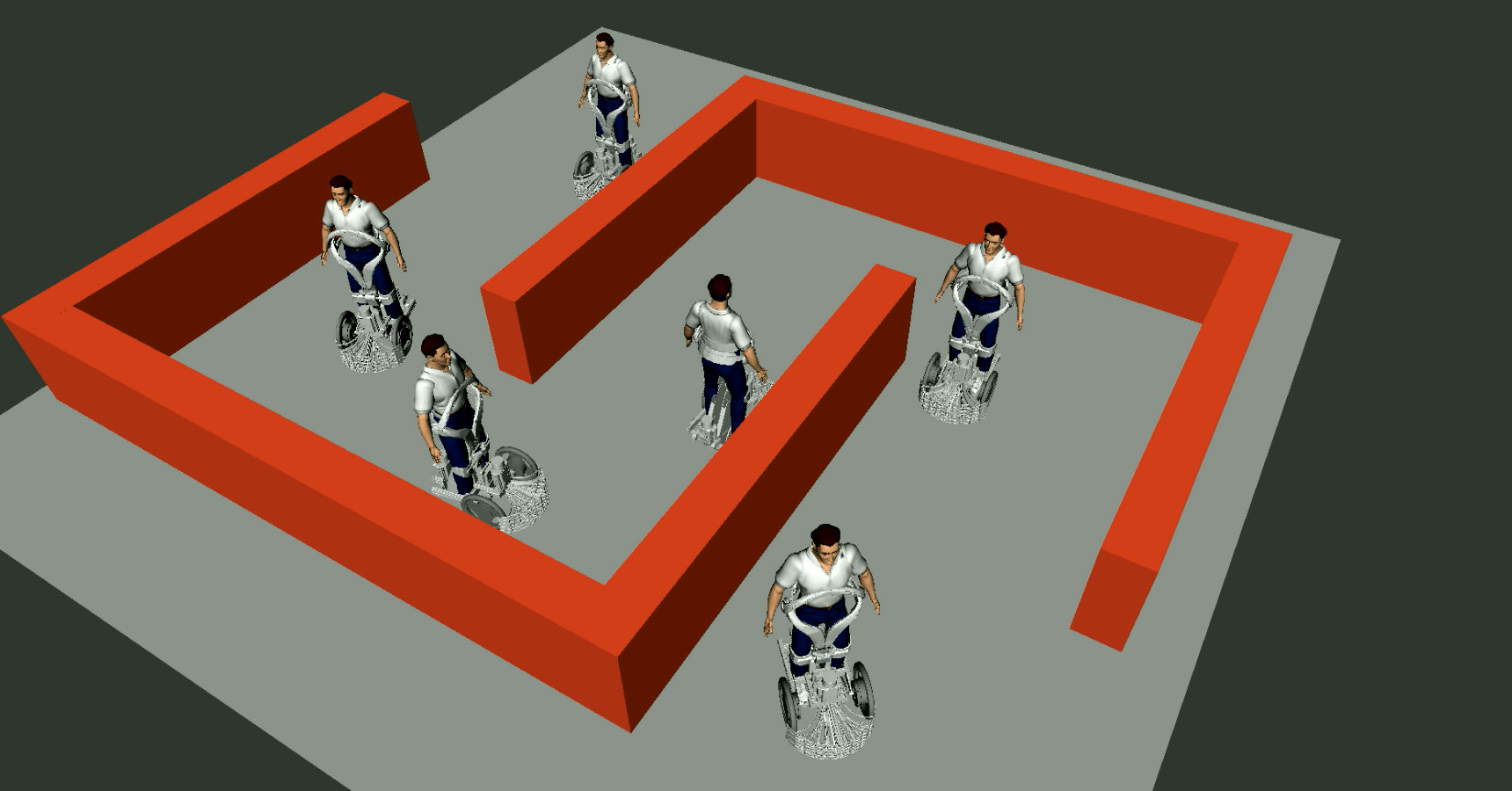}
    \caption{QOLO-robot navigating within small labyrinth}
    \label{fig:qolo_in_maze}
  \end{subfigure}
  \begin{subfigure}{.48\columnwidth}
    \centering
    \includegraphics[width=\textwidth]{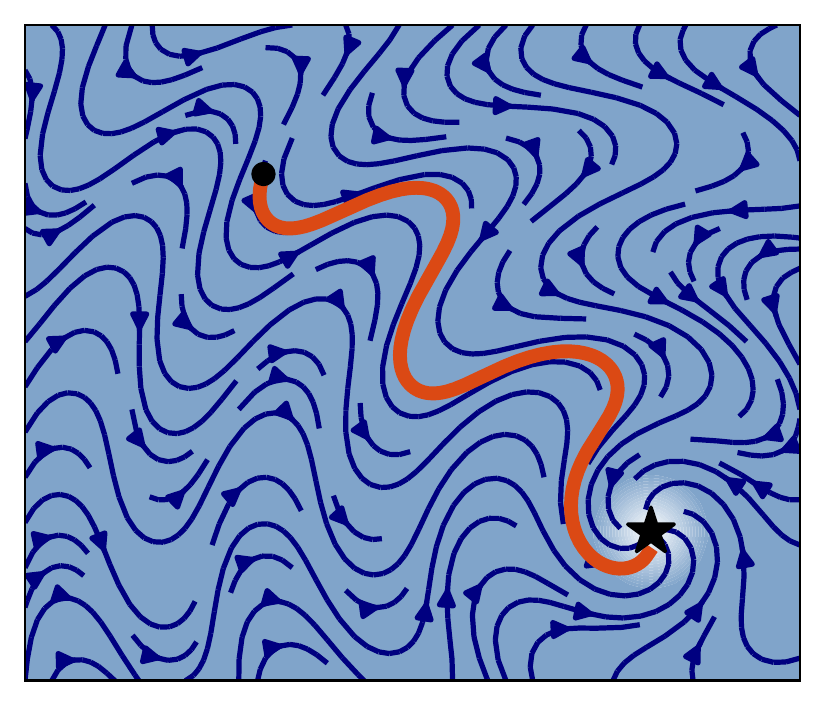}
    \caption{Initial dynamics}
    \label{fig:nonlinear_vectorfield_rotated}
  \end{subfigure}\hfill%
  \begin{subfigure}{.48\columnwidth}
    \centering
    \includegraphics[width=\textwidth]{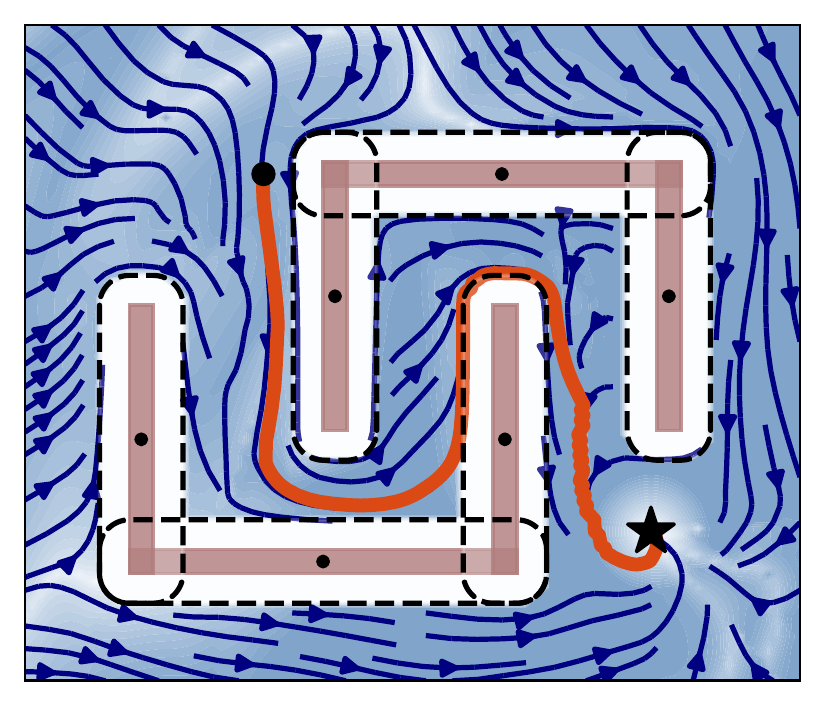}
    \caption{Rotated dynamics}
    \label{fig:nonlinear_vectorfield_rotated}
  \end{subfigure}
  \centering
  \caption{An autonomous wheelchair is guided by obstacle avoidance to navigate in an environment of complex obstacles (c). The intensity of the shading in (a) and (b) indicates the magnitudes of the velocity.}
    \label{fig:qolo_in_maze}
\end{figure}

Control methods based on vector fields \cite{goncalves2010vector} and dynamical systems \cite{huber2019avoidance} have proven to be well-suited for addressing the challenges posed by such scenarios. Rather than precomputing a trajectory for the robot, these methods generate a (nonlinear) control field that is evaluated in real-time at the robot's position. This allows the robot to react instantaneously to disturbances and perceive changes in its environment. In this work, we leverage the framework of dynamical systems and vector fields to develop an adaptive obstacle avoidance approach capable of modifying nonlinear motion in dynamic and complex environments (Fig. \ref{fig:qolo_in_maze}).


 Dynamical systems represent the evolution of a system's state without considering its history, effectively capturing the system's dynamics as a vector field. In robotics, dynamical systems have been employed for learning complex dynamics \cite{figueroa2018physically} and enforcing stability guarantees through techniques such as Lyapunov Stability \cite{slotine1991applied} or Contraction Theory \cite{huber2019avoidance}. Force control in robotics often utilizes second-order dynamical systems \cite{salehian2018dynamical}, while obstacle avoidance typically relies on first-order systems that output desired velocities based on the current position \cite{khansari2012dynamical}. Consequently, additional controllers are employed to ensure force- and torque-controlled robots follow the desired motion of the dynamical system \cite{kronander2015passive}.

Analogously, desired dynamics based on position can be represented as a vector field \cite{goncalves2010vector}. These nonlinear vector fields can be designed to ensure convergence and stability properties. Nonlinear vector fields designed for path following, known as "vector-field-guided path following," enable smooth convergence towards a desired path, reducing path-following errors \cite{kapitanyuk2017guiding}.

In this paper, we develop an adaptive obstacle avoidance framework capable of modifying nonlinear motion in response to dynamic and complex environments. The proposed approach aims to navigate the control system safely while maintaining the integrity of the intended motion dynamics. The effectiveness of our approach is demonstrated through experiments.

\subsection{Related Work} \label{sec:related_work}
Obstacle avoidance methods are often categorized into local and global strategies \cite{lynch2017modern}. Global methods focus on convergence to the goal, while local methods prioritize reactive collision avoidance in dynamic scenarios.

Global sampling-based methods, such as Rapidly-Exploring Random Trees (RRT), have been widely utilized to populate the search space by branching out a space-filling tree in order to find a feasible path \cite{lavalle1998rapidly}. Building upon RRT, the RRT* algorithm was introduced to automatically scale the sampling length during the search, resulting in improved performance and convergence rate \cite{karaman2011sampling}. Similarly, Probabilistic Roadmaps (PRM) employ space sampling to determine the collision-free status of points. By connecting the collision-free samples, local connectivity graphs are formed, enabling graph-search algorithms to find an optimal path from the initial position to the goal \cite{kavraki1996probabilistic}.
Although PRM and RRT are probabilistic-complete, meaning they can find a feasible path if one exists \cite{lavalle2004relationship}, the sampling-based approach incurs a significant computational cost. This renders it unsuitable for real-time recomputation, which is essential for dynamic environments. Sampling algorithms like RRT and PRM exhibit a complexity that is linear to the number of nodes $n$ in the sampled tree, represented as $\mathcal{O}(n) = \mathcal{O}(exp(N))$ \cite{noreen2016optimal}. Even in two-dimensional environments, the trees need to cover the entire workspace, leading to a large number of nodes.
To address the computational challenges and facilitate fast computation in dynamic environments, initial paths are often reshaped. One approach involves interpreting trajectories as \textit{elastic strips}, enabling efficient adaptation and obstacle avoidance \cite{brock2002elastic}. However, this comes at the loss of convergence guarantees.


Motion Optimization (MO) has been employed to dynamically adapt the originally sampled trajectories, analogous to stretching an elastic band around an object \cite{quinlan1993elastic}. However, MO tends to perform poorly in highly non-convex problems and often converges to suboptimal local minima. To alleviate this issue, Markov chains have been utilized for joint-space control \cite{zucker2013chomp}, while other approaches incorporate Riemann geometry and geometric constraints to improve convergence heuristics \cite{ratliff2015understanding, mainprice2020interior}. Nonetheless, MO remains local in nature and often fails to converge when dealing with non-convex motions.


Model Predictive Control (MPC) reduces the optimization problem to a finite time horizon, thereby reducing the convergence time \cite{sanchez2021nonlinear}. The increased computational power in recent years has facilitated the utilization of MPC for real-time collision avoidance \cite{williams2017model}. Sampling-based MPC has been employed for dynamic configuration-space collision avoidance in scenarios where an analytic cost function is absent \cite{bhardwaj2022storm, koptev2022neural}. However, the limited time horizon of MPC often limits the problem to local optimality, and global convergence cannot always be guaranteed. 


In recent years, the application of Machine Learning (ML) in collision avoidance for robotics has gained significant traction. End-to-end learning approaches have enabled collision-free multi-robot navigation \cite{riviere2020glas}, while fuzzy artificial potential fields augmented with neural networks have been utilized for motion planning of mobile robots \cite{wang2021path}. Conversely, RRT has served as a foundation for training reinforcement learning algorithms \cite{cai2023overcoming}. While learning-based methods demonstrate impressive performance, they often lack collision avoidance guarantees and cannot ensure convergence to a reliable solution.
Deep learning methods, such as multi-layer neural networks, incur a computational cost that scales with the number of parameters $p$, denoted as $\mathcal{O}(p)$ \cite{lecun2015deep}. Given that modern neural networks frequently comprise millions of parameters, the inference process becomes the most time-consuming component of the control loop. In numerous robotic implementations, the vision-based neural network constitutes the most time-intensive aspect of the control loop \cite{huber2022fast}. However, employing neural networks to output control commands introduces substantial computation time to the control loop \cite{koptev2022neural, cai2023overcoming}.

While most global methods lack the ability to adapt to dynamic environments, this limitation can be overcome by employing local methods. Many local methods can be interpreted as nonlinear feedback controllers.

One of the early collision avoidance methods in robotics is Artificial Potential Fields (APF). APF represents each obstacle as a repulsive field through which the agent navigates \cite{khatib1986real}. The reactive nature of APF makes it suitable for dynamic environments, such as tracking moving targets \cite{huang2009velocity}. APF has also been integrated with closed-feedback loops for real-time collision avoidance of robotic manipulators \cite{tulbure2020closing}. While APF is efficient and easy to implement, it is prone to local minima in multi-obstacle environments \cite{koren1991potential}.

Navigation functions (NFs) were developed to address the issue of multiple local minima by constructing a potential field with a single minimum \cite{koditschek1990robot}. While initially designed for sphere worlds, NFs utilize diffeomorphic mappings between \textit{star-worlds} and \textit{sphere-worlds} to extend their applicability to more general scenarios \cite{rimon1991construction}. These functions can also be employed to navigate around inverted obstacles that represent space boundaries, and their scope has been further expanded to include navigation around \textit{trees-of-stars} \cite{rimon1992exact}. However, achieving optimal performance with NFs requires comprehensive knowledge of the obstacle distribution, making the tuning of critical parameters a challenging task.
Efforts have been made to enhance NFs by eliminating critical parameters in sphere worlds \cite{paternain2017navigation}, and alternative approaches have been proposed to automate the tuning process \cite{loizou2017navigation}. NFs offer the advantage of quick re-evaluation, making them robust against disturbances. Nonetheless, due to the inherent difficulty in tuning them for general obstacle configurations, NFs pose challenges when applied in dynamic environments. Furthermore, as NFs rely on the gradient of the potential function to guide the system dynamics, they are unable to effectively track arbitrary nonlinear dynamics.

Vector fields (VF) employ highly nonlinear vector fields to navigate paths while ensuring collision avoidance. They have been successfully applied to evade multiple circular obstacles in two-dimensional space using local repulsive fields \cite{panagou2014motion}. VF techniques have also been utilized for controlling fixed-wing aircraft \cite{wilhelm2019vector} and extended to accommodate initially nonlinear dynamics, such as following a limit cycle \cite{yao2022guiding}. Nonetheless, it involves switching between different dynamics, which can result in high accelerations.
However, VF methods typically handle one obstacle at a time, imposing a conservative constraint where the influence regions of obstacles cannot overlap.

Harmonic potential functions (HPF) are intriguing because they guarantee the absence of topologically critical points in free space. While analytical HPFs are often elusive, numerical approximations have been employed \cite{connolly1990path}. Linear panel representations for obstacles enable the generation of closed-form HPFs \cite{kim1992real}. Although this linear approximation can be extended to concave obstacles, its application is limited to two-dimensional environments \cite{feder1997real}.

Dynamical system modulation (DSM) has emerged as an effective approach for collision avoidance in reactive environments by redirecting initial dynamics away from obstacles \cite{khansari2012dynamical}. To overcome challenges posed by intersecting, convex obstacles in three dimensions, DSM was extended to incorporate switching to surface following \cite{zheng2020dynamical}. In our previous research, we successfully imitated the behavior of harmonic potential functions to ensure the convergence of DSM around concave (star-shaped) obstacles \cite{huber2019avoidance}. Furthermore, we expanded the scope of DSM to encompass indoor environments, facilitating reactive avoidance based on sensor data \cite{huber2022avoiding, huber2022fast}. While DSM traditionally assumes a zero-dimensional point, we achieved collision-free rigid-body dynamics by representing mobile agents through multiple control points \cite{conzelmann2022dynamical}. However, it is important to note that DSM utilizes straight dynamics towards an attractor as an input, and the convergence guarantees are presently limited to star-shaped obstacles.

\subsection{Contributions}
This work significantly expands upon the reviewed DSM approach, enabling obstacle avoidance with nonlinear dynamical systems and for general concave obstacles. The paper's tehcnical contributions are outlined as follows:

\begin{itemize}
\item We introduce the Rotational Obstacle Avoidance Method (ROAM), which ensures local minima-free obstacle avoidance for nonlinear dynamics (Section~\ref{sec:rotational_avoidance}).
\item We present a diffeomorphic mapping that guarantees convergence while maintaining proximity to the general nonlinear dynamics, while trying to preserve the initial flow (Section~\ref{sec:convergence_dynamics}).
\item We extend the ROAM formulation to handle obstacle avoidance of multiple dynamic obstacles and enclosed spaces, utilizing the concept of inverted obstacles as introduced in \cite{koditschek1990robot, huber2022avoiding} (Section~\ref{sec:multi_obstacle_environment}).
\item Finally, we demonstrate the application of ROAM for avoiding general obstacles represented by trees-of-stars in the presence of nonlinear dynamics (Section~\ref{sec:concave_obstacles}).
\item Additionally, we develop a method for vector rotation in general dimensions and the weighted summing of trees of vector rotations (Appendix~\ref{sec:perpendicular_rotation}).
\end{itemize}

To validate the properties of ROAM, we conducted following evaluations:
\begin{itemize}
   \item We performed a quantitative comparison of ROAM against two recent analytical obstacle avoidance methods through simulations, assessing convergence rate, similarity to unperturbed motion, and acceleration along the trajectories (Section~\ref{sec:results}).
   \item Furthermore, we conducted a qualitative evaluation of ROAM's application in controlling a 7DoF robot arm to avoid trees-of-stars in three dimensions (Section~\ref{sec:robot_implementation}).
\end{itemize}

The proposed rotation obstacle avoidance method (ROAM) ensures collision avoidance in a local minima-free vector field in the presence of initial nonlinear dynamics. Moreover, the influence regions of the obstacle can overlap, and the method has been extended to inverted obstacles, too, as well as multiple obstacles with overlapping regions of influence, as can be seen in the comparison with similar algorithms in Table~\ref{tab:comparison_convergence}.

\begin{table}[tbh]
    \centering
	\scalebox{0.93}{
    \begin{tabular}{|l|c|c|c|c|c|} \hline 
      & RF & NF & MuMo  & VF-CAPF  & ROAM \\ 
       & \cite{khatib1986real} & \cite{koditschek1990robot} & \cite{huber2022avoiding} & \cite{yao2022guiding} &  \\ \hline \hline
      Local minima free & & \checkmark & \checkmark & (\checkmark) & \checkmark \\ \hline
      Switching free &\checkmark & \checkmark & \checkmark  &  & \checkmark\\ \hline
      Overlapping regions & \checkmark & \checkmark  & \checkmark  &  & \checkmark \\ \hline
      Nonlinear dynamics & \checkmark & & (\checkmark) & \checkmark & \checkmark \\ \hline
      Dynamic environments & \checkmark &  & \checkmark & \checkmark & \checkmark \\ \hline
      (Optional) repulsion & \checkmark &  &  &  & \checkmark \\ \hline
      Inverted obstacle & \checkmark & \checkmark & \checkmark &  & \checkmark \\ \hline
      Trees of Obstacles & \checkmark & \checkmark &  &  & \checkmark \\ \hline
    \end{tabular}
}
    \caption{
    Comparison of five different time-invariant algorithms across multiple criteria.
    All algorithms avoid solving an optimization problem.
    }
    \label{tab:comparison_convergence}
\end{table}

\section{Preliminairies} \label{sec:preliminairies}

\subsection{Notations}
In this section, we establish the notations used throughout this paper.
We consider a space of general dimension $N$ as the underlying framework for our developments. Vectors are denoted using bold symbols, and the state variable is represented by $\vecs \xi \in \mathbb{R}^N$. The time derivative of a vector $\vecs \xi$ is denoted as $\dot{\vecs{\xi}}$, assuming $\vecs \xi$ is a function that is differentiable with respect to time.
Unless explicitly stated otherwise, the paper employs the Euclidean norm, denoted as $|\cdot|$. The circular constant is denoted by $\pi$. The matrix $\matr I$ represents the identity matrix of appropriate dimensions.
The symbol $\circ$ signifies the iterative evaluation of a function. For example, when considering two functions $a(\cdot)$ and $b(\cdot)$, the expression $a \circ b(\vecs \xi)$ denotes the composition $a(b(\vecs \xi))$.
In the context of this paper, the term \textit{vector field} (VF) refers to the velocity field $\dot{\vecs \xi}$, while \textit{dynamical system} (DS) specifically denotes the continuous-time feedback system described in Section~\ref{sec:prelimniairies_ds}.

\subsection{Dynamical Systems} \label{sec:prelimniairies_ds}
Consider a vector field that governs the evolution of the position $\vecs{\xi} \in \mathbb{R}^N$ of a continuous-time system, described by the state derivative $\dot{\vecs{\xi}} \in \mathbb{R}^N$:
\begin{equation}
\dot{\vecs{\xi}} = \vecs{f}(\vecs{\xi}) \label{eq:ideal_initial_dynamics}
\end{equation}

Let us define {\em straight dynamics}, that do maintain the direction along a trajectory:
\begin{definition}[Straight Dynamics] \label{def:straight_dynamics}
A dynamical system of the form \eqref{eq:ideal_initial_dynamics} is referred to as \textit{straight dynamics} if, for all initial conditions $\vecs{\xi}_0$ and $\dot{\vecs{\xi}}_0$, the flow remains collinear with the initial velocity, satisfying $\dot{\vecs{\xi}}_0^T \dot{\vecs{\xi}}_t = \|\dot{\vecs{\xi}}_0\| \|\dot{\vecs{\xi}}_t\|$, for all $t \geq 0$. The flow is said to be \textit{locally straight} if it is straight within a subdomain $\mathcal{X}^s$. In the special case where the attractor is positioned infinitely far away, hence $\exists \vect v^a \in \mathbb{R}^N : \lim_{ \|\vecs{\xi} - \vecs{\xi}^a \| \rightarrow \infty} \dot{\vecs{\xi}}_t^T \vect v^a = \| \dot{\vecs{\xi}}_t^T \|$, hence the dynamics are called \textit{(locally) collinear}.
\end{definition}

Multiple straight dynamics are visualized in Figure~\ref{fig:straight_dynamcis}.
In the subsequent sections, we adopt the following parameterization for globally straight dynamics:
\begin{equation}
\vecs{f}^s(\vecs{\xi}) = q(\vecs{\xi}) \left(\vecs{\xi}^a - \vecs{\xi}\right)
\label{eq:straight_system}
\end{equation}
where $q: \mathbb{R}^N \rightarrow \mathbb{R} \setminus 0$ represents a continuous state-dependent scaling function that modulates the speed of the linear system $(\vecs{\xi}^a - \vecs{\xi})$ towards a fixed point $\vecs{\xi}^a$. This scaling allows for the adjustment of the speed of the linear dynamics as the system approaches or moves away from the attractor $\vecs{\xi}^a \in \mathbb{R}^N$, which serves as the unique stable fixed point or \textit{attractor} of the system (Fig.~\ref{fig:globally_straight}). Importantly, the condition $q(\vecs{\xi}) \neq 0$, combined with the continuity of $q$, preserves the directionality of the flow. Consequently, the vector field consistently points towards or away from the attractor across the state space.


\begin{figure}[tbh]\centering
\hfill
\begin{subfigure}{.33\columnwidth}
\centering
\includegraphics[width=\textwidth]{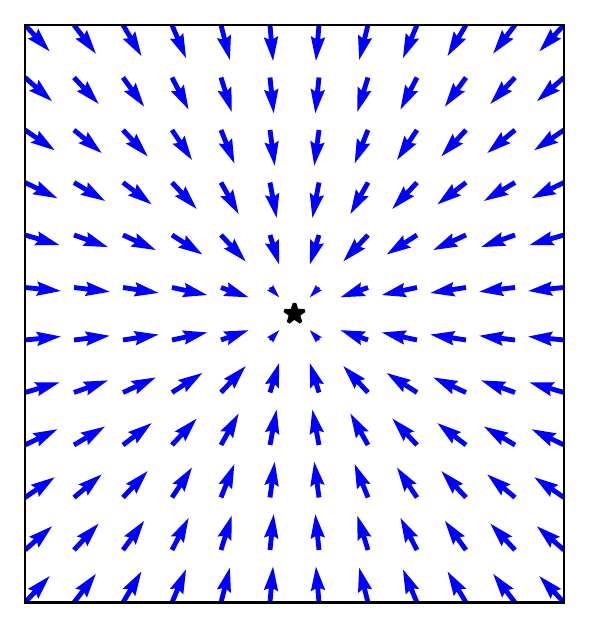}
\caption{Globally straight}
\label{fig:globally_straight}
\end{subfigure}\hfill%
\begin{subfigure}{.33\columnwidth}
\centering
\includegraphics[width=\textwidth]{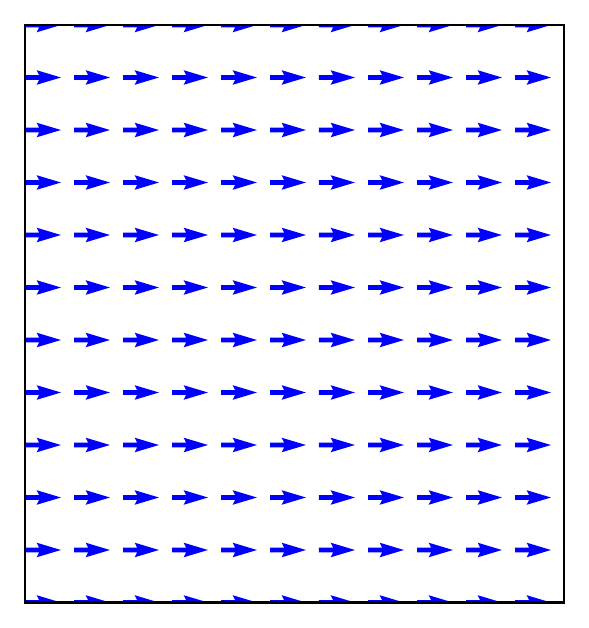}
\caption{Globally collinear}
\label{fig:globally_collinear}
\end{subfigure}\hfill%
\begin{subfigure}{.33\columnwidth}
\centering
\includegraphics[width=\textwidth]{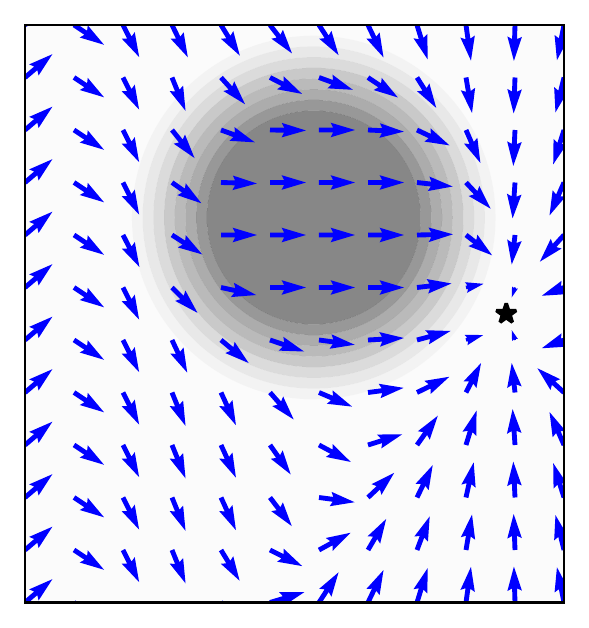}
\caption{Locally straight}
\label{fig:locally_straight}
\end{subfigure}\hfill%
\hfill
\caption{
Dynamical systems can exhibit different straightness characteristics.
Such as globally straight with a single, stable attractor $\vecs \xi^a$ visualized as a star (a). 
They can be globally collinear when the attractor is infinitely far away (b). 
Conversely, dynamics might be globally defined but only locally straight in a subdomain, visualized as the dark gray region (c).}
\label{fig:straight_dynamcis}
\end{figure}

\subsection{Obstacle Description}
In accordance with \cite{huber2019avoidance}, each obstacle is characterized by a continuous distance function $\Gamma(\vecs \xi): \mathbb{R}^N  \mapsto \mathbb{R}_{\geq 0}$.
This function enables the differentiation of regions outside, on the boundary, or inside the obstacle based on the isolines of $\Gamma$.
\begin{align}
  &\text{Free space:}&  \qquad & \mathcal{X}^f = \{\vecs \xi \in \mathbb{R}^N: \Gamma(\vecs \xi) > 1 \} \nonumber \\ 
  &\text{Boundary:}&  \qquad & \mathcal{X}^b = \{\vecs \xi \in \mathbb{R}^N: \Gamma(\vecs \xi) = 1 \} \label{eq:levelFunc} \\
  &\text{Interior set::}&  \qquad & \mathcal{X}^i = \{ \vecs \xi \in \mathbb{R}^N \setminus (  \mathcal{X}^f \cup \mathcal{X}^b ) \} \nonumber
\end{align}

\subsubsection{Star-Shaped Obstacles}\label{sec:kernel_space}
For representing star-shaped obstacles, we adopt the notation introduced in \cite{brunn1913kerneigebiete, hansen2020starshaped}. An obstacle is considered star-shaped if there exists a point $\vecs \xi^r$ with $\Gamma(\vecs \xi^r)<1$ such that every point on the boundary or in the interior set of the obstacle, i.e., $\vecs \xi \in \mathcal{X}^i$ and $\mathcal{X}^i:{\vecs \xi \in \mathbb{R}^N, \Gamma(\vecs \xi) \leq 1}$, is connected to $\vecs \xi^r$ by a line segment $l(\vecs \xi^r, \vecs \xi)$ contained within $\mathcal{X}^i$.

The set of all such points $\vecs \xi^r$ is referred to as the kernel of $\mathcal{X}^i$ and denoted as:
\begin{equation}
  \ker(\mathcal{X}^i) = \left\{
  \vecs \xi^r \in \mathcal{X}^i
  : l(\vecs \xi^r, \vecs \xi) \subset \mathcal{X}^i, \;
  \forall \, \vecs \xi \in \mathcal{X}^i \right\}  \label{eq:kernel_space}
\end{equation}
Throughout this paper, a specific choice of $\vecs \xi^r$ is referred to as the {\em reference point}, as commonly used in \cite{huber2019avoidance, huber2022avoiding}, and represented by a cross symbol $+$. \footnote{The reference point is in literature sometimes denoted \textit{kernel point}.}

\subsubsection{Distance Function} \label{sec:distance_function}
The distance function $\Gamma(\vecs \xi)$ increases monotonically in the radial direction, i.e., along the vector $\vecs \xi - \vecs \xi^r$, and has a continuous first-order partial derivative ($C^1$ smoothness). In this paper, we compute a distance function relative to one boundary point $\vecs \xi^b \in \mathcal{X}^{b}$: 
\begin{equation}
  \Gamma( \vecs \xi) = \| \vecs \xi-\vecs \xi^b\| / d_0  + 1 \quad \forall \vecs \xi \in \mathcal{X}^e \label{eq:distance_function}
\end{equation}
where $d_0 \in \mathbb{R}_{>0}$ is a scaling factor, which modulates the obstacle influence. 
The boundary point $\vecs \xi^ b$ lies at the intersection between a ball of radius $R(\vecs \xi)$ around the reference point $\vecs \xi^r$:
\begin{equation}
    R(\vecs \xi) = \| \vecs \xi^b - \vecs \xi^r \| \quad
  \vecs \xi^b = b \vect r (\vecs \xi) + \vecs \xi^r  
  \; , \; b>0 \, , \;\; \vecs \xi^b \in \mathcal{X}^{b} \label{eq:boundary_point}
\end{equation}
and the {\em reference direction} vector $\vect r: \mathbb{R}^N \setminus \vecs \xi^r \rightarrow \{ \vecs r \in \mathbb{R}^N : \|\vecs r \| = 1\}$:
\begin{equation}
{\vect r}(\vecs \xi) = {\left( \vecs \xi - \vecs \xi^r \right)}/{\|\vecs \xi - \vecs \xi^r \|}.   
\label{eq:reference_direction}
\end{equation}
The boundary point is hence a function of the state, $\vecs \xi^ b(\vecs \xi)$. 

Further, the normal direction can be defined as the derivative of the distance across space:
\begin{equation}
  \vect n(\vecs \xi) = d \Gamma(\vecs \xi) / d \vecs \xi \label{eq:normal_direction}
\end{equation}

\subsection{Rotated Vector Field} \label{sec:rotated_vector_field}
We present a construct that is fundamental to the obstacle avoidance approach proposed here, inspired by the concept of stereographic projection \cite{huber2022avoiding}. \footnote{A stereographic projection maps points located on a sphere onto a plane perpendicular to the sphere surface.}  
Consider a unit vector with respect to a basis vector $\vect b$, denoted as $ \vect k(\vect b, \vecs \xi): \mathbb{R}^N_I \times \mathbb{R}^N_I\rightarrow \mathbb{R}^{N-1}$, under the condition that $\vecs \xi:\dotprod{- \vect b}{\vecs \xi} \neq -1$. This vector extraction captures the rotation of $\vecs \xi$ with respect to $\vect b$ and has proven to be useful for minima-free vector-summing. The relative rotation satisfies the following properties:
\begin{equation}
  \cos \left(\| \vect k(\vect b, \vecs \xi) \| \right)  = \dotprod{\vect b}{\vecs \xi}
  \quad \text{and} \quad
  \| \vect k(\vect b, \vecs \xi) \| < \pi
  \label{eq:direction_space_constraints}
\end{equation}

The basis vector $\vect{b}$ is utilized as the first column to construct the orthonormal transformation matrix $\matr B$, enabling the transformation into a new basis: $\hat{\vecs \xi}= \matr B ^T \vecs \xi$.
In the direction space, the magnitude corresponds to the angle between the original vector and the reference vector. The transformation of the initial vector $\vecs \xi$ in the direction space is given by:
\begin{equation}
  \vect k(\vect b, \vecs \xi ) =
  \begin{cases}
    \arccos \left(\hat{ \vecs{\xi}}_{[1]} \right)
    \hat{\vecs \xi}_{[2:]} / \|  \hat{\vecs \xi}_{[2:]} \|  & \text{if} \;\; \hat{ \vecs{\xi}}_{[1]} \neq 1 \\
    \vect 0  & \text{otherwise}
\end{cases}
\label{eq:kappa_trafo}
\end{equation}

Note that a vector $\vecs \xi$, which is anti-collinear to $\vect b$, does not have a unique transformation and is hence excluded; see \cite{huber2022avoiding} for further discussion.
Correspondingly, we use $\bar{\vect k}(\vect b, \bar{\vecs \xi}): \mathbb{R}^N_I \times \mathbb{R}^{N-1} \rightarrow \mathbb{R}^{N}_I$ to describe the inverse mapping, such that:
\begin{equation}
  \vecs \xi = \bar{\vect k}(\vect b, \bar {\vecs \xi})
  \quad \text{with} \quad
  \bar {\vecs \xi} = \vect k(\vect b, \vecs \xi) 
  \label{eq:inverse_angular_mapping}
\end{equation}

The mapping to the original space is evaluated as follows:
\begin{equation}
  \bar{\vect k}(\vect b, \bar{\vecs \xi} ) =
  \begin{cases}
    \matr B \begin{bmatrix} 1 & 0 & .. & 0  \end{bmatrix}^T
    & \text{if} \;  \| \bar{\vecs \xi} \| = 0  \\
    \matr B
    \begin{bmatrix}
      \cos (\| \bar{\vecs \xi}  \| ) &
      \sin (\| \bar{\vecs \xi} \|) \bar{\vecs \xi} / \| \bar{\vecs \xi}\|
    \end{bmatrix}^T
    &  \text{otherwise}
  \end{cases}
\label{eq:kappa_trafo_inv}
\end{equation}%
It is important to note that the reverse mapping $\bar{\vect k}(\cdot)$ is defined for angles larger than $\pi$, but the function is not bijective for these values.

\subsection{Problem Statement} \label{sec:problem_statement}
We establish the following requirements for our obstacle avoidance controller:

\begin{itemize}
\item \textbf{Collision free:}
 The flow must remain outside the obstacles at all times, i.e., $\{\vecs \xi\}_t \in \mathcal{X}^e \; \forall t$ if $\{\vecs \xi\}_0 \in \mathcal{X}^e$. 
  
\item \textbf{State dependent:} The dynamics are history-invariant and depend solely on the current state, i.e. $\dot{\vecs \xi}_{t}=\vect f(\vecs \xi_t, \vecs \xi_{t-1}, .., \vecs \xi_{0})= \vect f( \vecs \xi_t)$. 
  
\item \textbf{Local minima free:}
    As demonstrated in \cite{yao2022guiding}, each $C^1$-smooth vector field obstacle introduces at least one stationary point on the surface. However, it must be ensured that (1) this point is a saddle point and not a minimum and (2) there are no additional stationary points in space.
  
\item \textbf{Limited and smooth dynamics;}
  The  magnitude of the vector field is upper bounded, i..e., $\exists \vect v^{\mathrm{max}} = \const: \| \dot{\vecs \xi} \| < \vect v^{\mathrm{max}}$.
  Additionally, the vector field is smooth, meaning that small displacements result in proportionally small velocities: $\lim_{\vecs \xi_1 \rightarrow \vecs \xi_2} \| \dot{\vecs \xi}_2 - \dot{\vecs \xi}_1 \| = 0$

\item \textbf{General dimensions:}
  The obstacle avoidance algorithm is applicable in a space of dimensions  $N \geq 2$.
\end{itemize}

Furthermore, we assume that the environment and initial dynamics $\vect f(\vecs \xi)$, as described in \eqref{eq:ideal_initial_dynamics}, possess the following properties:
\begin{itemize}
\item \textbf{Star-shaped:} All obstacles are star-shaped as defined in Sec.~\ref{sec:kernel_space}, or are composed of obstacles (trees-of-stars) with a nonzero intersection regions among the components of a tree, as discussed in Section~\ref{sec:kernel_space}.
  
\item \textbf{Limited and smooth dynamics:} Analogously to the output, the magnitude of the initial dynamics $\vecs f(\vecs \xi)$ is required to be $C^1$-smooth and limited, i.e., $\| \vecs f(\vecs \xi) \| < v^{\mathrm{max}}, \;\; \forall \vecs \xi \in \mathbb{R}^N, v^{\mathrm{max}} \in \mathbb{R}_{>0}$. 

\item \textbf{Dynamics as a vector rotation:} 
The initial dynamics $\vect f(\vecs \xi)$ can be evaluated as a local rotation (Section~\ref{sec:rotated_vector_field}) or a sequence of rotations (Appendix~\ref{sec:perpendicular_rotation}) of globally straight dynamics with respect to a fixed point $\vecs \xi^a$ (Definition~\ref{def:straight_dynamics}).
\end{itemize}


\section{Obstacle Avoidance through Rotation} \label{sec:rotational_avoidance}


A smooth vector field that effectively avoids an obstacle should have the velocity $\dot{\vecs \xi}$ directed away from or perpendicular to the normal $\vect n(\vecs \xi) \in \mathbb{R}^N$ as defined in \eqref{eq:normal_direction}. This principle is commonly expressed as follows \cite{khansari2012dynamical, feder1997real}:
\begin{equation}
\dotprod{\vect n(\vecs \xi)}{\dot{\vect \xi}} \geq 0
\quad \forall \, \vecs \xi \in \mathcal{X}^b
\label{eq:boundary_condition}
\end{equation}

\subsection{Vector Rotation for Collision Avoidance}
We propose a method called Rotational Obstacle Avoidance Method (ROAM) to achieve collision-free motion. ROAM smoothly adjusts the initial velocity $\vect f(\vecs \xi)$ as given in Eq.~\eqref{eq:ideal_initial_dynamics}, rotating it towards a feasible half-space as the position approaches the obstacle (Fig.~\ref{fig:nonlinare_vectorfield_creation}). The steps involved in ROAM for avoiding a single obstacle are as follows:
\begin{enumerate}
    \item Evaluation of pseudo-tangent direction $\vecs e(\vecs \xi)$ using the convergence direction $\vecs c(\vecs \xi)$ (Sec.~\ref{eq:pseudo_tangent})
    \item Rotation of the initial dynamics $\vecs f(\vecs \xi)$ towards the pseudo-tangent $\vecs e(\vecs \xi)$ to obtain the collision-free direction $\dot{\vecs \xi}$ (Sec.~\ref{sec:rotation_of_dynamics}).
    \item Evaluation of the velocity magnitude $h(\vecs \xi)$ to ensure a smooth vector field (Sec.~\ref{sec:evaluation_of_speed})
\end{enumerate}

\begin{figure}[tbh]\centering
  \hfill
  \begin{subfigure}{.48\columnwidth}
\centering
\includegraphics[width=\textwidth]{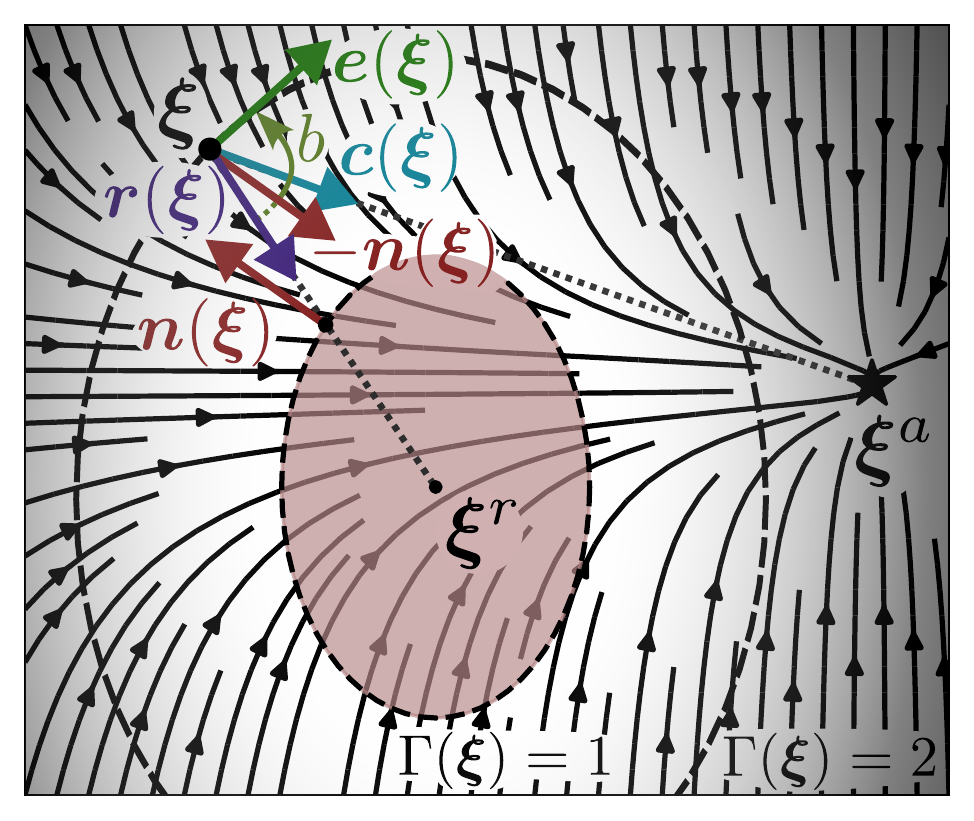}
\caption{Initial dynamics}
\label{fig:nonlinear_vectorfield_initial}
\end{subfigure}\hfill%
\begin{subfigure}{.48\columnwidth}
\centering
\includegraphics[width=\textwidth]{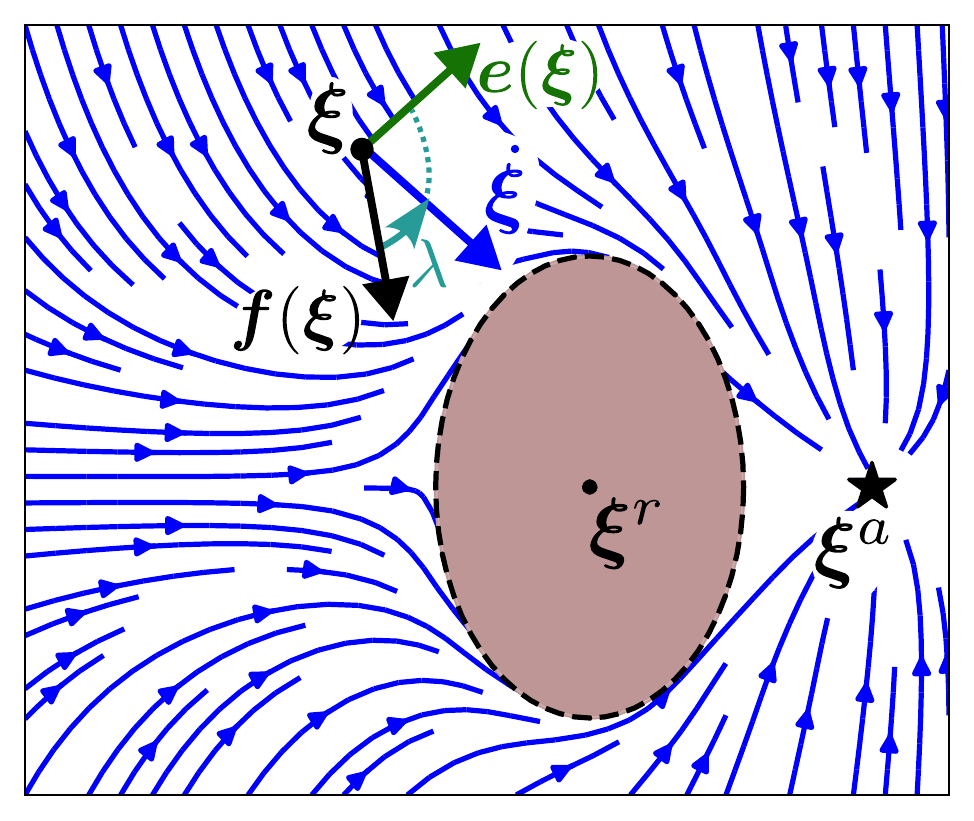}
\caption{Rotated system}
\label{fig:nonlinear_vectorfield_rotated}
\end{subfigure}\hfill%
\hfill
\caption{A nonlinear dynamical system 
$\vect f(\vecs \xi) = \text{diag}( 1 \;\; \Delta \vecs \xi_{[2]}  ) \| \Delta \vecs {\xi} \|$ with $\Delta {\vecs \xi} = \vecs \xi^a - \vecs \xi$
(a) is rotated using ROAM to ensure convergence towards the attractor $\vecs \xi^a$ with a single saddle point on the surface (b).}
\label{fig:nonlinare_vectorfield_creation}
\end{figure}

\subsubsection{Evaluation of Preferred Tangent Direction} \label{eq:pseudo_tangent}
At each position on the surface of an obstacle, there can exist multiple tangent directions. In the case of $d=2$, there are exactly two tangent directions, while for $d \geq 3$, there are infinitely many. To construct a smoothly defined pseudo-tangent $\vecs e(\xi)$, we introduce the concept of convergence dynamics, which are used to obtain the pseudo-tangent:

\begin{definition}[Convergence Dynamics] \label{def:convergence_dynamics}
    The $C^1$-smooth convergence-dynamics $\vect c: \mathbb{R}^N \rightarrow \mathbb{R}^N$ guides the initial dynamics $\vect f(\vecs \xi)$ around obstacles. 
    The convergence dynamics are locally straight according to Definition~\ref{def:straight_dynamics}, on the surface of the obstacle $\mathcal{X}^b$ as defined in \eqref{eq:levelFunc}
    Furthermore, the convergence dynamics $\vect c(\vecs \xi)$ is set to never be anti-collinear to the initial dynamics, i.e., $\dotprod{\vect f(\vecs \xi)}{\vect c(\vecs \xi)} \neq - \|\vect f(\vecs \xi)\| \, \| \vect c(\vecs \xi)\|
    \; \forall \vecs \xi \in \mathbb{R}^N$. \\ 
\end{definition}
In this section, we utilize convergence dynamics of the form $\vect c(\vecs \xi) = \vecs \xi - \vecs \xi^a$. For more general convergence dynamics, please refer to Section~\ref{sec:convergence_dynamics}.

The desired pseudo-tangent $\vect e(\vecs \xi)$ is obtained by rotating the convergence dynamics $\vecs c(\vecs \xi)$ away from the reference direction $\vecs r(\vecs \xi)$  until it lies in the tangent plane (Fig.~\ref{fig:nonlinare_vectorfield_creation}). 
Since, in the angular space $\vect k(\vecs n, \cdot)$, given in \eqref{eq:direction_space_constraints}, any tangent vector lies at a distance $R^e = \pi / 2$ to the normal direction, the tangent hyper-plane $\mathcal{T}$ forms a hyper-sphere in the direction space, as visualized in Figure~\ref{fig:visualization_direction_space}.
Hence, the rotation of the convergence dynamics $\vecs c(\vecs \xi)$ away from reference direction $\vect r(\vecs \xi)$ can be evaluated by intersecting the line connecting $\vect r(\vecs \xi)$ and $\vect c(\vecs \xi)$ with a circle of radius $R^e \in [\pi/2, \pi]$, see Fig.~\ref{fig:visualization_direction_space}. Thus, $\vect e(\vecs \xi)$ can be obtained through the following constraints\footnote{For conciseness dependency on $\vecs \xi$ is omitted.}:
\begin{gather}
  \text{if}
  \quad \| \vect k \left( \sminus \vect n, \vect c \right) \| \geq {R^e} 
  \quad \text{then} \quad {\vect e} = {\vect c}
  \quad \text{otherwise}   \nonumber \\
  \vect k \left(\sminus \vect n, {\vect e} \right) = (1 - b) \vect k \left( \sminus \vect n, {\vect r} \right) + b \vect k \left(\sminus \vect n,   \vect c \right)
  \label{eq:tangent_calculation} \\
    \text{such that} \quad \| \vect k \left(\sminus \vect n, \vect e \right) \| = R^e, \quad b \in \mathbb{R}_{>0} \;\; \forall \vecs \xi : \vect{c} \neq \vect{r} \nonumber
\end{gather}
The solution to the above equality constraints is obtained analytically by solving a quadratic equation with respect to the scalar $b$.

As a result, the pseudo-tangent $\vect e(\xi)$ can either lie in the tangent plane $\mathcal{T}$ or point away from the obstacle, with a distance to the normal direction contained within the green region depicted in Figure~\ref{fig:visualization_direction_space}.
In the special case where $\vect c(\vecs \xi) = \vect r(\vecs \xi)$, the intersection of the hyper-circle in Equation~\ref{eq:tangent_calculation} does yield a solution. However, it is ensured that the final vector field $\dot{\vecs \xi}$ is continuously defined, as discussed in Section~\ref{sec:rotation_of_dynamics}.

\begin{lemma} \label{lemma:pseudo_tangent}
 Let us assume the convergence dynamics $\vect c(\vecs \xi)$, as given in Definition~\ref{def:convergence_dynamics}.
  The pseudo tangent $\vect e(\vecs \xi) \in \mathbb{R}^N$ obtained through \eqref{eq:tangent_calculation} is smooth and satisfies the boundary inequality $\dotprod{\vect n(\vecs \xi)}{\vect e(\vecs \xi)} \geq 0$, stated in \eqref{eq:boundary_condition}, at any position on the surface of the obstacle 
  but the saddle point, i.e., $\vecs \xi \in \mathcal{X}^b : \dotprod{\vect c(\vecs \xi)}{\vect n(\vecs \xi)} \neq -1 \}$ 
\end{lemma}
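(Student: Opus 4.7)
The plan is to verify the two claims of the lemma separately: the boundary inequality and the smoothness, both exploiting the angular-space identity \eqref{eq:direction_space_constraints} that encodes inner products as norms in $\vect k(\cdot,\cdot)$-space.

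For the boundary inequality, I would first translate $\dotprod{\vect n}{\vect e} \geq 0$ into an angular-space condition. Using \eqref{eq:direction_space_constraints} with basis $-\vect n(\vecs \xi)$, we have $\cos(\|\vect k(-\vect n,\vect e)\|) = \dotprod{-\vect n}{\vect e}$, so the boundary condition is equivalent to $\|\vect k(-\vect n,\vect e)\|\geq \pi/2$. From here I would go case by case through the construction in \eqref{eq:tangent_calculation}. In the first branch, $\vect e=\vect c$ with $\|\vect k(-\vect n,\vect c)\|\geq R^e\geq \pi/2$, so the inequality is immediate. In the second branch, $\vect e$ is defined by intersecting the line segment from $\vect k(-\vect n,\vect r)$ to $\vect k(-\vect n,\vect c)$ with the hypersphere of radius $R^e\geq\pi/2$, so by construction $\|\vect k(-\vect n,\vect e)\|=R^e\geq\pi/2$. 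Mapping back via $\bar{\vect k}(-\vect n,\cdot)$ preserves these angular distances, yielding $\dotprod{\vect n(\vecs \xi)}{\vect e(\vecs \xi)}\geq 0$ in either case.

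For smoothness, I would argue within each case and then at the transition. Inside the first branch, $\vect e=\vect c$ inherits the $C^1$ regularity of $\vect c$ from Definition~\ref{def:convergence_dynamics}. Inside the second branch, smoothness follows from smoothness of $\vect r$, $\vect n$ and $\vect c$ together with the fact that the scalar $b$ is the root of a quadratic whose coefficients are smooth functions of $\vecs \xi$, and the relevant discriminant stays strictly positive away from the excluded set (the line from $\vect k(-\vect n,\vect r)$ through $\vect k(-\vect n,\vect c)$ strictly crosses the hypersphere of radius $R^e$ whenever $\|\vect k(-\vect n,\vect c)\|< R^e$ and $\vect c\neq \vect r$). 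At the interface $\|\vect k(-\vect n,\vect c)\| = R^e$, the line-sphere intersection coincides with $\vect k(-\vect n,\vect c)$ itself, so the two branches agree in value. A first-order matching argument (differentiating the equality constraint $\|\vect k(-\vect n,\vect e)\|=R^e$ along directions tangent to this interface) shows derivatives match as well, giving $C^1$ continuity across the seam.

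The only delicate point is the excluded configuration $\dotprod{\vect c}{\vect n}=-1$ on $\mathcal{X}^b$: here $\vect c$ is anti-collinear to $-\vect n$ and the transformation $\vect k(-\vect n,\vect c)$ is ambiguous (as flagged after \eqref{eq:kappa_trafo}). This single point is precisely the saddle excluded in the lemma statement. Elsewhere on $\mathcal{X}^b$, the non-antipodality of $\vect c$ and $\vect f$ in Definition~\ref{def:convergence_dynamics} combined with $\vect c\neq \vect r$ generically guarantees that the quadratic used to compute $b$ has a well-defined, smoothly varying positive root. The main obstacle I anticipate is precisely this smoothness-at-the-seam argument: one has to verify not merely continuity of $\vect e$ across $\|\vect k(-\vect n,\vect c)\|=R^e$ (which is essentially immediate), but that the implicit-function-style solution for $b$ yields matching first derivatives so that the piecewise definition is genuinely $C^1$ and no spurious extrema are introduced.
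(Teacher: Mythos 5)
Your treatment of the boundary inequality is essentially the paper's own argument: translate $\dotprod{\vect n}{\vect e}\geq 0$ into the angular-space condition $\|\vect k(-\vect n,\vect e)\|\geq \pi/2$ via \eqref{eq:direction_space_constraints}, then check the two branches of \eqref{eq:tangent_calculation} ($\vect e=\vect c$ already outside the radius-$R^e$ sphere, versus $\|\vect k(-\vect n,\vect e)\|=R^e\geq\pi/2$ by construction). That part is fine.

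There is one genuine gap: you never establish that the quadratic for $b$ has a \emph{unique} positive root, which is what makes $\vect e(\vecs\xi)$ well defined in the first place. Your parenthetical only argues that the line through $\vect k(-\vect n,\vect r)$ and $\vect k(-\vect n,\vect c)$ meets the hypersphere when $\|\vect k(-\vect n,\vect c)\|<R^e$; that gives existence of an intersection but not uniqueness of the root with $b>0$. A line through a ball meets its boundary sphere at two points, and if $\vect k(-\vect n,\vect r)$ were \emph{outside} the ball of radius $R^e$ both intersections could occur at positive $b$, leaving $\vect e$ ambiguous and breaking the subsequent smoothness argument. The missing ingredient, which the paper makes explicit, is the star-shape property: by definition of the kernel \eqref{eq:kernel_space} one has $\dotprod{\vect n(\vecs\xi)}{\vect r(\vecs\xi)}<0$, hence $\|\vect k(-\vect n,\vect r)\|<\pi/2\leq R^e$, so $\vect k(-\vect n,\vect r)$ lies strictly inside the hypersphere. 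With both endpoints of the segment inside the (convex) ball, exactly one of the two boundary intersections has $b>0$. Once you add this, your case analysis goes through. On smoothness, note that your $C^1$-matching discussion at the seam $\|\vect k(-\vect n,\vect c)\|=R^e$ is actually more than the paper attempts (the paper only asserts well-definedness and leaves the continuity of the overall field to Lemma~\ref{lemma:velocity_rotation} and the weight $\lambda$); your observation that the two branches agree in value at the interface is the relevant fact, but a full first-order matching claim would need the paper's smoothness of $\Gamma$, $\vect n$, and $\vect r$ to be spelled out and is not something the paper itself proves.
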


\begin{proof}
  For a starshaped obstacle, we have by definition of the starshaped kernel space in \eqref{eq:kernel_space}, that $\dotprod{\vect n(\vecs \xi)}{\vect r(\vecs \xi)} < 0$. Hence, $\| \vect k (-\vect n, \vect r) \| < \pi / 2$, i.e.,  $\vect k (-\vect n, \vect r)$ is strictly inside the hyper-sphere $\mathcal{T}$.
  Furthermore, as long as $\vect c(\vecs \xi) \neq \vect r(\vecs \xi)$, the equality of finding the intersection of a line and a hyper-sphere from \eqref{eq:tangent_calculation} has exactly one solution with $b > 0$.

  Concerning the boundary condition; looking at the case of $\| \vect k \left( \sminus \vect n, \vect c \right) \| > {R^e}$. From the definition of the direction-space in \eqref{eq:direction_space_constraints}, it follows that:
  \begin{equation}
    \dotprod{\vect c(\vecs \xi)}{\vect n (\vecs \xi)} > 0
      \underset{\vect c(\vecs \xi) = \vect e(\vecs \xi)}{\Rightarrow}
      \dotprod{\vect e(\vecs \xi)}{\vect n (\vecs \xi)} > 0
\end{equation}
For the case that $\dotprod{\vect c(\vecs \xi)}{\vect n (\vecs \xi)} < 0$, from \eqref{eq:direction_space_constraints} and \eqref{eq:tangent_calculation} we can conclude:
\begin{equation}
\begin{split}
\dotprod{\vect n}{\vect e} & = \cos \left( \| \vect k (\vect n,  \vect e )\| \right) = - \cos \left( \| \vect k ( - \vect n,  \vect e) \| \right) \\
& \geq - \cos \left( R^e \right) \geq - \cos \left( \pi/2 \right) = 0
\end{split}
\end{equation}
Hence, we have a uniquely defined pseudo tangent $\vect e(\vecs \xi)$, which satisfies the boundary condition given in \eqref{eq:boundary_condition}. 
\end{proof}

\begin{figure}[tbh]\centering
\begin{subfigure}{.49\columnwidth}
\centering
\includegraphics[width=\textwidth]{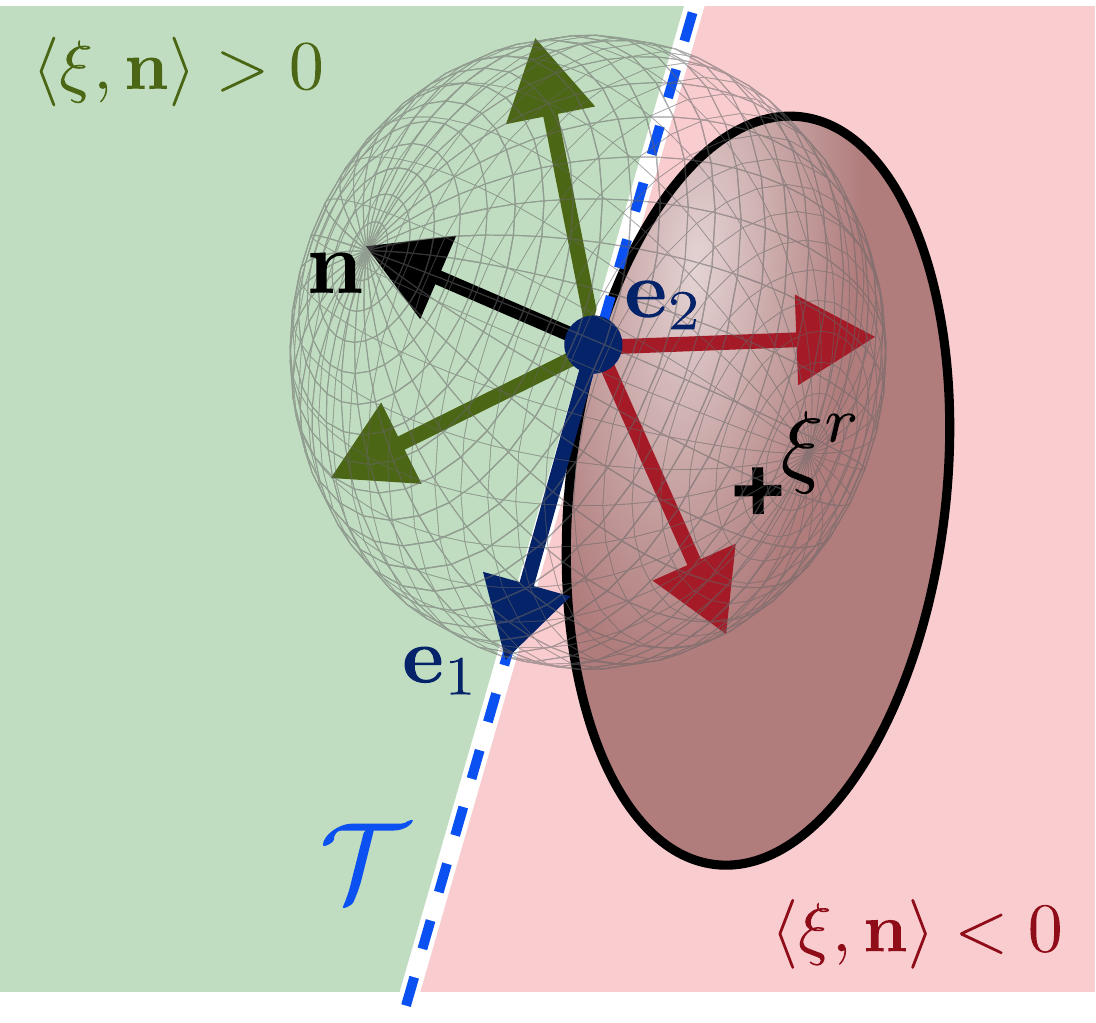}
\caption{Euclidean space}
\label{fig:collision_vectors}
\end{subfigure}%
\hfill
\begin{subfigure}{.48\columnwidth}
\centering
\includegraphics[width=\textwidth]{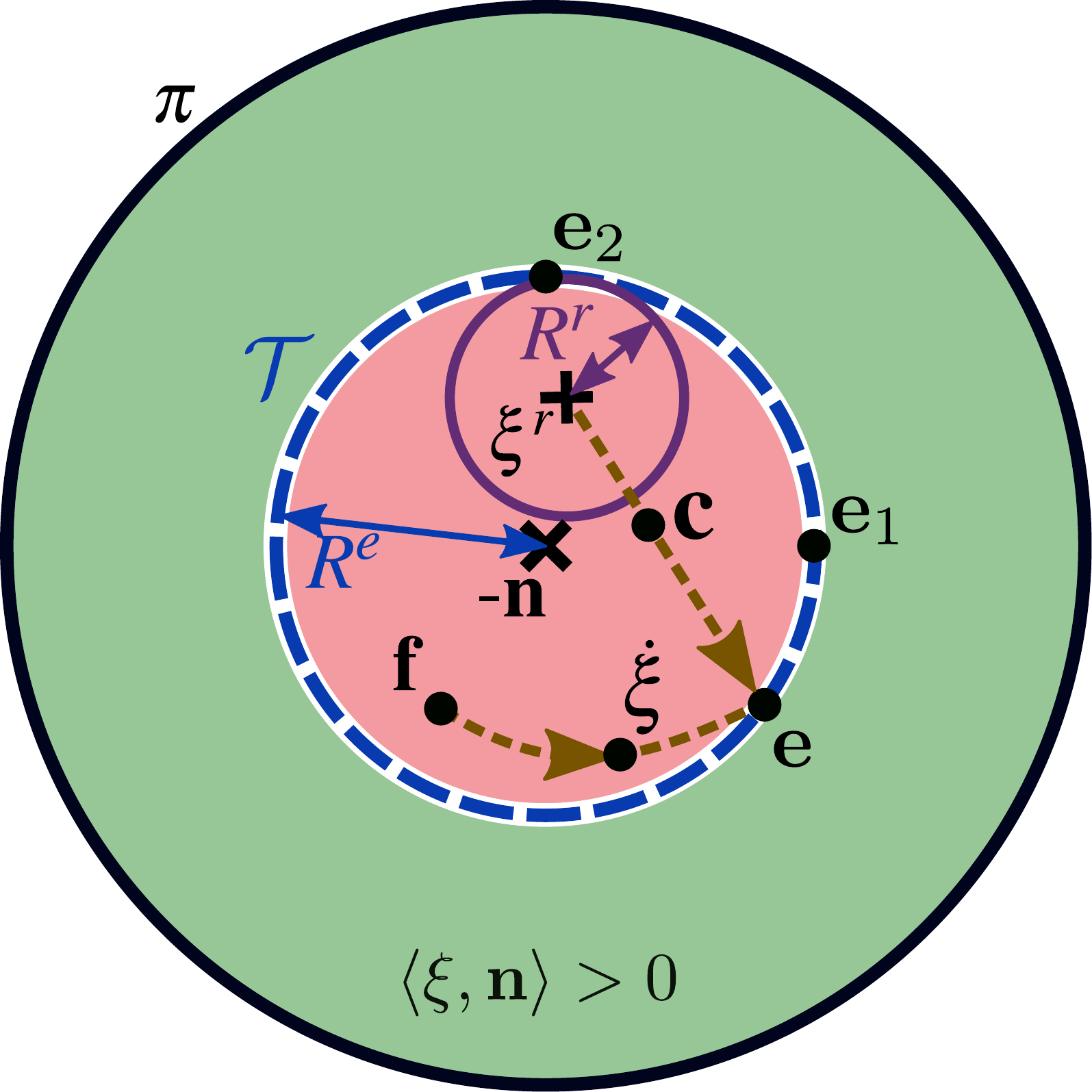}
\caption{Directional space $\vect k(-\vect n, \cdot)$}
\label{fig:directional_space_tangent}
\end{subfigure}%
\caption{When approaching the surface of the obstacle, the collision-free vectors are located on one side of the tangent plane $\mathcal{T}$ formed by the linearly independent tangent vectors $\vect e_{(\cdot)}$ (a). These unit vectors can be projected onto a circular hypersphere of dimension $N-1$, where the collision-free pseudo tangent $\vecs e$ is calculated (b). Any vector $\vecs \xi$ pointing towards the obstacle, i.e., $\dotprod{\vect n}{\vecs \xi} < 0$ (a), is projected to lie inside the circle of radius $\pi/2$ (b), while a vector pointing away from the obstacle is projected to the outer ring.}
\label{fig:visualization_direction_space}
\end{figure}

\subsubsection{Rotation Towards Tangent Direction} \label{sec:rotation_of_dynamics}
In a second step, the initial dynamics $\vect f(\vecs \xi)$ is \textit{rotated} towards the pseudo tangent $\vect e(\vecs \xi)$. 
This rotation operation is performed in the direction space, as described in Section~\ref{sec:rotated_vector_field}, but this time with respect to the convergence dynamics $\vecs c(\vecs \xi)$:
\begin{gather}
   \dot{\vecs \xi} = \bar{\vect k} \Bigl( \vect c, \,\bigl(1-\lambda(\vecs \xi)\bigr) \angspace{\vect c}{\vect f} +  \lambda(\vecs \xi) \, \angspace{\vect c}{\vect e} \Bigr) \quad \text{with} \nonumber \\ 
   \lambda(\vecs \xi) = \left(\frac{1}{\Gamma(\vecs \xi)}\right)^q , \;\;
    R^r = \min \left( R^e -  \| \angspace{\sminus \vect n}{\vect r} \|, \frac{\pi}{2}\right) \label{eq:pulling_weight} \\ 
      q  = \max \left( 1 , \; \frac{ R^r }{\Delta \vect k(\vect c)} \right)^{s},
     \quad \Delta \vect k(\vect c) = \| \angspace{\sminus \vect n}{\vect r} - \angspace{\sminus \vect n}{\vect c} \| \nonumber  
 \end{gather}
The rotation weight $\lambda(\vecs \xi) \in [0, 1]$ is determined based on the inverse of the distance $\Gamma$ to ensure that the rotation has a decreasing influence as the distance increases. This allows for a smooth transition in the avoidance behavior. Additionally, the smoothing factor $q$ is introduced, which gradually decreases to zero when the convergence dynamics point towards the robot. This effectively cancels the rotation avoidance effect in regions where the tangent $\vect e(\vecs \xi)$ is not defined, resulting in a smooth avoidance behavior across those positions. The impact of the smoothness constant $s \in \mathbb{R}_{>0}$ can be observed in Figure~\ref{fig:comparison_smoothing_constant}, and unless otherwise specified, we use $s=0.3$.
 
\begin{lemma} \label{lemma:velocity_rotation}
The vector field $\dot{\vecs \xi}$ obtained through rotation as given in \eqref{eq:pulling_weight}  satisfies the boundary condition as defined in \eqref{eq:boundary_condition} if the pseudo tangent $\vect e(\vecs \xi)$ is tangent or pointing away from the obstacle, i.e., $\dotprod{\vect e}{\vect n} \geq 0$, $\forall \vecs \xi: \Delta \vect k(\vect c) \neq 0$.
\end{lemma}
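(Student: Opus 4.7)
The plan is to exploit the fact that the rotation weight $\lambda(\vecs \xi)$ saturates to one on the boundary, so that the rotated vector field collapses exactly onto the pseudo-tangent direction $\vecs e(\vecs \xi)$. The hypothesis $\dotprod{\vect e}{\vect n} \geq 0$ then transfers directly to $\dot{\vecs \xi}$.

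First I would fix an arbitrary point $\vecs \xi \in \mathcal{X}^b$, so $\Gamma(\vecs \xi) = 1$ by the definition of the boundary set in \eqref{eq:levelFunc}. Second, I would verify that the exponent $q$ in \eqref{eq:pulling_weight} is a finite real number at this point: the definition $q = \max\bigl(1, R^r/\Delta\vect k(\vect c)\bigr)^s$ is well-posed precisely because the hypothesis $\Delta\vect k(\vect c) \neq 0$ rules out the degenerate case in which $\vect c$ would coincide with $\vect r$ in the direction space. Consequently $\lambda(\vecs \xi) = (1/\Gamma(\vecs \xi))^q = 1^q = 1$.

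Third, I would substitute $\lambda = 1$ into the rotation formula to obtain
\begin{equation*}
  \dot{\vecs \xi} \;=\; \bar{\vect k}\bigl(\vect c,\; \vect k(\vect c, \vect e)\bigr),
\end{equation*}
and then invoke the inverse-mapping property \eqref{eq:inverse_angular_mapping} to conclude that $\dot{\vecs \xi}$ is the unit vector collinear with $\vect e(\vecs \xi)$. Since the hypothesis guarantees $\dotprod{\vect e}{\vect n} \geq 0$, taking the inner product with $\vect n$ and using linearity yields the boundary inequality \eqref{eq:boundary_condition}.

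The main subtlety, and the only step where care is required, is the well-posedness of the composition $\bar{\vect k}(\vect c, \vect k(\vect c, \vect e))$: one needs $\vect e$ not to be anti-collinear with $\vect c$, which is where the forward map in \eqref{eq:kappa_trafo} is singular. I would argue that this follows from the construction of $\vect e$ in \eqref{eq:tangent_calculation}, since $\vect e$ lies on the hyper-sphere of radius $R^e \leq \pi$ around $-\vect n$ in the direction space of $\vect c$ and is obtained as a positive-barycentric combination (scalar $b>0$) of the images of $\vect r$ and $\vect c$, preventing an antipodal configuration. Aside from this bookkeeping, the lemma reduces to algebraic substitution into \eqref{eq:pulling_weight} at $\Gamma = 1$.
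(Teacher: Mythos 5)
Your proof is correct and follows essentially the same route as the paper's: evaluate $\lambda(\vecs \xi)=1$ on the boundary where $\Gamma(\vecs \xi)=1$ (using $\Delta\vect k(\vect c)\neq 0$ to keep the exponent $q$ well-posed), so the rotated field collapses onto $\vect e(\vecs \xi)$ and inherits $\dotprod{\vect e}{\vect n}\geq 0$. Your additional check that $\vect e$ cannot be anti-collinear with $\vect c$ (so the composition $\bar{\vect k}(\vect c,\cdot)\circ\vect k(\vect c,\cdot)$ is well defined) is a useful piece of bookkeeping the paper leaves implicit, but it does not change the argument.
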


\begin{proof}
	When approaching the obstacle, the rotated velocity $\dot{\vecs \xi}$ is evaluated as:
	\begin{equation}
		\lim_{\Gamma(\vecs \xi) \rightarrow 1, \; } \lambda(\vecs \xi) = 1 
		\;\; 
			\underset{\text{with} \; \eqref{eq:inverse_angular_mapping}} {\Rightarrow} \;\;
	\lim_{\lambda \rightarrow 1} \dotprod{\vect n}{\dot{\vect \xi}}
	= \dotprod{\vect n}{\vect e} \geq 0
	\end{equation}
    using Lemma~\ref{lemma:pseudo_tangent} and that $\Delta \vect k(\vect c) \neq 0 \Rightarrow q > 0$ with the smoothness factor $q$ given in \eqref{eq:pulling_weight}.
    
Following the same logic, we can also analyze the behavior far away from obstacles:
\begin{equation}
	\lim_{\Gamma(\vecs \xi) \rightarrow \infty} \lambda(\vecs \xi) = 0
	\;\;
	\Rightarrow
	\;\;
	\lim_{\lambda \rightarrow 0} \dot{\vecs \xi} = \vect f(\vecs \xi)
\end{equation}
\end{proof}

\begin{figure}[tbh]\centering
\begin{subfigure}{.33\columnwidth}
\centering
\includegraphics[width=\textwidth]{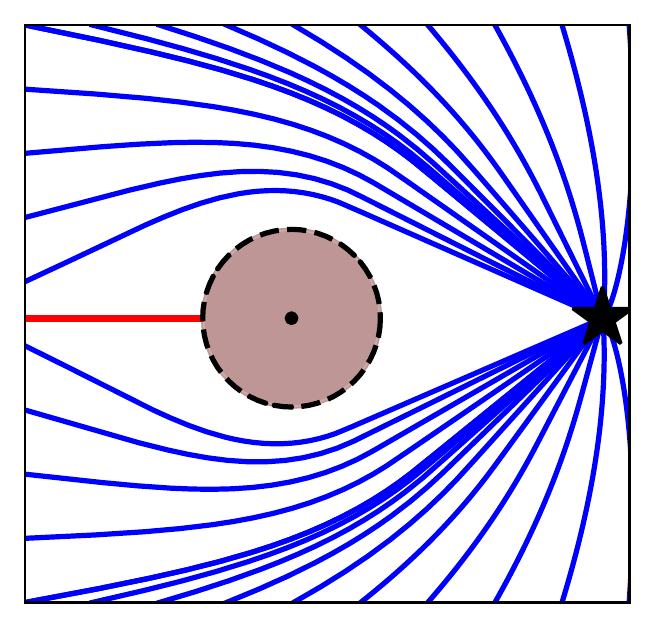}
\caption{$s = $1e-6}
\label{fig:comparison_linear_integration_00}
\end{subfigure}%
\begin{subfigure}{.33\columnwidth}
\centering
\includegraphics[width=\textwidth]{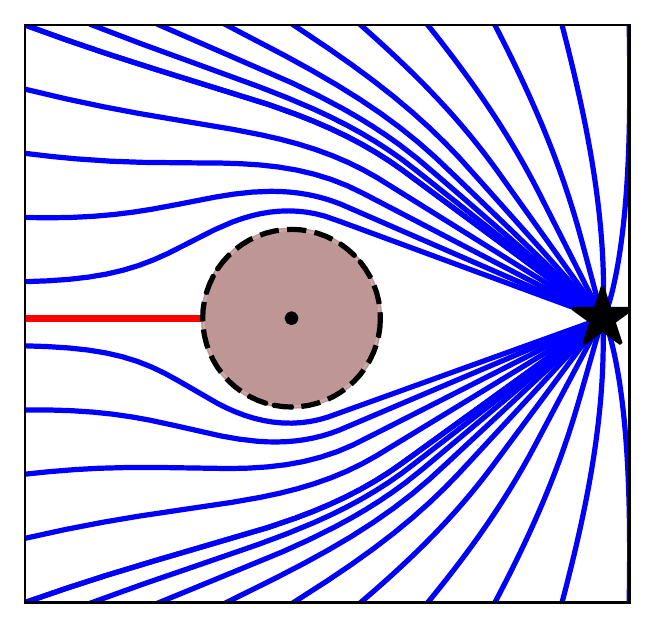}
\caption{$s = 0.3$}
\label{fig:comparison_linear_integration_03}
\end{subfigure}%
\begin{subfigure}{.33\columnwidth}
\centering
\includegraphics[width=\textwidth]{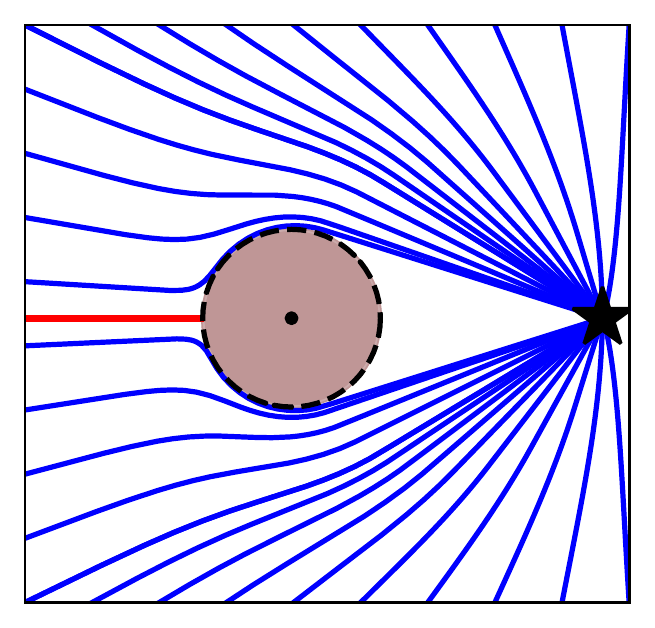}
\caption{$s = 1.0$}
\label{fig:comparison_linear_integration_10}
\end{subfigure}%
\caption{A smaller smoothness-constant $s$ increases the reactivity when approaching an obstacle (a), while a larger value ensures a smoother transition across the (red) saddle point trajectory (b, c).}
\label{fig:comparison_smoothing_constant}
\end{figure}


\subsubsection{Evaluation of Speed} \label{sec:evaluation_of_speed}
The directional space mapping $\vect k(\cdot)$ and its inverse mapping $\bar{\vect k}(\cdot)$ operate on unit vectors in the original space. As a result, the algorithm in \eqref{eq:pulling_weight} modifies the direction of the initial dynamics, rather than its magnitude. To incorporate magnitude control in a decoupled manner, we introduce an additional component using $h(\vecs \xi) : \mathbb{R}^N \rightarrow [0, 1]$:
\begin{equation}
	\| \dot {\vecs \xi} \| = h(\vecs \xi) \| \vect f(\vecs \xi) \| 
\end{equation}
The stretching factor $h(\vecs \xi)$ is designed to slow down when pointing towards an obstacle, and the effect decreases with increasing distance from the obstacle:
\begin{equation}
	h(\vecs \xi) = \min \left(1, \, \left({\frac{\| \Delta \vect k \left(\vect c\right) \|}{R^r}}\right) ^2 + \left(1 - \frac{1}{\Gamma(\vecs \xi)}\right) ^2 \right)
	\label{eq:magnitude_scaling}
\end{equation}
where the reference radius $R^r$ and the rotation space distance $\Delta \vect k(\vecs c)$ are given in \eqref{eq:pulling_weight}. 
In Figure~\ref{fig:comparison_nonlinear_vectorfield}, the velocity scaling can be observed to decrease the magnitude of the vectors pointing toward the obstacle.

\begin{theorem} \label{theorem:collision_avoidance}
Consider a vector field $\dot{\vecs \xi} \in \mathbb{R}^N$ obtained after a local rotation as defined in \eqref{eq:pulling_weight} of an initial dynamics $\vect f(\vecs \xi)$ and fixed point at $\vecs \xi^a$, with pseudo tangent $\vect e(\vecs \xi)$ defined in \eqref{eq:tangent_calculation} with respect to initial dynamics $\vect f(\vecs \xi)$, is locally straight on the surface of the obstacle $\mathcal{X}^b$ according to Definition~\ref{def:straight_dynamics}, and motion scaling $h(\vecs \xi)$ according to \eqref{eq:magnitude_scaling}. Any motion starting in free space $\{\vecs \xi\}_0 \in \mathcal{X}^e$ which evolves according to $\dot{\vecs \xi}$ will stay in free space for finite time $\{\vecs \xi\}_t \in \mathcal{X}^e$ with $t \in \mathbb{N}_{>0}$ and maintains the stationary point, i.e., $ \dot{\vecs \xi} = \vect 0$ if $\vect f(\vecs \xi) = \vect 0$ 
\end{theorem}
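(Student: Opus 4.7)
The claim splits into two independent parts: preservation of equilibria of the input dynamics, and forward invariance of the free space $\mathcal{X}^e$. The equilibrium part is immediate from the decoupled magnitude design: since \eqref{eq:magnitude_scaling} gives $\|\dot{\vecs \xi}\| = h(\vecs \xi)\,\|\vect f(\vecs \xi)\|$ with $h(\vecs \xi)\in[0,1]$, whenever $\vect f(\vecs \xi)=\vect 0$ we get $\dot{\vecs \xi}=\vect 0$ directly, regardless of what the rotation in \eqref{eq:pulling_weight} does to the direction. So I would dispatch this half in one line.

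For the collision-avoidance half, my plan is a Nagumo-style barrier argument on the level function $\Gamma$. Lemma~\ref{lemma:velocity_rotation} already delivers the crucial inequality: at every boundary point $\vecs \xi \in \mathcal{X}^b$ with $\Delta \vect k(\vect c)\neq 0$ we have $\dotprod{\vect n(\vecs \xi)}{\dot{\vecs \xi}}\geq 0$. Because $\vect n$ is (up to a positive scalar) the gradient of $\Gamma$ by \eqref{eq:normal_direction}, this becomes $\tfrac{d}{dt}\Gamma(\vecs \xi(t)) \geq 0$ whenever the trajectory grazes the boundary. The standard first-exit-time contradiction then applies: if $t^{*} = \inf\{t>0: \Gamma(\vecs \xi(t))<1\}$ were finite, continuity of $\Gamma\circ \vecs \xi$ forces $\Gamma(\vecs \xi(t^*))=1$, but the non-negative outward derivative at $t^*$ contradicts $\Gamma$ dropping strictly below $1$ immediately afterwards. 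Combined with the smoothness and bounded magnitude of $\dot{\vecs \xi}$ guaranteed by the problem-statement hypotheses (so classical solutions exist and are unique on any compact time interval), this gives $\{\vecs \xi\}_t \in \mathcal{X}^e$ for every finite $t>0$.

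The main obstacle is the single point excluded from Lemma~\ref{lemma:velocity_rotation}, namely the saddle locus $\Delta \vect k(\vect c)=0$ on the boundary, where the rotation construction is not defined. The way I would close this gap is by reading off the speed formula \eqref{eq:magnitude_scaling} at such a point: at the boundary the term $(1-1/\Gamma)^2$ vanishes, and at the saddle the term $(\|\Delta \vect k(\vect c)\|/R^r)^2$ also vanishes, so $h(\vecs \xi)=0$ and therefore $\dot{\vecs \xi}=\vect 0$. Hence the excluded point is itself an equilibrium of the modified flow and cannot be crossed into the obstacle interior by any continuous trajectory in finite time; it can at worst be approached asymptotically along the boundary. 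Together with the Nagumo argument on the remaining boundary, this yields forward invariance of $\mathcal{X}^e$ and completes the proof. I would end by stitching the two halves together and noting that neither step required assumptions beyond those listed in Section~\ref{sec:problem_statement}.
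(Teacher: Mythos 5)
Your proposal is correct and follows essentially the same route as the paper: both rest on the boundary inequality $\dotprod{\vect n}{\dot{\vecs \xi}} \geq 0$ from Lemmas~\ref{lemma:pseudo_tangent} and~\ref{lemma:velocity_rotation}, both patch the excluded saddle locus by observing that $h(\vecs \xi)$ in \eqref{eq:magnitude_scaling} vanishes there, and both obtain equilibrium preservation from the decoupled magnitude scaling. The only difference is that you make the forward-invariance step explicit via a Nagumo/first-exit-time argument where the paper leaves it implicit, which is a welcome tightening rather than a different approach.
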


\begin{proof}
  From Lemma~\ref{lemma:pseudo_tangent}, we know 
  $
  \dotprod{\vect n(\vecs \xi)}{\vect e(\vecs \xi)} \geq 0
  $
  and Lemma~\ref{lemma:velocity_rotation} states that
  $
      \dotprod{\vect e}{\vect n} \geq 0
      \;\;
      \forall \vecs \xi: \Delta \vect k(\vect c) \neq 0
  $.
Additionally, in the latter case, the magnitude scaling from \eqref{eq:magnitude_scaling} evaluates as
\begin{equation}
	\lim_{\| \Delta \vect k(\vect c)\| \rightarrow 0, \Gamma(\vecs \xi) \rightarrow 1} h(\vecs \xi) = 0
\end{equation}
Hence, the speed reaches smoothly zero as we approach the saddle point, and its direction does not matter to fulfill smoothness and the boundary condition given in \eqref{eq:boundary_condition}.

Furthermore, far away from the obstacle, we have:
\begin{equation}
	\lim_{\Gamma(\vecs \xi) \rightarrow \infty} h(\vecs \xi) = 1
\end{equation}
Hence, the magnitude is equal to the original magnitude and only vanishes around existing stationary points, i.e., $\vect f(\vecs \xi) = 0$.
\end{proof}

The spurious stationary point on the surface is given by $\{ \vecs \xi : \vecs\xi \in \mathcal{X}^b, \, \|\vect k(\vect c)\| = 0 \}$. Furthermore, by construction, the pseudo tangent $\vect e(\vecs \xi)$ defined in \eqref{eq:tangent_calculation} points away from the reference direction and hence points away from the stationary point on the surface of the obstacle. Hence, it is a saddle point with a single trajectory converging to it $\mathcal{X}^s$. 

\subsection{Surface Repulsion} \label{sec:surface_repulsion}
n the vicinity of critical obstacles, it may be desirable to incorporate active repulsion from them without significantly altering their shape. This can be achieved by introducing a behavior similar to artificial potential fields \cite{rimon1992exact}, but without introducing spurious attractors in free space.
Building upon the developments presented in this section, we can further refine the tangential radius. By increasing the tangential radius such that $R^e \in ] \pi/2, \pi]$, values larger than $\pi / 2$, leads to a repulsive behavior while maintaining the avoidance properties (Fig.~\ref{fig:comparison_nonlinear_vectorfield}).

\begin{figure}[tbh]
    \centering
    \includegraphics[width=1.0\columnwidth]{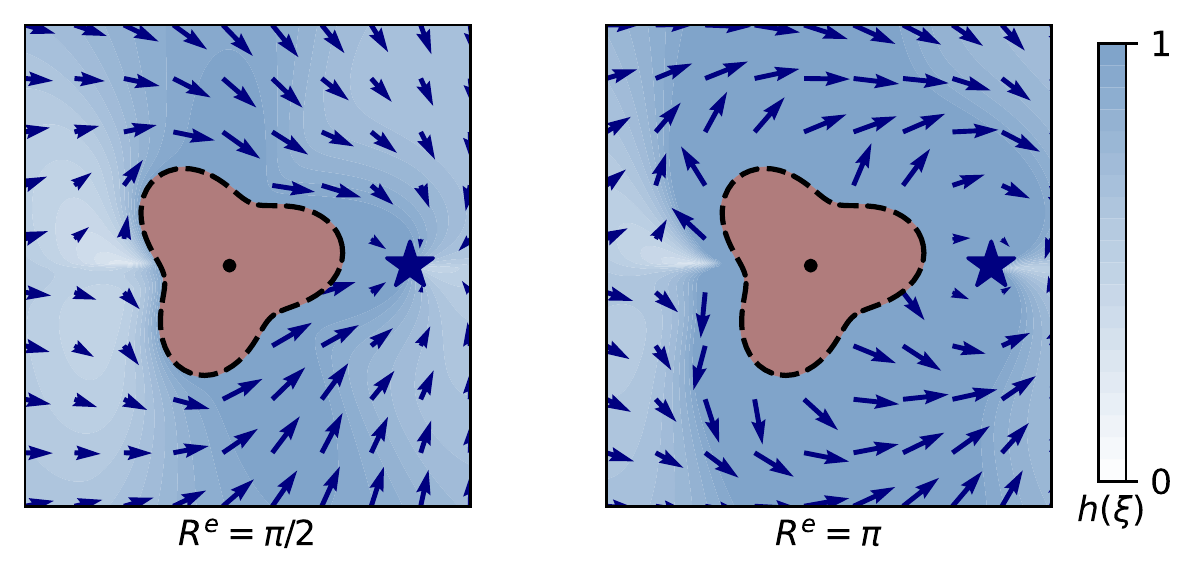}
    \caption{While a surface repulsion radius of  $R^e = \pi / 2$  ensures collision avoidance, an increased radius, i.e., $R^e \in ] \pi / 2, \pi]$ leads to increased repulsion around the obstacle. The initial dynamics are given by $\vect f(\vecs \xi) = (\vecs \xi - \vecs \xi^a)$.}
    \label{fig:comparison_nonlinear_vectorfield}
\end{figure}


\subsection{Inverted Obstacles} \label{sec:inverted_obstacles} 
Initial dynamics might be contained within a boundary, such as an agent moving in a room or a robot staying within its joint limits.
These constraints can be incorporated by inverting an obstacle and ensuring that the dynamics remain inside a boundary hull (Fig.~\ref{fig:inverted_nonlinear_avoidance}). 
Analogously to \cite{huber2022avoiding}, this is achieved by inverting the distance function $\Gamma(\vecs \xi)$, the reference direction $\vect r(\vecs \xi)$ and normal vector $\vecs n(\vecs \xi)$, defined in Sec.~\ref{sec:preliminairies}.
The distance function divides the space into free, boundary, and interior points, as described in \eqref{eq:levelFunc}. Consequently, the distance for the boundary obstacle can be defined as follows:
\begin{equation}
  \Gamma(\vecs \xi) = \left( R(\vecs \xi)/ \|\vecs \xi - \vecs \xi^r \| \right)^{2p} \quad \forall \vecs \xi \in  \mathbb{R}^N \setminus \vecs \xi^r \label{eq:inverse_gamma}
\end{equation}
with power weight $p \in \mathbb{R}_{>0}$, we choose $p=1$.

As the normal direction points away from the surface, it naturally points towards the interior of an inverted obstacle, in contrast to the normal direction of a regular obstacle. Consequently, in order to utilize the star-shaped constraint from \eqref{eq:pulling_weight} of $\dotprod{- \vect r(\vecs \xi)}{\vect n(\vecs \xi)} > 0$ for rotational obstacle avoidance, we need to flip the reference direction: 
\begin{equation}    
  \vect r(\vecs \xi)  = {\left( \vecs \xi^r - \vecs \xi \right)}/{\|\vecs \xi^r - \vecs \xi  \|}
  \qquad
  \forall \vecs \xi \in \mathbb{R}^N \setminus \vecs \xi^r 
	\label{eq:inverted_normal}
\end{equation}
Using the \textit{inverted} values, the ROAM can be applied on as described  throughout this section (Fig.~\ref{fig:inverted_nonlinear_avoidance}).

\begin{figure}[tbh]\centering
  \hfill
  \begin{subfigure}{.45\columnwidth}
    \centering
    \includegraphics[width=\textwidth]{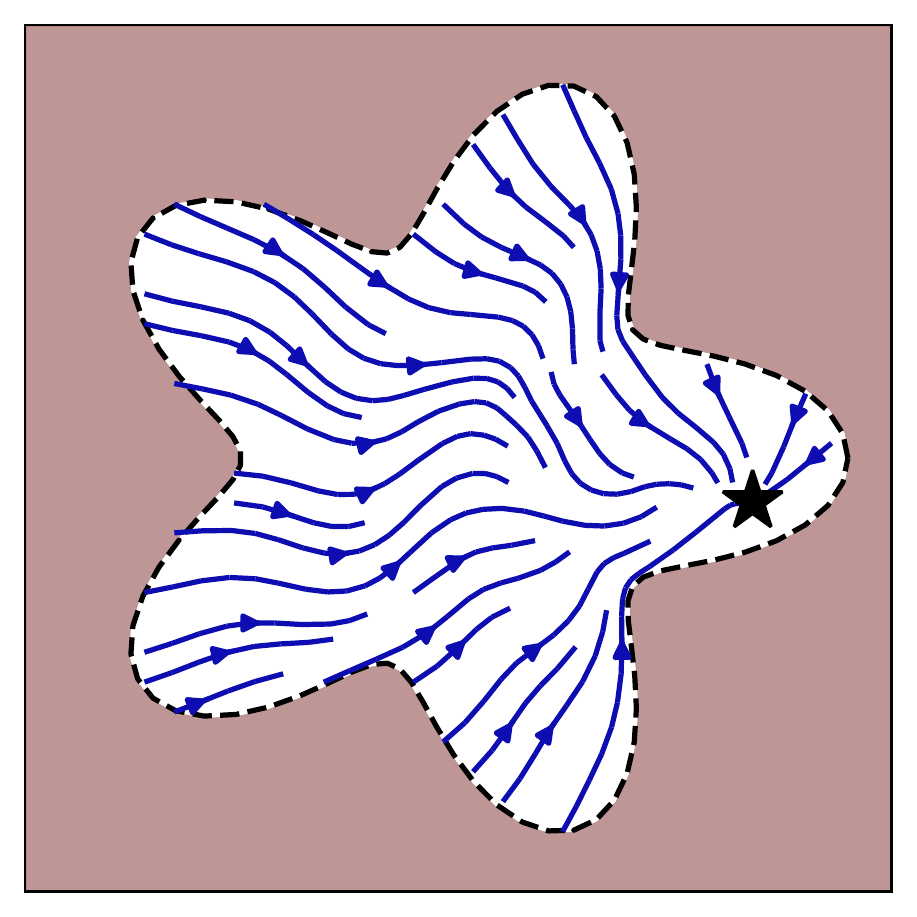}
    \caption{Wavy DS in starshape}
    \label{fig:starshaped_wall}
  \end{subfigure}\hfill%
  \begin{subfigure}{.45\columnwidth}
    \centering
\includegraphics[width=\textwidth]{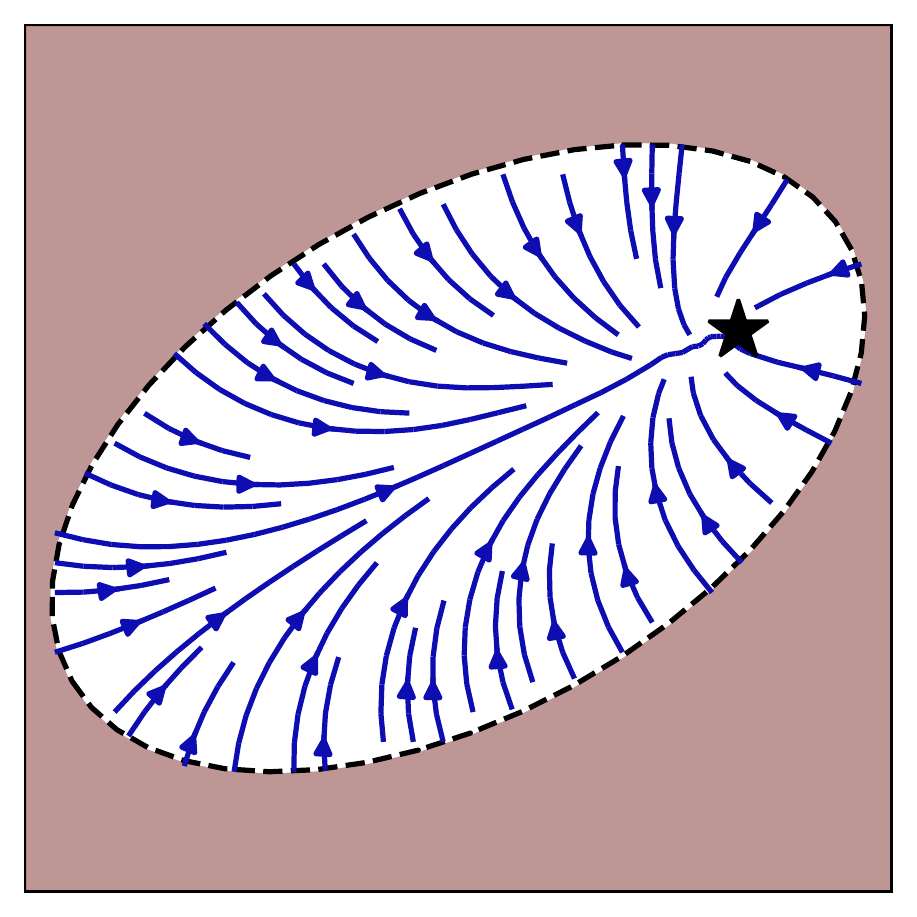}
    \caption{Linear DS in ellipse}
    \label{fig:ellipsoid_wall}
  \end{subfigure} \hfill%
  \caption{The inverted obstacle description is used to keep nonlinear dynamics $\vect f(\vecs \xi)$ within star-shaped boundaries, in (a) with $\vect f(\vecs \xi) = \matr R(\sin(\| \vecs\xi^a - \vecs\xi \|) (\vecs\xi^a - \vecs\xi)$ where $\matr R(\cdot)$ is a two dimensional rotation matrix and $\vecs \xi^a$ the attractor. Further, it can be used to create active repulsion from walls, as in (b) for globall straight dynamics with attractor $\vecs \xi^a$.}
  \label{fig:inverted_nonlinear_avoidance}
\end{figure}

\begin{lemma} \label{lemma:rotation_pulling}
 The rotated dynamics $\dot{\vect \xi}$  evaluated according to \eqref{eq:pulling_weight} are $C^1$-smooth and collision-free as shown in Theorem~\ref{theorem:collision_avoidance} when navigating within a boundary described as an inverted obstacle with distance function $\Gamma(\vecs \xi)$ as defined in \eqref{eq:inverse_gamma}, and reference vector $\vect r(\vecs \xi)$ as given in \eqref{eq:inverted_normal}.
\end{lemma}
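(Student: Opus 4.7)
The plan is to show that the three ingredients used in Theorem~\ref{theorem:collision_avoidance} --- the star-shape constraint $\dotprod{-\vect r(\vecs \xi)}{\vect n(\vecs \xi)} > 0$, the correct sign convention of the distance isolines, and the smoothness of the rotation pipeline --- all survive the inversion of $\Gamma$ and $\vect r$. Once those are in place, Lemma~\ref{lemma:pseudo_tangent}, Lemma~\ref{lemma:velocity_rotation} and Theorem~\ref{theorem:collision_avoidance} can be invoked verbatim.

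First, I would verify that the inverted distance function \eqref{eq:inverse_gamma} correctly induces the partition \eqref{eq:levelFunc} with $\mathcal{X}^e$ now being the bounded region enclosed by the hull. Since $R(\vecs \xi)$ is the radial distance from $\vecs \xi^r$ to the boundary along $\vect r(\vecs \xi)$, the inequality $\|\vecs \xi - \vecs \xi^r\| < R(\vecs \xi)$ characterizes points strictly inside the hull, which gives $\Gamma(\vecs \xi) > 1$; equality on the boundary gives $\Gamma = 1$; and points outside the hull give $\Gamma < 1$. Because $R(\vecs \xi)$ depends only on the direction $\vect r(\vecs \xi)$, $\Gamma$ is monotonically decreasing in $\|\vecs \xi - \vecs \xi^r\|$ along each ray, so the sign-convention used in \eqref{eq:pulling_weight} (rotation weight $\lambda$ blending to $1$ on the boundary and to $0$ far from it) is preserved.

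Second, I would check that the star-shape condition used throughout Lemma~\ref{lemma:pseudo_tangent} still holds. For a non-inverted star-shaped obstacle the outward normal satisfies $\dotprod{\vect r(\vecs \xi)}{\vect n(\vecs \xi)} > 0$. For the inverted hull the outward normal (i.e.\ the gradient of $\Gamma$ pointing into $\mathcal{X}^i$) points radially outward from $\vecs \xi^r$, while the flipped reference \eqref{eq:inverted_normal} now points toward $\vecs \xi^r$; hence $\dotprod{-\vect r(\vecs \xi)}{\vect n(\vecs \xi)} > 0$ is recovered, which is exactly the relation used in the proof of Lemma~\ref{lemma:pseudo_tangent} to place $\vect k(-\vect n, \vect r)$ strictly inside the tangent hyper-sphere $\mathcal{T}$.

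With these two observations, the pseudo-tangent construction \eqref{eq:tangent_calculation}, the rotation formula \eqref{eq:pulling_weight} and the magnitude scaling \eqref{eq:magnitude_scaling} all operate on well-defined, $C^1$-smooth quantities (since $\Gamma$ in \eqref{eq:inverse_gamma} is smooth away from $\vecs \xi^r$, and $\vecs \xi^r$ lies in the interior of the free space $\mathcal{X}^e$, hence outside the trajectory's relevant domain). Therefore Lemma~\ref{lemma:pseudo_tangent} gives $\dotprod{\vect n}{\vect e} \geq 0$ on $\mathcal{X}^b$, Lemma~\ref{lemma:velocity_rotation} propagates this to $\dot{\vecs \xi}$, and the limit arguments in Theorem~\ref{theorem:collision_avoidance} yield collision freeness together with the preservation of stationary points. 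The main obstacle I anticipate is purely bookkeeping: being careful that the topological inversion (what was ``outside'' is now ``inside'') does not flip the sign of the inequality \eqref{eq:boundary_condition}, which is precisely what the simultaneous flip of both $\Gamma$ and $\vect r$ guarantees.
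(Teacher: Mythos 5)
Your first two steps (checking that the inverted $\Gamma$ induces the right partition of space and that the flipped reference recovers $\dotprod{-\vect r}{\vect n} > 0$) are sound and in fact more explicit than what the paper writes; the paper simply asserts that the inverted quantities ``possess all the necessary properties'' of Theorem~\ref{theorem:collision_avoidance}. However, your third step contains a genuine error. You dispose of the singularity of $\Gamma(\vecs\xi)$ and $\vect r(\vecs\xi)$ at the reference point by claiming that $\vecs\xi^r$ ``lies in the interior of the free space $\mathcal{X}^e$, hence outside the trajectory's relevant domain.'' For an inverted obstacle this is precisely backwards: the free space \emph{is} the interior of the hull, so $\vecs\xi^r$ sits squarely in the region where the agent moves and can be approached or traversed by trajectories. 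The singularity is therefore not harmless by exclusion; it is in the middle of the operating domain and must be handled explicitly.

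The paper's proof devotes its substantive content to exactly this point: from \eqref{eq:inverse_gamma} one has $\lim_{\vecs\xi \to \vecs\xi^r} \Gamma(\vecs\xi) \to \infty$, hence by \eqref{eq:pulling_weight} the rotation weight satisfies $\lim_{\vecs\xi \to \vecs\xi^r} \lambda(\vecs\xi) = 0$, so $\dot{\vecs\xi} \to \vect f(\vecs\xi)$ and the undefined reference direction never enters the output; the rotation's influence vanishes continuously before the singular point is reached. You should replace your ``outside the relevant domain'' claim with this limit argument. The rest of your reasoning (invoking Lemma~\ref{lemma:pseudo_tangent}, Lemma~\ref{lemma:velocity_rotation} and Theorem~\ref{theorem:collision_avoidance} once the sign conventions are verified) then goes through as you describe.
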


\begin{proof}
    Since the distance function $\Gamma(\vecs \xi)$, average vector $\vect n(\vecs \xi)$, and reference direction $\vect r(\vecs \xi)$ possess all the necessary properties outlined in Theorem~\ref{theorem:collision_avoidance}, the collision avoidance properties directly carry over.

  The distance function $\Gamma(\vecs \xi)$ from \eqref{eq:inverse_gamma} and normal direction \eqref{eq:inverted_normal} are not defined at the reference point $\vecs \xi^r$. 
  However, at this point the rotation weight $\lambda (\vecs \xi)$ reaches zeros:
  \begin{equation}
    \lim_{\xi \rightarrow \xi^r} \Gamma(\vecs \xi) \rightarrow \infty
    \;\; \underset{\text{using} \; \eqref{eq:pulling_weight}}{\Rightarrow} \;\;
    \lim_{\xi \rightarrow \xi^r} \lambda(\vecs \xi) = 0
    \;\; \Rightarrow \;\;
    \lim_{\xi \rightarrow \xi^r} \dot{\vecs \xi} = \vect f(\vecs \xi)
  \end{equation}
  Thus, the rotation has no effect, and the system is smoothly defined.
\end{proof}
A more detailed analysis of inverted obstacles can be found in \cite{huber2022avoiding}.
Further development applies to both standard and inverted obstacles if not stated otherwise.

\section{General Nonlinear Motion} \label{sec:convergence_dynamics}
For systems characterized by small nonlinearities and a single attractor, the convergence dynamics $\vect c(\vecs \xi)$ exhibit global straightness, as defined in Definition~\ref{def:straight_dynamics}, resulting in desirable behavior. However, in systems with high nonlinearities, this can lead to avoidance patterns that do not accurately reflect the global dynamics, as illustrated by the example shown in Figure~\ref{fig:global_convergence_direction}.
To address this issue, this section introduces modified convergence dynamics that aim to maintain similarity with the original dynamics while being locally straight, as depicted in Figure~\ref{fig:local_convergence_direction}.

\begin{figure}[tbh]\centering
  \hfill
  \begin{subfigure}{.49\columnwidth}
    \centering
    \includegraphics[width=\textwidth]{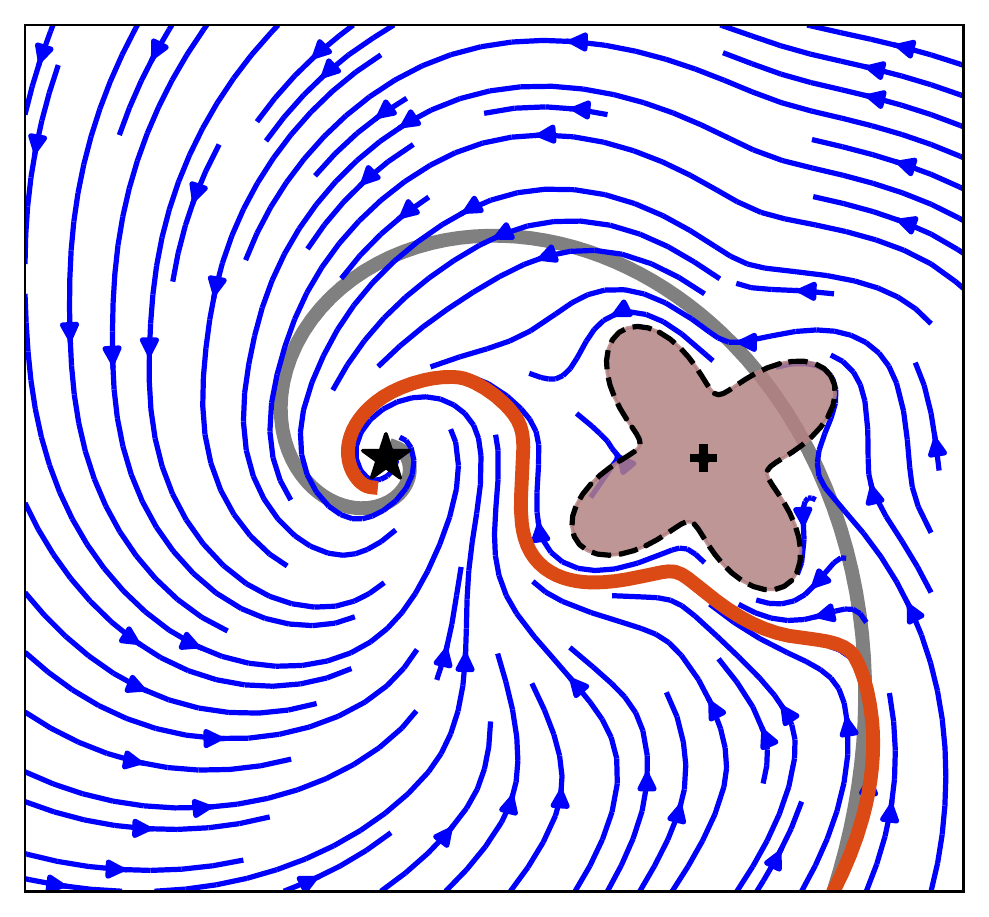}
    \caption{Globally straight $\vecs c(\vecs \xi)$}
    \label{fig:global_convergence_direction}
  \end{subfigure}\hfill%
  \begin{subfigure}{.49\columnwidth}
    \centering
    \includegraphics[width=\textwidth]{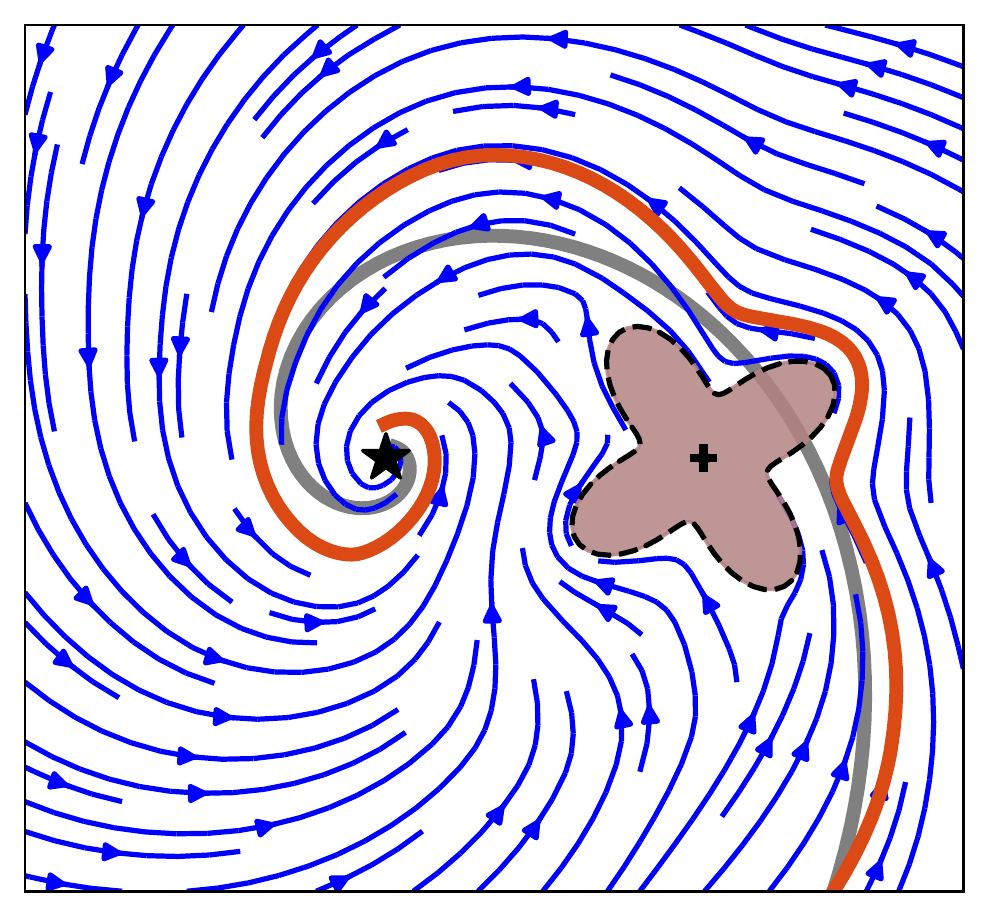}
    \caption{Locally straight $\vecs c(\vecs \xi)$}
    \label{fig:local_convergence_direction}
  \end{subfigure} \hfill%
  \caption{
    For motion with initial dynamics of $\vecs f(\vecs \xi) = \begin{bmatrix} -1 & 2 \\ -2 & -1 \end{bmatrix} \vecs \xi$ (gray) and a single stationary point (black star), employing a globally straight convergence dynamics $\vect c(\vecs \xi)$ results in a trajectory (orange) that deviates significantly from the original motion (a).
    In contrast, by utilizing locally straight convergence dynamics, the trajectory (b) maintains similarity to the original motion throughout its course.
  }
  \label{fig:local_vs_global_convergence_direction}
\end{figure}

\subsection{Nonlinear Motion without Stationary Point} \label{sec:nonlinear_dynamics}
Let us first focus on initial dynamics $\vect f(\vecs \xi)$ which do not have any directional singularity point, i.e., $\nexists \vecs \xi: \| \vect f(\vecs \xi) \| = 0, \nabla \vect f(\vecs \xi) \neq 0$.
The convergence dynamics $\vect c(\vecs \xi)$ of such initial dynamics $\vect f(\vecs \xi)$ is constructed by evaluating as a weighted sum of the initial velocity at the reference point of the obstacle $\vecs \xi^r$ and at position $\vecs \xi$:
\begin{equation}
	\vect c(\vecs \xi) = w^c(\vecs \xi) \vect f(\vecs \xi^r)  \hat{+} (1 - w^c(\vecs \xi)) \vect f(\vecs \xi) \label{eq:unbounded_convergence_dynamics}
\end{equation}
The convergence weight is chosen such that the convergence dynamics $\vect c(\vecs \xi)$ is straight on the surface of the obstacle:
\begin{equation}
  w^c(\vecs \xi) =
  \begin{cases}
    1 & \text{if} \;\; \Gamma(\vecs \xi) < 1 \\
    1 / \Gamma(\vecs \xi) & \text{otherwise}
  \end{cases}
  \label{eq:convergence_weight}
\end{equation}
where $\hat{+}$ describes the rotational summing described in Appendix~\ref{sec:perpendicular_rotation}. Note that the rotation summing from \eqref{eq:vector_basis} is not defined for two anti-collinear vectors. Hence, we cannot apply this summing at a directional singularity point, as will be further discussed in the next subsection.

\begin{lemma} \label{lemma:unbounded_dynamics}
  The convergence dynamics $\vect c(\vecs \xi) : \mathbb{R}^N \rightarrow \mathbb{R}^N$ as proposed in \eqref{eq:unbounded_convergence_dynamics} for initial dynamics without directional singularity points, i.e., $\left\{ \vecs \xi : \| \vect f(\vecs \xi) \| = 0 , \nabla \vect f(\vecs \xi) \neq 0 \right\} = \emptyset$, are straight according to Definition~\ref{def:straight_dynamics} on the surface and inside the obstacle, i.e, $\vect c(\vecs \xi) \in \mathcal{F}^s, \; \vecs \xi \in \mathcal{X}^b \cup \mathcal{X}^i$.
\end{lemma}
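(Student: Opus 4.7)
The plan is to observe that on the region $\mathcal{X}^b \cup \mathcal{X}^i$ the weight $w^c(\vecs \xi)$ from \eqref{eq:convergence_weight} is identically one, so the rotational sum \eqref{eq:unbounded_convergence_dynamics} collapses to a constant vector. First I would split into the two branches of the piecewise definition: in the interior $\mathcal{X}^i$ we have $\Gamma(\vecs \xi) < 1$, so the first branch applies directly and gives $w^c(\vecs \xi) = 1$; on the boundary $\mathcal{X}^b$ we have $\Gamma(\vecs \xi) = 1$, and the second branch $1/\Gamma(\vecs \xi)$ also evaluates to one. Hence the weight equals one throughout $\mathcal{X}^b \cup \mathcal{X}^i$.

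Consequently \eqref{eq:unbounded_convergence_dynamics} reduces to $\vect c(\vecs \xi) = 1 \cdot \vect f(\vecs \xi^r) \, \hat{+} \, 0 \cdot \vect f(\vecs \xi) = \vect f(\vecs \xi^r)$, using the convention of the weighted rotational sum from Appendix~\ref{sec:perpendicular_rotation}, in which a zero-weight term contributes nothing. This sidesteps the caveat stated immediately after \eqref{eq:convergence_weight} about $\hat{+}$ being undefined for anti-collinear arguments, since we never actually combine two nonzero contributions. The convergence field $\vect c$ is therefore spatially constant on the region of interest, taking the single value $\vect f(\vecs \xi^r)$.

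It then remains to invoke Definition~\ref{def:straight_dynamics}. With $\vect c(\vecs \xi) \equiv \vect f(\vecs \xi^r)$ constant, any trajectory of $\vect c$ starting at $\vecs \xi_0 \in \mathcal{X}^b \cup \mathcal{X}^i$ satisfies $\vecs \xi_t = \vecs \xi_0 + t\,\vect f(\vecs \xi^r)$ for as long as it remains in the region, so every velocity along the trajectory equals $\vect f(\vecs \xi^r)$ and the collinearity identity $\dot{\vecs \xi}_0^T \dot{\vecs \xi}_t = \|\dot{\vecs \xi}_0\|\,\|\dot{\vecs \xi}_t\|$ holds trivially. This establishes local straightness on $\mathcal{X}^b \cup \mathcal{X}^i$ in the sense of the definition. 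The one soft spot I would double-check is the degenerate case $\vect f(\vecs \xi^r) = \vect 0$: the no-directional-singularity hypothesis forces any such zero to have $\nabla \vect f(\vecs \xi^r) = 0$ as well, so it is a higher-order zero and the straightness condition reduces to the vacuous statement for a stationary flow. The substantive content of the lemma is thus the generic case where $\vect f(\vecs \xi^r)$ has a well-defined nonzero direction, which is handled cleanly by the constant-field argument above.
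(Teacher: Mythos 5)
Your proposal is correct and follows essentially the same route as the paper's own proof: show that $w^c(\vecs \xi) = 1$ throughout $\mathcal{X}^b \cup \mathcal{X}^i$, collapse the rotational sum \eqref{eq:unbounded_convergence_dynamics} to the constant vector $\vect f(\vecs \xi^r)$, and conclude straightness from Definition~\ref{def:straight_dynamics}. Your additional care with the two branches of \eqref{eq:convergence_weight} and the degenerate case $\vect f(\vecs \xi^r) = \vect 0$ only adds rigor to the paper's argument.
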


\begin{proof}
  Inside and on the surface of the obstacle, we have:
  \begin{equation}
    \vecs \xi \in \mathcal{X}^b \cup \mathcal{X}^i
    \quad \Rightarrow \quad
    \Gamma(\vecs \xi) \leq 1
    \quad \Rightarrow \quad
    w^c(\vecs \xi) = 1
  \end{equation}
 Hence the dynamics in this region are given as:
 \begin{equation}
   \vect c(\vecs \xi) =  1 \vect f(\vecs \xi^r) \hat{+} 0 \vect f(\vecs \xi) = \vect f(\vecs \xi^r)
   \qquad
   \vecs \xi \in \mathcal{X}^b \cup \mathcal{X}^i
 \end{equation}
 Thus, we have locally collinear dynamics.
\end{proof}
As the convergence dynamics $\vect c(\vecs \xi)$ are straight on the surface of the obstacle, they can be used in the rotational obstacle avoidance method defined in Section~\ref{sec:rotational_avoidance}, see Figure~\ref{fig:nonlinear_infinite_dynamics}.

\begin{figure}[tbh]\centering
  \begin{subfigure}{.9\columnwidth}
    \centering
    \includegraphics[width=\textwidth]{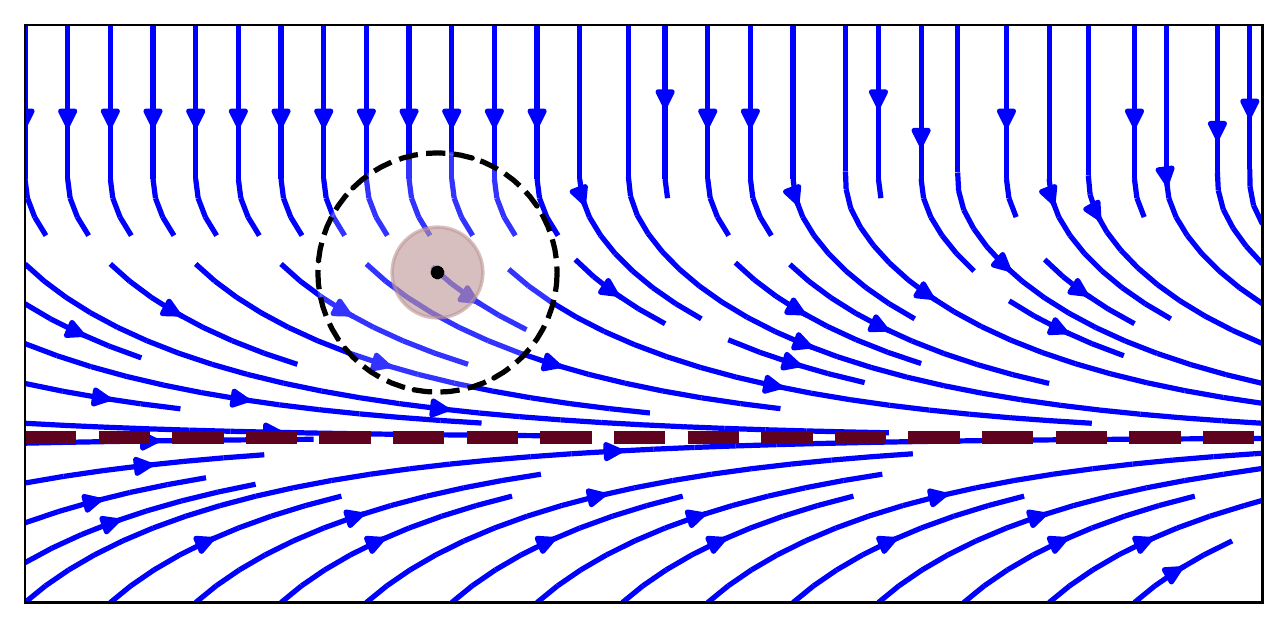}
    \caption{Nonlinear dynamics to follow the line at $y=0$ in red}
    \label{fig:nonlinear_infinite_dynamics_initial}
  \end{subfigure}
  \begin{subfigure}{.9\columnwidth}
    \centering
    \includegraphics[width=\textwidth]{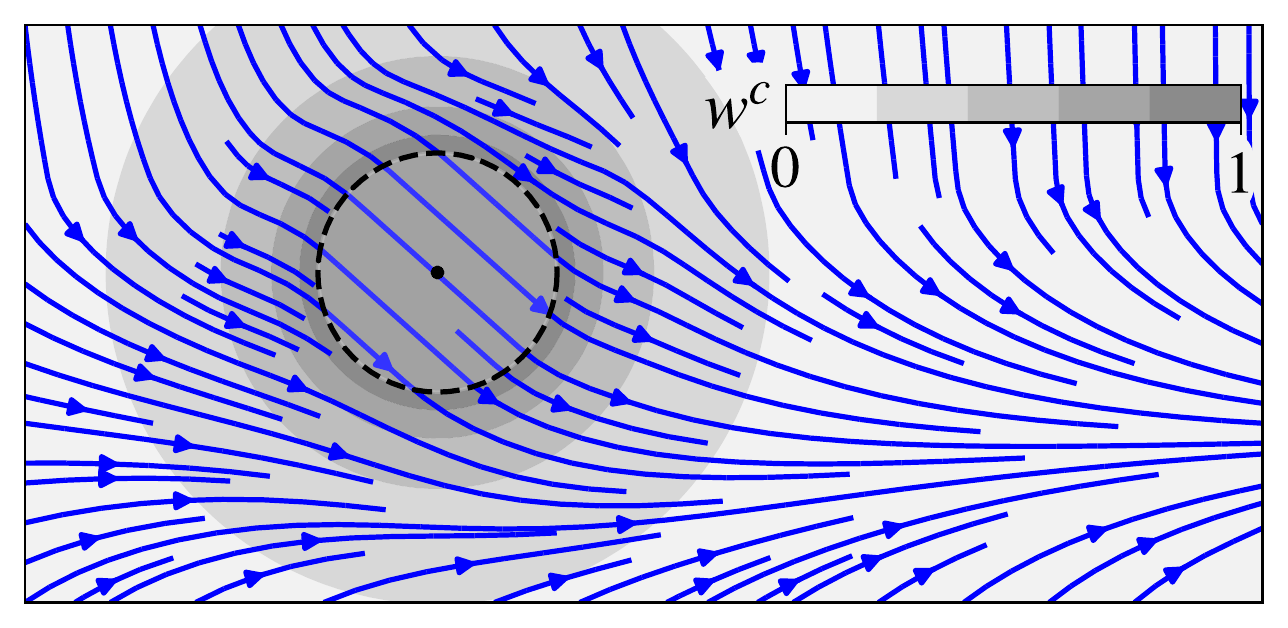}
    \caption{Convergence dynamics $\vect c(\vecs \xi)$ are locally straight in the gray subset}
    \label{fig:nonlinear_infinite_dynamics_convergence}
  \end{subfigure}
  \begin{subfigure}{.9\columnwidth}
    \centering
    \includegraphics[width=\textwidth]{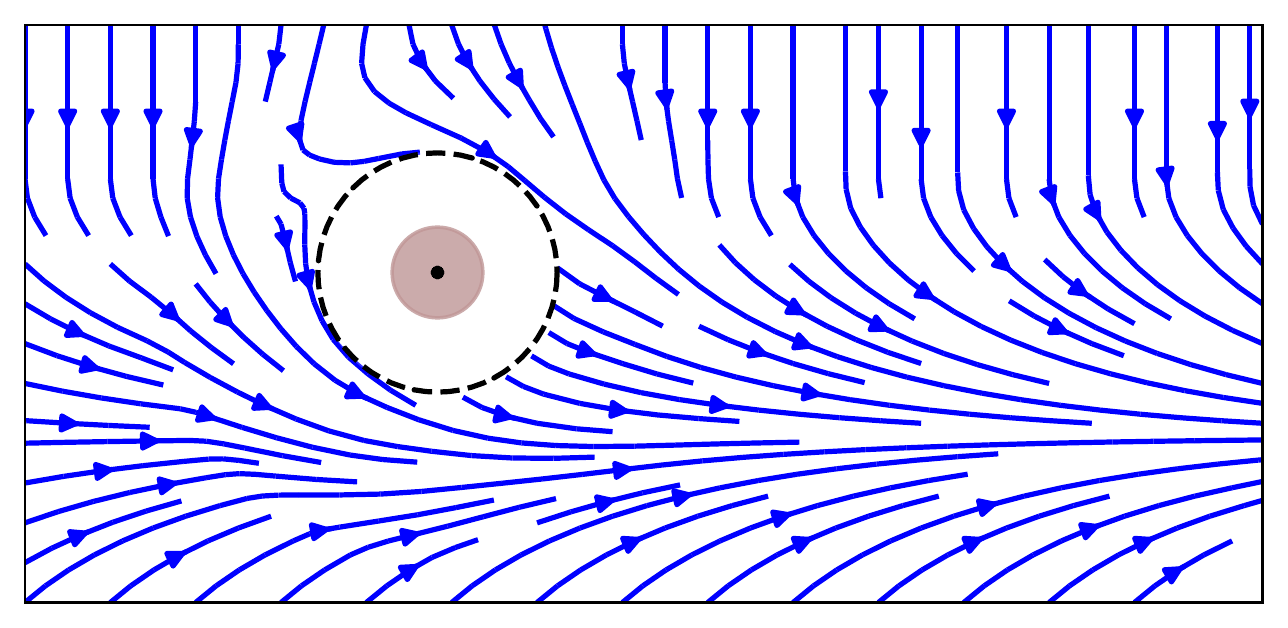}
    \caption{Rotated avoidance without local minima}
    \label{fig:nonlinear_infinite_dynamics_avoidance}
  \end{subfigure}
  \caption{The line following dynamics $\vect f(\vecs \xi) = [1 \;\; -\vecs\xi_2]^T$ shown in (a) uses locally straight convergence dynamics (b) to ensure the absence of local minima when avoiding the obstacle (c).}
  \label{fig:nonlinear_infinite_dynamics}
\end{figure}

\subsection{Nonlinear Motion in the Presence of a Stationary Point} 
Many initial dynamics can be characterized by a motion with a single directional singularity point, i.e., $\left\{ \vecs \xi: \| \vect f(\vecs \xi) \| = 0, \nabla \vect f(\vecs \xi) \neq 0  \right\} = \{ \vecs \xi^a \}$.
 This singularity point may correspond to a desired attractor point or arise from a limit cycle where an unstable stationary point resides at its center.
In the presence of such a singularity point, the directional summing approach employed in \eqref{eq:unbounded_convergence_dynamics} cannot be directly applied. This is because we cannot smoothly define local rotations around $\vecs \xi^a$. However, we can overcome this limitation by \textit{unfolding} the space using a sequence of mappings (Fig.~\ref{fig:convergence_direction}).

\subsubsection{Shrinking and Inverse Mapping}
The first step is to shrink the obstacles to a single point, i.e., all boundary points of the obstacle $\mathcal{X}^b$ are mapped to the reference point $\vecs{\xi}^r$. 

\begin{definition}[Shrinking Mapping] \label{def:shrinking_mapping}
The shrinking mapping $\vect m^s({\vecs \xi}) : \mathcal{X}^e \rightarrow \mathbb{R}^N \setminus \vecs \xi^r$ defined as 
\begin{equation}
  \vect m^s({\vecs \xi}) = {\vecs \xi}^r +  \vect r({\vecs \xi}) \left(\|{\vecs \xi} - {\vecs \xi}^r \| - 
  \| \vecs \xi^b - \vecs \xi^r\|
  \right)
  \;\; \forall \vecs \xi \in \mathcal{X}^e
	\label{eq:shrinking_mapping}
\end{equation}
is a bijection, and it maps the point on the obstacle's surface to the obstacle's reference direction, i.e., $\lim_{\Gamma(\vecs \xi) \rightarrow 1} \vect m^s({\vecs \xi}) \rightarrow {\vecs \xi}^r$.
\end{definition}

Analogously, we define the inverse of this mapping:
\begin{definition}[Inflating Mapping] \label{def:inflating_mapping}
The inflating mapping $\vect m^i({\vecs \xi}): \mathbb{R}^N \setminus {\vecs \xi}^r \rightarrow \mathcal{X}^e$ defined as:
\begin{equation}
	\vect m^i({\vecs \xi}) = {\vecs \xi}^r +  \vect r({\vecs \xi}) \left( \|{\vecs \xi} - {\vecs \xi}^r \| +
        \| \vecs \xi^b - \vecs \xi^r\|
    \right)
	\;\; 
	\forall {\vecs \xi} \neq {\vecs \xi}^r
	\label{eq:inflating_mapping}
\end{equation}
is a bijection and the inverse function of the shrinking mapping defined in \eqref{eq:shrinking_mapping}, i.e., $\vecs m^i \circ \vecs m^s(\vecs \xi) = \vecs \xi \;\; \forall \vecs \xi \in \mathcal{X}^e$
\end{definition}
The effect of shrinking and inflation mapping can be observed in Figure~\ref{fig:convergence_direction}.

\subsubsection{Folding Mapping}
Let us introduce a \textit{folding} mapping $\vect m^f(\vecs \xi)$, which moves the stationary point infinitely far away. Hence, the dynamics in the mapped space are without directional singularity point, which can be treated with the method introduced in Section~\ref{sec:nonlinear_dynamics}. 
Conversely, the surface of the obstacle should not be affected by the mapping.
The desired properties of this folding mapping are given as follows:
\begin{enumerate}
\item the stationary point gets mapped infinitely far away
  \begin{equation}
    \lim_{\vecs \xi \rightarrow \vecs \xi^a} \Gamma\left( \vect m^f({\vecs \xi}) \right) \rightarrow \infty
    \label{eq:opposite_region}
  \end{equation}
\item at the reference point (which represents the surface of the obstacle after the shrinking mapping), the effect of the mapping vanishes
  \begin{equation}
    \vecs \xi^r = \vect m^f({\vecs \xi^r}) \label{eq:stationary_reference}
  \end{equation}
 \end{enumerate}

Let us first define the unit directions in the coordinate system, which has its center at the stationary point and the first axis points towards the reference direction of the obstacle:
\begin{equation}
  \hat {\vecs \xi} = \matr{B}^T (\vecs \xi - \vecs \xi^a) / \| \vecs \xi - \vecs \xi^a \|
  \label{eq:unit_directions}
\end{equation}
where $\matr{B}$ is the orthonormal matrix of which the first row aligns with ${\vecs \xi}^r - {\vecs \xi}^a$

The desired constraints of the mapping from \eqref{eq:opposite_region} and \eqref{eq:stationary_reference} can be achieved by uniformly stretching along dimensions $ i \in [2 .. N]$ of $\hat{\vecs \xi}$ as:
\begin{equation}
  \begin{split}
  \vect s_{i}  = \left( 2 / (1 + p) - 1 \right)^g \frac{\hat{\vecs \xi}_i}{\|\hat{\vecs \xi}_{[2:N]} \|}
  \quad 
  p \in ] -1, \, 1] , \; \vecs \xi \neq \vecs \xi^a \\
  \text{with} \qquad
  p = \normdotprod{\vecs \xi - \vecs \xi^a}{\vecs \xi^r - \vecs \xi^a}
  \label{eq:folding_perpendicular}
  \end{split}
\end{equation}
where $g \in \mathbb{R}_{>0}$ is the power factor, we choose $g = 2$.
The above equation pushes points opposite the attractor relative to the obstacle infinitely far away, i.e., $\lim_{\vect{\vecs \xi}_1 \rightarrow -1} \vecs s_i \rightarrow \infty$, see the green line in Figure~\ref{fig:convergence_direction}.

Finally, the stretching of the \textit{folding} mapping along dimension $i = 1$ is constructed as follows:
\begin{equation}
  \vect s_1 = \| {\vecs \xi}^r - {\vecs \xi}^a \| \left(1 + \ln  \left( \frac{\|{\vecs \xi} - {\vecs \xi}^a \|}{\| {\vecs \xi}^r - {\vecs \xi}^a\| } \right) \right)
  \label{eq:folding_tangent}
\end{equation}

Using this, the folding mapping $\vect m^f({\vecs \xi}) : \{{\vecs \xi} \in \mathbb{R}^N : \hat{\vecs \xi}_1 \neq -1, \vecs \xi \neq \vecs \xi^a \} \rightarrow \mathbb{R}^N$ is defined as:
\begin{equation}
  \vect m^f({\vecs \xi}) = \matr{B} \, \text{diag}(\vect s) \matr{B}^T (\vecs \xi - \vecs \xi^a) + \vecs \xi^a
  \quad \forall {\vecs \xi} \in \mathbb{R} \setminus {\vecs \xi}^a
  \label{eq:folding_mapping}
\end{equation}

\begin{figure}[tbh]
	\centering
	\includegraphics[width=1.0\columnwidth]{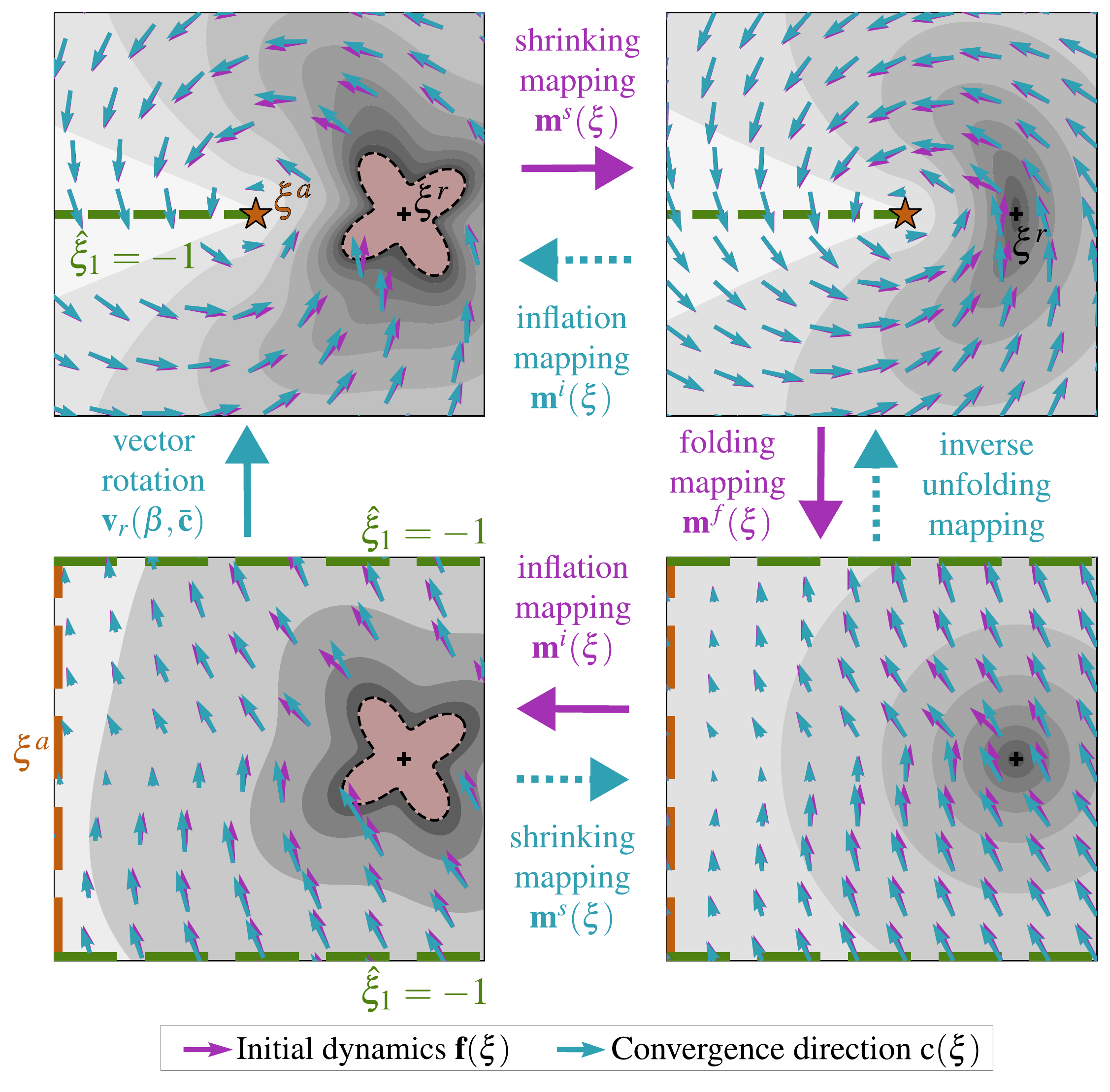}
	\caption{
     The initial dynamics $\vect f(\vecs \xi)$ undergo a series of three consecutive mappings, depicted in a clockwise manner starting from the top left.
      The convergence weight $w^c(\vecs \xi)$ is evaluated in the transformed space, where its value is represented by the gray shading. Notably, the weight diminishes to zero at the attractor point $\vecs \xi^a$.
      The directional summing operation from \eqref{eq:unbounded_convergence_dynamics} is performed to obtain the convergence dynamics $\vect c (\vecs \xi)$. This step is carried out in the mapped space.
        To obtain the reverse mapping, the vector rotation $\vect v_r(\cdot)$ is simply evaluated in the original space.
      The folding mapping $\vect m^f(\vecs \xi)$ maps the attractor infinitely far away in the $- \hat{\vecs \xi}_1$ direction, whereas the line with $\hat{\vecs \xi}_1 = -1$ is folded out to be infinitely far in the directions of $\pm \hat{\vecs \xi}_{[2 .. d]}$.
    }
	\label{fig:convergence_direction}
\end{figure}

\begin{lemma} \label{lem:folding_mapping}
  The mapping $\vect m^f(\vect \xi) : \vect m^f({\vecs \xi}) : \{{\vecs \xi} \in \mathbb{R}^N : \hat{\vecs \xi}_1 \neq -1, \vecs \xi \neq \vecs \xi^a \} \rightarrow \mathbb{R}^N$ as given in \eqref{eq:folding_mapping} is a bijection,
  and smoothly defined, i.e., $\lim_{{\vecs \xi} \rightarrow {\vecs \xi}_j} \vect m^f({\vecs \xi}_i) = \vect m^f({\vecs \xi}_j)$.
  Furthermore, the mapping has no effect at the reference point, i.e., $\vect m^f({\vecs \xi}^r) = {\vecs \xi}^r$, and the attractor is mapped infinitely far away,
  i.e., $\lim_{\vecs \xi \rightarrow \vecs \xi^a} \Gamma\left( \vect m^f({\vecs \xi}) \right) \rightarrow \infty$.
\end{lemma}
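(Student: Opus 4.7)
The plan is to work in the rotated coordinate system centered at $\vecs \xi^a$. Let $\vect \eta = \matr B^T (\vecs \xi - \vecs \xi^a)$, $r = \|\vect \eta\|$, and $\hat \eta = \vect \eta / r$, so that $\hat \eta_1 = p$. In these coordinates the mapping takes the form $\vect m^f(\vecs \xi) - \vecs \xi^a = \matr B \, \vect s(\vecs \xi)$, where the components $\vect s_1$ and $\vect s_{[2:N]}$ are the expressions in \eqref{eq:folding_tangent} and \eqref{eq:folding_perpendicular}. I would verify each of the four claims by reducing to properties of $\vect s$ in this frame.

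For the fixed point, at $\vecs \xi = \vecs \xi^r$ we have $p = 1$, so the factor $(2/(1+p) - 1)^g$ vanishes and $\vect s_{[2:N]} = 0$, while $r = \|\vecs \xi^r - \vecs \xi^a\|$ forces $\vect s_1 = \|\vecs \xi^r - \vecs \xi^a\|$; assembling yields $\vect m^f(\vecs \xi^r) = \vecs \xi^a + \|\vecs \xi^r - \vecs \xi^a\| \, \matr B[:,1] = \vecs \xi^r$ by the defining orientation of $\matr B$. For the attractor, $r \to 0$ drives $\ln(r / \|\vecs \xi^r - \vecs \xi^a\|) \to -\infty$, hence $\vect s_1 \to -\infty$ and $\|\vect m^f(\vecs \xi) - \vecs \xi^a\| \to \infty$; combined with the form of $\Gamma$ in \eqref{eq:distance_function}, which grows linearly at infinity, this gives $\Gamma(\vect m^f(\vecs \xi)) \to \infty$.

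For continuity, away from $\vecs \xi^r$ and the excluded locus ($\vecs \xi = \vecs \xi^a$ or $p = -1$), the entries of $\vect s$ are compositions of smooth functions ($\ln$, rational in $p$, and a unit vector in the perpendicular plane). The only subtle point is $\vecs \xi = \vecs \xi^r$, where $\hat \eta_{[2:N]} / \|\hat \eta_{[2:N]}\|$ is not defined. A squeeze argument handles this: the prefactor $(2/(1+p) - 1)^g$ tends to $0$ as $p \to 1$, while the unit vector remains bounded, so $\vect s_{[2:N]} \to 0$ regardless of the direction of approach, matching $\vect s_{[2:N]}(\vecs \xi^r) = 0$.

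The main obstacle is establishing bijectivity, which I would address by constructing an explicit inverse. Given a target $\vect \zeta \in \mathbb{R}^N$, set $\vect z = \matr B^T(\vect \zeta - \vecs \xi^a)$, and invert the equations $\vect s_1 = z_1$ and $\vect s_{[2:N]} = \vect z_{[2:N]}$ separately. The first is strictly monotone in $r$, giving the unique solution $r = \|\vecs \xi^r - \vecs \xi^a\| \exp(z_1 / \|\vecs \xi^r - \vecs \xi^a\| - 1) > 0$. For the second, taking norms yields $\|\vect z_{[2:N]}\| = (2/(1+p) - 1)^g$; since $p \mapsto ((1-p)/(1+p))^g$ is strictly decreasing on $(-1, 1]$ with range $[0, \infty)$, a unique $p \in (-1, 1]$ is recovered. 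When $\vect z_{[2:N]} \neq 0$, the direction of $\hat \eta_{[2:N]}$ is forced to equal $\vect z_{[2:N]} / \|\vect z_{[2:N]}\|$, and together with $\|\hat \eta_{[2:N]}\| = \sqrt{1 - p^2}$ this fixes $\hat \eta$ uniquely; when $\vect z_{[2:N]} = 0$ we obtain $p = 1$ and $\hat \eta = (1, 0, \ldots, 0)$. In either case $\vecs \xi = \vecs \xi^a + r \matr B \hat \eta$ lies in the stated domain (since $r > 0$ and $p > -1$), yielding both surjectivity and an explicit inverse, completing the proof.
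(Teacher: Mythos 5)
Your proof is correct, and it is both more explicit and more complete than the paper's own argument, which shares your setup (work in the $\matr B$-rotated frame centered at $\vecs \xi^a$, verify the fixed point by direct substitution of $\vect s_1 = \| \vecs \xi^r - \vecs \xi^a\|$ and $\vect s_{[2:N]} = \vect 0$) but then disposes of bijectivity and smoothness in one sentence, asserting that "all the underlying functions involved are smooth and bijective" so the composite is too. Your construction of the explicit inverse --- recovering $r$ from the strictly monotone logarithmic equation and $p$ from the strictly decreasing map $p \mapsto \left((1-p)/(1+p)\right)^g$ on $(-1,1]$, then reassembling $\hat{\vecs \eta}$ from the preserved perpendicular direction and $\|\hat{\vecs \eta}_{[2:N]}\| = \sqrt{1-p^2}$ --- is exactly the missing content: it shows simultaneously that the radial and perpendicular equations are each uniquely solvable \emph{and} that their ranges cover all of $\mathbb{R} \times \mathbb{R}^{N-1}$, which the paper's appeal to componentwise bijectivity does not by itself establish. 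You also patch two further gaps: the paper never addresses continuity on the ray $p = 1$ (where $\hat{\vecs \eta}_{[2:N]}/\|\hat{\vecs \eta}_{[2:N]}\|$ is undefined and your squeeze argument via the vanishing prefactor is needed), and it never actually proves the attractor claim $\lim_{\vecs \xi \to \vecs \xi^a} \Gamma(\vect m^f(\vecs \xi)) \to \infty$ --- its proof instead discusses the excluded locus $\hat{\vecs \xi}_1 = -1$ --- whereas you derive it directly from $\vect s_1 \to -\infty$ together with the growth of $\Gamma$. The one caveat worth recording is interpretive rather than mathematical: the displayed formula $\matr B \,\mathrm{diag}(\vect s)\, \matr B^T(\vecs \xi - \vecs \xi^a)$ read literally does not match either proof, and your reading $\vect m^f(\vecs \xi) = \vecs \xi^a + \matr B \vect s(\vecs \xi)$ is the one consistent with the paper's own fixed-point computation; it would be worth stating that convention explicitly at the top of the proof.
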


\begin{proof}
  The system is made up of the two transformations, one along the ${\vecs \xi}^r - {\vecs \xi}^a$ as given in \eqref{eq:folding_tangent}, and one perpendicular to, given in in \eqref{eq:folding_perpendicular}.
  Since all the underlying functions involved are smooth and bijective, the resulting transformation is also smooth and bijective.
  However, it is important to note that there are discontinuities at ${\vecs \xi} =  {\vecs \xi}^a$ and $\hat{\vecs \xi}_1=-1$. This points are excluded from the input set (in Eq.~\ref{eq:mapping_weight} we will additionally ensure that the corresponding weight goes to zero, for a smooth effect).
  
  Furthermore for an input value of ${\vecs \xi}^r$ in \eqref{eq:folding_tangent}, we get that $\vect s_1 = \| {\vecs \xi}^r - {\vecs \xi}^a\|$, and using \eqref{eq:folding_perpendicular} we get $\vect s_i = 0$ with $ i \in [2 .. d ]$. Hence, the transformation in \eqref{eq:folding_mapping} yields,
  \begin{equation}
    \begin{split}
    \vect m^f({\vecs \xi}^r) & = \matr{B} \; \text{diag}(\vect s) \matr{B}^T (\vecs \xi^r - \vecs \xi^a) + \vecs \xi^a  \\
    & = \matr{B} \; \text{diag}(\vect s) \begin{bmatrix} 1 & 0 & .. & 0\end{bmatrix}^T + \vecs \xi^a \\
    & = \matr{B} \; \begin{bmatrix} \| \vecs \xi^r - \vecs \xi^a \| & 0 & .. & 0\end{bmatrix}^T + \vecs \xi^a  \\
    &= \vecs \xi^r - \vecs \xi^a + \vecs \xi^a = \vecs \xi^r
    \end{split}
  \end{equation}
  For the region $\hat{\vecs \xi}_1 = -1$, we get for the stretching factors $s_i \rightarrow \infty$ for $d \in [2 .. d]$ from \eqref{eq:folding_perpendicular}.
\end{proof}

\subsubsection{Evaluation of the Relative Rotation}
The total mapping can be written as follows:
\begin{equation}
  \vect m(\vecs \xi) = \vect m^i \circ \vect m^f \circ \vect m^s (\vecs \xi)
  \label{eq:total_mapping}
\end{equation}

Since this mapping transforms the attractor infinitely far away, it produces a vector field without directional singularity, as Section~\ref{sec:nonlinear_dynamics} requires. Thereof, we can evaluate the convergence dynamics in the mapped space:
\begin{equation}
  \bar{\vect c}(\vecs \xi) =  w^m \circ \vect m(\vecs \xi) \vect f(\vecs \xi^r) \hat{+}  \bigl( 1 - w^m \circ \vect m(\vecs \xi) \bigr) \vect f({\vecs \xi})
  \label{eq:mapped_convergence_direction}
\end{equation}
the symbol $\hat +$ implies the directional summing as defined in Appendix~\ref{sec:perpendicular_rotation}. 

The mapping weight is defined as:
\begin{equation}
  w^m(\vecs \xi) = 1 / \sqrt{ \bigl(\Gamma(\vecs \xi) - 1\bigr) \bigl(\Gamma(m(\vecs \xi)) - 1\bigr) + 1}
  \label{eq:mapping_weight}
\end{equation}
This weight function is designed to ensure a decreasing influence as the distance from the obstacle increases, as well as a decreasing influence when the position is opposite to the singularity point relative to the obstacle (green line in Figure~\ref{fig:convergence_direction}). Additionally, it ensures that the weight approaches one when the position lies on the surface of the obstacle.

Finally, the mapping into the original space can be made through a vector rotation
\begin{equation}
\vect c(\vecs \xi) = \vect v_r \left(\beta, \bar{\vect c} \right)
\label{eq:attractor_convergence_direction}
\end{equation}
where the vector rotation $\vect v^r$ and angle $beta$ are obtained according to Eq.~\eqref{eq:vector_basis}, using input vector $\vect v_i = \vect m(\vecs \xi) - \vecs \xi^a$ and output vector $\vect v_o = \vecs \xi - \vecs \xi^a$.

\begin{theorem} \label{theorem:convergence_direction}
  The smoothly defined convergence dynamics $\vect c(\vecs \xi)$ as given in \eqref{eq:attractor_convergence_direction} for obstacles in a vector field with a directional singularity point $\vecs \xi^a$, i.e., $\left\{ \vecs \xi: \| \vect f(\vecs \xi) \| = 0, \nabla \vect f(\vecs \xi) \neq 0  \right\} = \{ \vecs \xi^a \}$ is ensured to be straight according to Definition~\ref{def:straight_dynamics} on the surface and inside the obstacle, i.e., $\vect c(\vecs \xi) \in \mathcal{F}^s, \; \vecs \xi \in \mathcal{X}^b \cup \mathcal{X}^i$.
\end{theorem}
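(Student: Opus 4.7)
My plan is to reduce the statement to the already-established Lemma~\ref{lemma:unbounded_dynamics} by observing that the sequence of mappings $\vect m = \vect m^i \circ \vect m^f \circ \vect m^s$ removes the directional singularity at $\vecs \xi^a$ in the transformed space. Once the singularity is gone, the convergence construction of Section~\ref{sec:nonlinear_dynamics} applies in the mapped coordinates, and it only remains to pull the result back.

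First, I would verify $C^1$-smoothness of $\vect c(\vecs \xi)$. Each constituent mapping is smooth and bijective on its natural domain by Definitions~\ref{def:shrinking_mapping} and~\ref{def:inflating_mapping} and by Lemma~\ref{lem:folding_mapping}, so their composition is smooth everywhere except at $\vecs \xi=\vecs \xi^a$ and on the antipodal locus $\hat{\vecs \xi}_1=-1$. At both of these loci, $\Gamma(\vect m(\vecs \xi))\to\infty$, so \eqref{eq:mapping_weight} gives $w^m(\vecs \xi)\to 0$ and the directional sum in \eqref{eq:mapped_convergence_direction} reduces smoothly to $\vect f(\vecs \xi)$, restoring regularity at these points. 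Combined with the smoothness of $\hat{+}$ from Appendix~\ref{sec:perpendicular_rotation} and of the final rotation $\vect v_r$, this establishes smoothness of $\vect c(\vecs \xi)$ on $\mathbb{R}^N$.

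Second, I would establish the straightness property on $\mathcal{X}^b\cup\mathcal{X}^i$. Whenever $\Gamma(\vecs \xi)\le 1$, evaluation of \eqref{eq:mapping_weight} (with the boundary convention used in \eqref{eq:convergence_weight}) yields $w^m(\vecs \xi)=1$, so $\bar{\vect c}(\vecs \xi)=\vect f(\vecs \xi^r)$ is a constant vector in the mapped space, exactly mirroring the situation of Lemma~\ref{lemma:unbounded_dynamics}. It then remains to check that the subsequent rotation $\vect v_r$ in \eqref{eq:attractor_convergence_direction} does not spoil straightness. Since $\vect m^s$ is a purely radial retraction that collapses every boundary point to $\vecs \xi^r$, and Lemma~\ref{lem:folding_mapping} ensures $\vect m^f(\vecs \xi^r)=\vecs \xi^r$, the vectors $\vect m(\vecs \xi)-\vecs \xi^a$ and $\vecs \xi-\vecs \xi^a$ coincide up to a positive scalar along $\mathcal{X}^b$, so $\beta=0$ and $\vect v_r$ acts as the identity there. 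Hence $\vect c(\vecs \xi)=\vect f(\vecs \xi^r)$ on the surface, satisfying Definition~\ref{def:straight_dynamics}.

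The main obstacle is precisely the last radial-collinearity claim: I need to verify carefully that $\vect m(\vecs \xi)-\vecs \xi^a$ is a positive scalar multiple of $\vecs \xi-\vecs \xi^a$ on the boundary, so that the residual rotation $\vect v_r$ truly degenerates. This requires bookkeeping of how $\vect m^f$ and $\vect m^i$ act along rays emanating from $\vecs \xi^r$---the natural coordinates of the shrinking and inflating mappings---and of how this interplays with the basis $\matr B$ anchored at $\vecs \xi^a$ in \eqref{eq:unit_directions}. I expect a direct computation in those adapted coordinates to close the argument, while the interior case $\vecs \xi\in\mathcal{X}^i$ is handled either by the same clamping used for $w^c$ in \eqref{eq:convergence_weight} or by continuous extension of $\vect m^s$ to $\mathcal{X}^i$.
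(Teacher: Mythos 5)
Your proposal follows essentially the same route as the paper's proof: reduce to Lemma~\ref{lemma:unbounded_dynamics} by noting that $\vect m^s$ collapses the boundary to $\vecs \xi^r$, $\vect m^f$ fixes $\vecs \xi^r$ (Lemma~\ref{lem:folding_mapping}), and $\vect m^i$ inverts $\vect m^s$, so the total mapping acts as the identity on $\mathcal{X}^b$ and the weight reduces $\bar{\vect c}$ to $\vect f(\vecs \xi^r)$ there. The radial-collinearity point you flag as the remaining obstacle in fact closes immediately from this identity property ($\vect m(\vecs \xi)=\vecs \xi$ on the boundary gives $\beta=0$ in \eqref{eq:vector_basis}), so your argument is, if anything, more explicit than the paper's about why the residual rotation $\vect v_r$ degenerates.
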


\begin{proof}
 Lemma~\ref{lem:folding_mapping} states that the unfolding has no effect at $\vecs \xi^r$ in the shrunk space, which is the surface of the obstacle in the original space as stated in Definition~\ref{def:shrinking_mapping}. 
 Further, the inflating mapping is the inverse of the shrinking as stated in Definition~\ref{def:inflating_mapping}. 
  Thus, when approaching the surface, i.e., $\Gamma(\vecs \xi) \rightarrow 1$, Lemma~\ref{lemma:unbounded_dynamics} transfers to Theorem~\ref{theorem:convergence_direction}.
\end{proof}

\subsubsection{Dynamical Systems with Multiple Attractors}
A dynamical system can exhibit multiple attractors \cite{luo2007dynamical}. However, if the system is continuous across space, there must exist a region known as the \textit{saddle boundary}. Points starting on the \textit{saddle boundary} do not converge to any specific attractor, and this boundary divides the space into regions that contain only a single attractor each. By interpreting the \textit{saddle boundary} as an inverted obstacle, as described in Section~\ref{sec:inverted_obstacles}, the method presented in this section can be applied to the local regions. This allows for effective obstacle avoidance in the presence of multiple attractors, ensuring that the system remains within specific regions corresponding to individual attractors.

\subsubsection{Obstacles Across the Attractor}
When an obstacle spans across the attractor $\vecs \xi^a$, i.e., $\Gamma({\vecs \xi}^a) \leq 1$, the unfolding mapping is not defined. Hence, the convergence dynamics $\vect c (\vecs \xi)$ must approach globally straight dynamics according to Definition~\ref{def:straight_dynamics}. To account for this, the initial dynamics are updated as follows:
\begin{equation}
\vect f(\vecs \xi^r) \gets w^\Gamma \text{sign}\left(\nabla \vect f(\vecs \xi) |_{\vecs \xi = \vecs \xi^a} \right)  ( \vect \xi^r - \vect \xi^a) +  (1 - w^\Gamma) \vect f(\vecs \xi^r)
\end{equation}
The weight is designed to reach one when the obstacle reaches the attractor, e.g., $w^\Gamma = 1 / \Gamma(\vecs \xi^a)$.

\section{Multi-Obstacle Environments} \label{sec:multi_obstacle_environment}
In the presence of multiple obstacles, the rotational obstacle avoidance method for a single obstacle can be extended using a weighted rotational summing.
First, the weighted convergence dynamics $\vect c(\vecs \xi)$ as introduced in \eqref{eq:unbounded_convergence_dynamics} is averaged as follows:
\begin{equation}
\vect c(\vecs \xi) = \vect f(\vecs \xi) \hat{+} {\sum_{o=1}^{N^{obs}}} w_o \vect c_o(\vecs \xi)
= \vect f(\vecs \xi) \hat{+} {\sum_{o=1}^{N^{obs}}} w_o w^c_o \vect f(\vecs \xi^r_o)
\label{eq:multi_obstacle_convergence}
\end{equation}
where $\hat{+}$ denotes the rotational summing as defined in Appendix~\ref{sec:perpendicular_rotation}.

The obstacle weights have been proposed in \cite{huber2022avoiding} as:
\begin{equation}
w_o(\vecs \xi) = \frac{\tilde w_o(\vecs \xi)}{\sum_{i=1}^{N^{\mathrm{obs}}} \tilde w_i(\vecs \xi)}
\;\; \text{with} \;\;
w_o(\vecs \xi) = \frac{1}{\Gamma_o(\vecs \xi)} \quad \forall \, \vecs \xi \in \mathcal{X}^e
\label{eq:multi_obstacle_weights}
\end{equation}
The weights associated with each obstacle ensure that their sum is at most one and converge to zero as the distance from the respective obstacle increases. Furthermore, on the surface of an obstacle $o$, the corresponding weight equals 1. Moreover, the weighting allows for overlapping of the influence regions of the obstacles.

These convergence dynamics are used to evaluate the preferred pseudo tangent for each obstacle $\vect e_o(\xi)$ as defined in \eqref{eq:tangent_calculation}. Finally, the rotation of the initial dynamics from \eqref{eq:pulling_weight} can be restated for multi-obstacle scenarios as:
\begin{equation}
    \dot{\vecs \xi} = \vect f(\vecs \xi) \hat{+} {\sum_{o=1}^{N^{obs}}} w_o \dot{\vecs \xi}_o
= \vect f(\vecs \xi) \hat{+} {\sum_{o=1}^{N^{obs}}} \lambda_o w_o m(\vecs \xi) \vect e_o(\vecs \xi)
\label{eq:multi_obstacle_dynamics}
\end{equation}

The method is summarized in Algorithm~\ref{alg:rotational_avoidance} and handles star-shaped obstacles and boundaries, as shown in Figure~\ref{fig:multi_obstacle_avoidance}. 

\begin{figure}[tbh]
    \centering
    \includegraphics[width=0.8\columnwidth]{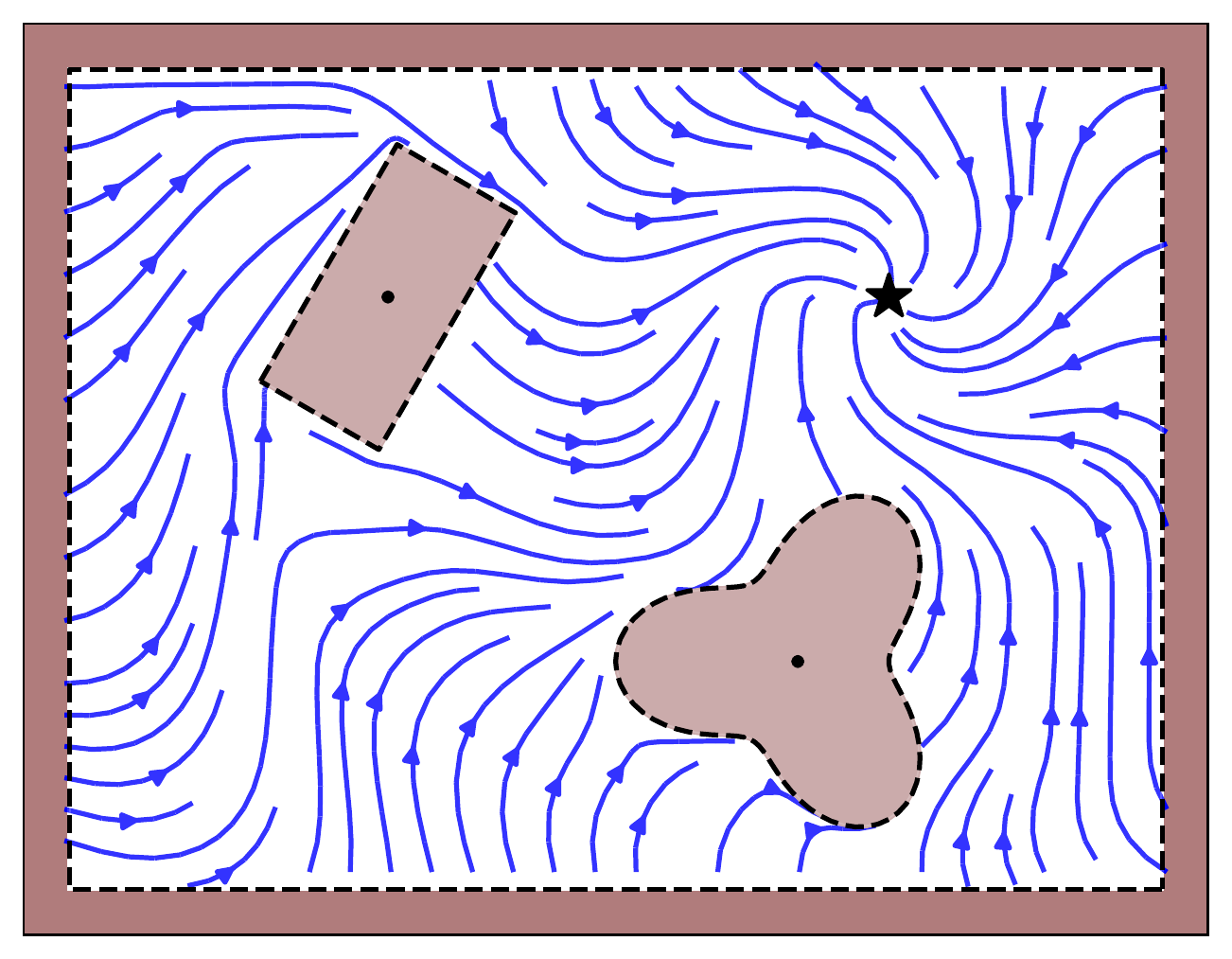}
    \caption{The rotational obstacle avoidance with respect to wavy dynamics as used in Figure~\ref{fig:starshaped_wall} moves towards the attractor (black star) while avoiding collisions inside a rectangular hull with two obstacles.}
    \label{fig:multi_obstacle_avoidance}
\end{figure}

\begin{lemma} \label{lemma:multi_obstacle_weighting}
The weighted rotation of the convergence dynamics $\vect c(\vecs \xi)$ from \eqref{eq:multi_obstacle_convergence} and the final dynamics $\dot{\vecs \xi}$ from \eqref{eq:multi_obstacle_dynamics} conserves impenetrability and saddle-point properties introduced in Theorem~\ref{theorem:collision_avoidance}.
\end{lemma}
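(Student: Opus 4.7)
The plan is to reduce the multi-obstacle case to the single-obstacle guarantees already established in Theorem~\ref{theorem:collision_avoidance} by exploiting the two key properties of the weighting scheme in \eqref{eq:multi_obstacle_weights}: on the surface of obstacle $o$ we have $w_o(\vecs\xi) \to 1$ while $w_i(\vecs\xi) \to 0$ for $i \neq o$, and everywhere $\sum_o w_o(\vecs\xi) \leq 1$ with $w_o$ vanishing far from obstacle $o$. Together with the fact that $\lambda_o(\vecs\xi) \to 1$ as $\Gamma_o(\vecs\xi) \to 1$ (from \eqref{eq:pulling_weight}), these properties will let the surface-adjacent obstacle dominate the weighted rotational sum.

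First I would establish impenetrability. Fix a point $\vecs\xi \in \mathcal{X}^b_{o_0}$ on the boundary of some obstacle $o_0$ that does not lie on the boundary of any other obstacle. By the weight properties, $w_{o_0}(\vecs\xi) = 1$, $\lambda_{o_0}(\vecs\xi) = 1$, and $w_i(\vecs\xi) = 0$ for $i \neq o_0$, so the sum in \eqref{eq:multi_obstacle_dynamics} collapses to $\lambda_{o_0} w_{o_0} m(\vecs\xi) \vect e_{o_0}(\vecs\xi)$. The rotational sum $\hat{+}$ defined in Appendix~\ref{sec:perpendicular_rotation} then rotates $\vect f(\vecs\xi)$ onto (a scalar multiple of) $\vect e_{o_0}(\vecs\xi)$, and applying Lemma~\ref{lemma:pseudo_tangent} and Lemma~\ref{lemma:velocity_rotation} for obstacle $o_0$ gives $\dotprod{\vect n_{o_0}(\vecs\xi)}{\dot{\vecs\xi}} \geq 0$, i.e., the boundary condition \eqref{eq:boundary_condition}. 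In regions where the influence of multiple obstacles overlaps but $\vecs\xi$ lies only on $\partial\mathcal{X}_{o_0}$, the contributions of the other $\vect e_i(\vecs\xi)$ enter with weight strictly less than the $\vect e_{o_0}$ contribution and, by Lemma~\ref{lemma:pseudo_tangent}, each individually lies in the outward half-space of $o_0$ (or is irrelevant because $\Gamma_i(\vecs\xi)>1$ means its rotation weight is small); a convex-combination-in-direction-space argument then preserves the inequality.

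Second, for the saddle-point behavior I would re-use the magnitude scaling. The per-obstacle contribution $\dot{\vecs\xi}_o$ inherits the factor $h_o(\vecs\xi)$ from \eqref{eq:magnitude_scaling}, which vanishes precisely at the unique surface point where $\vect c_o(\vecs\xi)$ is anti-aligned with $\vect n_o(\vecs\xi)$. Since the rotational summation $\hat{+}$ of zero-magnitude rotations acts as the identity on $\vect f(\vecs\xi)$, no new stationary points can be introduced in the interior; and on each obstacle's surface the single saddle of Theorem~\ref{theorem:collision_avoidance} is preserved because the other obstacles' weights are strictly bounded away from one there (assuming surfaces do not coincide, which is ensured by the overlapping-but-distinct-boundaries assumption).

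The main obstacle I anticipate is rigorously handling the overlap region, where two or more $w_o$ are simultaneously positive and the direction-space combination $\hat{+}$ is nonlinear. The subtlety is showing that the combined rotation cannot ``overshoot'' past a tangent plane --- this requires invoking the property of $\hat{+}$ that the summed rotation angle with respect to any reference direction is bounded by the weighted sum of individual rotation angles, so that each boundary constraint $\dotprod{\vect n_o}{\dot{\vecs\xi}} \geq 0$ survives the aggregation for the obstacle whose surface is being approached, while contributions from distant obstacles decay through their $w_i/\Gamma_i$ factors.
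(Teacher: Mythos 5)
Your proposal takes essentially the same route as the paper: the paper's entire proof is the observation that $\lim_{\Gamma_o(\vecs\xi)\to 1} w_o(\vecs\xi)=1$, so the weighted sums in \eqref{eq:multi_obstacle_convergence} and \eqref{eq:multi_obstacle_dynamics} collapse to the single-obstacle case of Theorem~\ref{theorem:collision_avoidance} near each surface. Your treatment is in fact more careful than the paper's --- the overlap-region subtlety you flag at the end (the nonlinearity of $\hat{+}$ and the fact that, with $\tilde w_i = 1/\Gamma_i$, the competing weights do not actually vanish on $\mathcal{X}^b_o$ unless the other obstacles' distance functions are saturated as in \eqref{eq:infinity_distance}) is a genuine gap that the paper's three-sentence proof does not address at all, so you should not expect to find its resolution there.
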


\begin{proof}
When approaching an obstacle $o$, the weight defined in \eqref{eq:multi_obstacle_weights} results in $\lim_{\Gamma_o(\vecs \xi) \rightarrow 1} w_o(\vecs \xi) = 1$. Hence the evaluation of the multi-obstacle avoidance converges to the single obstacle scenario of obstacle $o$. It follows that the properties of the single obstacle case are conserved locally.
\end{proof}

\begin{algorithm}[ht]
\caption{Rotational Obstacle Avoidance Method (ROAM)} \label{alg:rotational_avoidance}
\begin{algorithmic}[1]
  \renewcommand{\algorithmicrequire}{\textbf{Input:}}
  \renewcommand{\algorithmicensure}{\textbf{Output:}}
  \REQUIRE $\vect f ({\vecs \xi})$, $N^{\mathrm{obs}}$ obstacles
  \ENSURE $\dot {\vecs \xi}$
  \FOR{$o = 1$ \textbf{to} $N^{\mathrm{obs}}$}
    \STATE $\tilde w_o(\vecs \xi) =  1 / \Gamma_o(\vecs \xi)$ \COMMENT{Evaluate weights \eqref{eq:multi_obstacle_weights}}
  \ENDFOR
  \STATE $ w_o(\vecs \xi) = \tilde w_b(\vecs \xi) / \sum_{i=1}^{N^{\mathrm{obs}}} \tilde w_i(\vecs \xi)$ \COMMENT{Normalize weights \eqref{eq:multi_obstacle_weights}}
  \FOR{$o = 1$ \textbf{to} $N^{\mathrm{obs}}$ \textbf{if} $w_o(\vecs \xi) > 0$}
      \STATE $\vect f(\vecs \xi^r_o)$  \COMMENT{Compute DS at reference point}
  \ENDFOR
  \STATE $\vect c(\vecs \xi) = \vect f(\vecs \xi) \hat{+} {\sum_{o=1}^{N^{obs}}} w_o w^c_o \vect f(\vecs \xi^r_o)$  \COMMENT{Rotational average \ref{eq:multi_obstacle_convergence}}
  \FOR{$o = 1$ \textbf{to} $N^{\mathrm{obs}}$ \textbf{if} $w_o(\vecs \xi) > 0$}
    \STATE ${\vect e_o}(\vecs \xi)$ \COMMENT{Compute pseudo tangent \eqref{eq:tangent_calculation}}
    \STATE $h(\vecs \xi)$ \COMMENT{Compute magnitude \eqref{eq:magnitude_scaling}}
    \ENDFOR
 \STATE $\dot{\vecs \xi} = \vect f(\vecs \xi) \hat{+} {\sum_{o=1}^{N^{obs}}} \lambda_o w_o m(\vecs \xi) \vect e_o(\vecs \xi)$  \COMMENT{Rot. average \eqref{eq:multi_obstacle_dynamics}}
\end{algorithmic}
\end{algorithm}

\subsection{Dynamic Environments}v
The closed-form description of the algorithm enables short computation time without the need for offline trajectory planning. Dynamic obstacles can be considered by transposing the avoided direction into the moving reference frame:
\begin{equation}
\dot{\vecs \xi} = \matr M(\vecs \xi) \vect f(\vecs \xi)  + \dot{\tilde{\vecs \xi}} 
\quad \text{with} \quad
\dot{\tilde{\vecs \xi}} = \sum_{o = 1}^{N^{\mathrm{obs}}} w_o \dot{\tilde{\vecs \xi}}_o
\end{equation}
The method's application to dynamics obstacles is experimentally evaluated in Section~\ref{sec:robot_implementation}.

\subsection{Motion within Multiple Enclosing Hulls}
The outer boundary  might not be star-shaped, for example, when a robot moves through a curvy corridor.
However, there such a general space can be divided into multiple stars-shapes \cite{lindemann2009simple}, which can be used by ROAM to guide a motion to stay within the boundary. 
This requires convergence dynamics, which transition between the obstacles as:
\begin{equation}
\vect c(\vecs \xi) = \vect f(\vecs \xi) \hat{+} {\sum_{b=1}^{N^{bnd}}} w_b(\vecs \xi) \vect c_b(\vecs \xi)
\label{eq:multi_boundary_convergence}
\end{equation}
where $N^{bnd} \in \mathbb{N}_{>0}$ is the number of boundaries.
Furthermore, the weights of the multi-boundary environment are set to:
\begin{equation}
w_b(\vecs \xi) = \frac{\max\left(\Gamma_b(\vecs \xi), 1) \right) - 1}{\sum_{i=1}^{N^{bnd}} \max\left(\Gamma_i(\vecs \xi), 1\right) - 1}
\label{eq:multi_boundary_weight}
\end{equation}
Where the local dynamics  $\vecs c_b(\vect \xi)$ are locally straight according to Definition~\ref{def:straight_dynamics} in the subdomain of the surface of each boundary. The  attractor of each hull $\vecs \xi^a_b$ is placed such that it lies outside of the corresponding boundary $b$ for all boundaries that do not contain the global attractor $\vecs \xi^a$, i.e.,
\begin{equation}
\begin{cases}
\vecs \xi^a_b = \vecs \xi^a & \text{if} \;\; \Gamma_b(\vecs \xi^a) > 1 \\
\vecs \xi^a_b \in \mathcal{X}^i_b \cup \mathcal{X}^b_b  & \text{otherwise}
\end{cases}
\end{equation}
The boundary encapsulation can be seen in Figure~\ref{fig:multihull_dynamics}.

\begin{figure}[tbh]\centering
  \begin{subfigure}{.9\columnwidth}
   \centering
   \includegraphics[width=\textwidth]{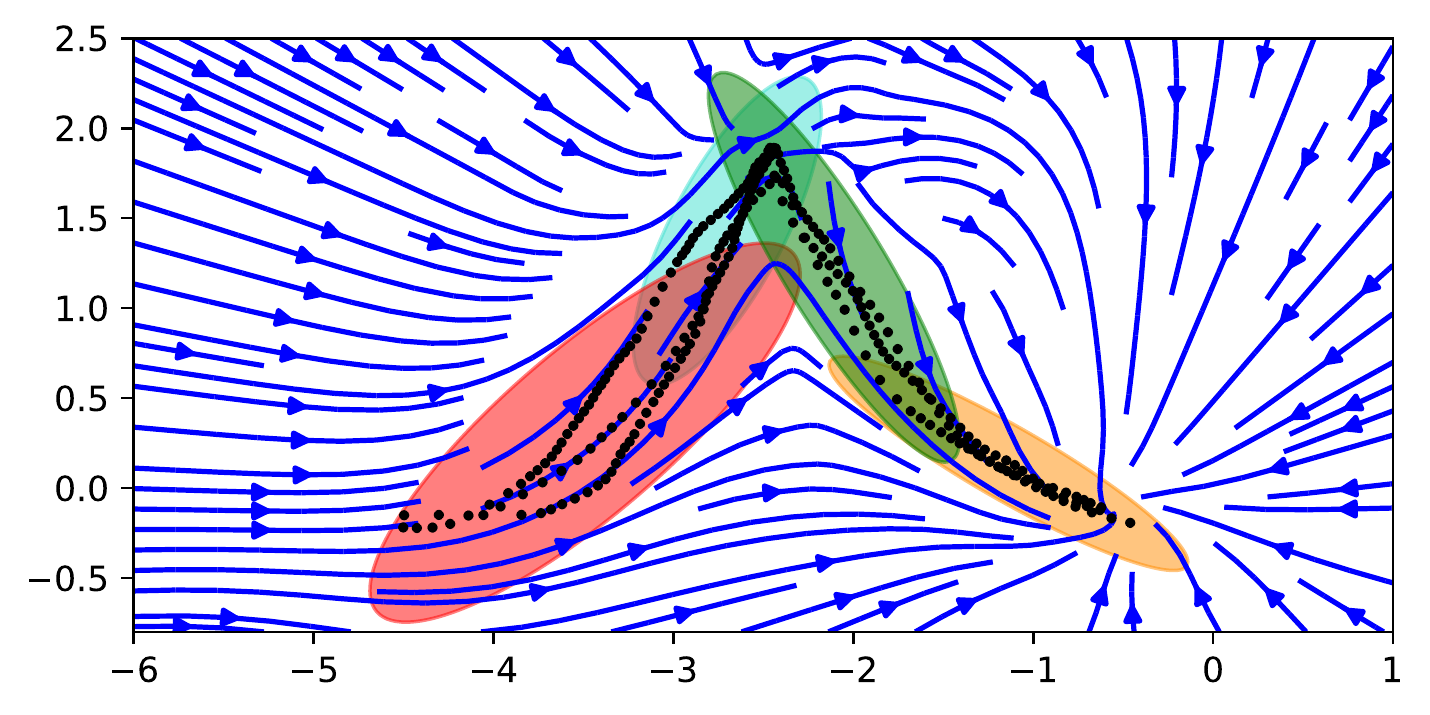}
   \caption{The data (position and velocity) are fit using Gaussian Mixture Model (GMM) with four components \cite{scikit-learn}, which is used to predict the velocity.}
   \label{fig:multihull_avoidance_2D_Ashape_learned}
  \end{subfigure}
  \begin{subfigure}{.9\columnwidth}
    \centering
  \includegraphics[width=\textwidth]{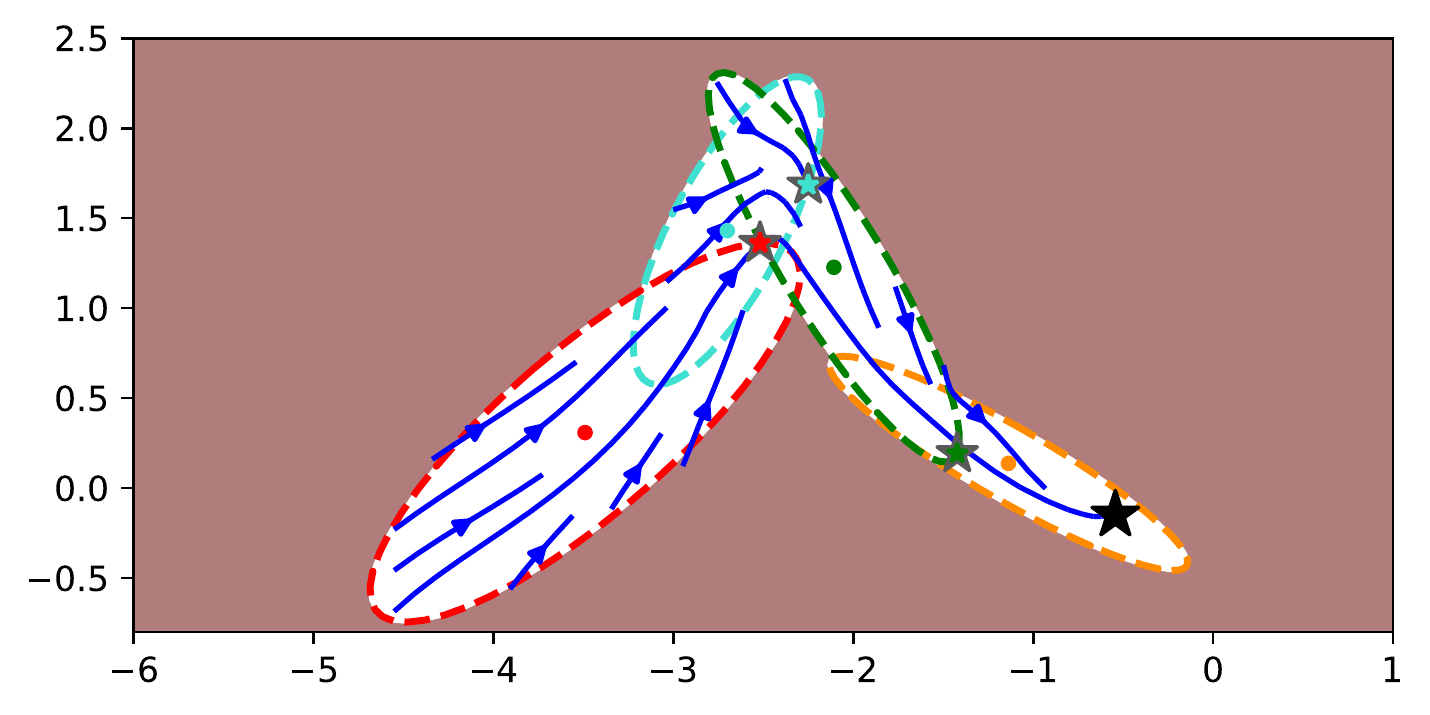}
  \caption{The GMM components are used as a multi-hull environment to bind the dynamics and enforce leaving the boundary in the direction of the local attractors $\vecs \xi^a_b$ (colored stars).}
  \label{fig:multihull_avoidance_2D_Ashape_enclosed}
  \end{subfigure}
\caption{
  Approaches for learning nonlinear motion from demonstration can guarantee stability by ensuring that the system asymptotically converges towards the attractor $\vecs \xi^a$ \cite{perrin2016fast}. However, they do not prohibit the dynamics of taking an undesired shortcut or moving away from the data, which is the known region (a).
  Convex hulls can be obtained through trajectory learning methods, by interpreting the Gaussian Mixture Model applied to the data as ellipse-shaped obstacles \cite{figueroa2018physically}. ROAM can enforce the final dynamics to stay close to the data (b).}
  \label{fig:multihull_dynamics}
\end{figure}

\begin{lemma}
  A motion starting within multiple boundaries $\{ \vect \xi \}_0 \in \bigcup_{b \in N^{bnd}} \mathcal{X}^e_b$ and evolving according to $\dot {\vecs \xi}$ as described in \eqref{eq:multi_obstacle_dynamics} and with convergence dynamics $\vect c(\vecs \xi)$ defined in \eqref{eq:multi_boundary_convergence} will stay within the boundaries, i.e, $\{ \vect \xi \}_t \in \bigcup_{b \in N^{bnd}} \mathcal{X}^e_b \;\; \forall \; t \in \mathbb{N}_{>0}$.
\end{lemma}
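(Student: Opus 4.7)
The plan is to prove forward invariance of $\bigcup_{b} \mathcal{X}^e_b$ via a tangent-cone (Nagumo-type) argument: on the topological boundary of this set, I need to show that $\dot{\vecs \xi}$ points weakly into the set. Because the interior of the union is open, any trajectory leaving the union must first cross this topological boundary, so a pointwise inward condition there is sufficient for the claim.

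First I would characterize the boundary. A point $\vecs \xi^* \in \partial(\bigcup_b \mathcal{X}^e_b)$ must lie on $\mathcal{X}^b_{b^*}$ for at least one index $b^*$ (so $\Gamma_{b^*}(\vecs \xi^*) = 1$) and must satisfy $\Gamma_b(\vecs \xi^*) \leq 1$ for every remaining $b$; otherwise $\vecs \xi^*$ would be strictly inside some hull and hence an interior point of the union. Next I would examine the weights $w_b$ from \eqref{eq:multi_boundary_weight} as $\vecs \xi \to \vecs \xi^*$ from within the union. By continuity, every hull $b$ with $\Gamma_b(\vecs \xi^*) < 1$ satisfies $\Gamma_b(\vecs \xi) < 1$ on a neighborhood, so $\max(\Gamma_b, 1) - 1 \equiv 0$ and it contributes nothing to numerator or denominator. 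Only the \textit{active} hulls, those with $\Gamma_b(\vecs \xi^*) = 1$, contribute in the limit, and the limiting weights form a convex combination supported on the active indices.

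Having localized the dynamics, I would invoke the per-hull convergence dynamics $\vec c_b(\vecs \xi)$: by the construction just above \eqref{eq:multi_boundary_weight} each $\vec c_b$ is locally straight on $\mathcal{X}^b_b$ and aimed at a local attractor $\vecs \xi^a_b$ placed inside $\mathcal{X}^i_b \cup \mathcal{X}^b_b$, so on the surface of hull $b$ it is directed inward. Weighted rotational summation preserves this property on each active surface because a convex combination of vectors lying in a common half-space stays in that half-space under the mapping $\bar{\vect k}(\cdot)$ of Section~\ref{sec:rotated_vector_field}. Thus the multi-boundary rotation reduces at $\vecs \xi^*$ to the single-hull rotation governed by Lemma~\ref{lemma:rotation_pulling}, which yields
\begin{equation}
\dotprod{\vect n_{b^*}(\vecs \xi^*)}{\dot{\vecs \xi}} \geq 0
\end{equation}
for each active boundary, i.e., exactly the inward-pointing condition required by Nagumo.

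The hard part will be the corner case in which $\vecs \xi^*$ lies on the surfaces of several hulls at once, so that \eqref{eq:multi_boundary_weight} becomes a $0/0$ expression. I would handle this via directional one-sided limits: for any admissible direction of approach from within the union, the denominator is dominated by the active $\Gamma_b - 1$ terms, yielding a well-defined convex combination of the associated $\vec c_b$'s. Since each active $\vec c_b$ satisfies the boundary condition for its own hull and the rotational summing maintains membership in the intersection of the respective tangent half-spaces, the inward condition holds simultaneously for every active surface, closing the invariance argument.
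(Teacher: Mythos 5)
Your proposal follows essentially the same route as the paper's proof: the weights in \eqref{eq:multi_boundary_weight} vanish for any hull whose surface is approached while the state is still strictly inside another hull, and they concentrate entirely on hull $b$ once the state is inside only that hull, at which point the single-boundary result (Lemma~\ref{lemma:rotation_pulling}, via Theorem~\ref{theorem:collision_avoidance}) supplies impenetrability of that last surface. Your Nagumo framing, the characterization of $\partial\bigl(\bigcup_b \mathcal{X}^e_b\bigr)$, and the observation that inactive hulls drop out of numerator and denominator by continuity are a cleaner packaging of exactly the two limit computations the paper performs.

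The one place where you go beyond the paper is the corner case with several simultaneously active surfaces, and there your key justification does not hold. The set $\{\vecs v : \dotprod{\vect n}{\vecs v} \geq 0\}$ is a closed hemisphere of the unit sphere, and its image under $\vect k(\vect b, \cdot)$ is not in general convex in $\mathbb{R}^{N-1}$: already for $N=2$, if the half-space contains $-\vect b$, two admissible directions at angles $\pm 3\pi/4$ from $\vect b$ average in the direction space to $\vect b$ itself, which may point into the obstacle. So the rotational sum $\hat{+}$ of vectors each satisfying its own boundary inequality need not satisfy the intersection of those inequalities; this is precisely the concave-corner failure mode, and the claim that "a convex combination of vectors lying in a common half-space stays in that half-space under $\bar{\vect k}(\cdot)$" would need to be replaced by an actual argument (or the corner set excluded by assumption). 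A secondary slip: for hulls not containing $\vecs \xi^a$ the local attractor is placed in $\mathcal{X}^i_b \cup \mathcal{X}^b_b$, which for an inverted obstacle lies \emph{outside} the hull, so $\vect c_b$ is not "directed inward" on the surface --- impenetrability in ROAM comes from the rotation onto the pseudo-tangent $\vect e$, not from the orientation of the convergence dynamics. To be fair, the paper's own proof silently assumes the single-active-hull situation and never treats the corner either, so your main argument is on par with the published one; only the added half-space-preservation step is wrong as stated.
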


\begin{proof}
If $\vecs \xi$ is within multiple boundaries, and it approaches boundary $b$, the weight given in in \eqref{eq:multi_boundary_weight} evaluates to:
\begin{equation}
    \lim_{\Gamma_b(\xi) \rightarrow 1}  w_b(\vecs \xi) = \frac{ 0 }{0  + \sum_{i\neq b}^{N^{bnd}} \max \left(\Gamma_b(\vecs \xi), 1) \right) - 1} = 0
\end{equation}
Hence, boundary $b$ has no effect, and the motion can traverse any boundary $b = 1 .. N^{\mathrm{bnd}}$.
Until, $\vecs \xi$ is within only one boundary $b$. In this case, the corresponding boundary weights simplify to:
\begin{equation}
w_b(\vecs \xi) = \frac{\max \left(\Gamma_b(\vecs \xi), 1) \right) - 1}{\max\left(\Gamma_b(\vecs \xi), 1) \right) - 1 + \sum_{i\neq b}^{N^{bnd}} 0} = 1
\end{equation}
given that $\Gamma_b(\vecs \xi) > 1$. Thus, the algorithm evolves according to the single-boundary case, i.e., collision avoidance with the boundary $b$ is ensured.
\end{proof}


\section{General Concave Obstacles}\label{sec:concave_obstacles}
A general obstacle can be described as a union of multiple star-shaped obstacles \cite{rimon1991construction}, also referred to as trees of stars. 
Extending the algorithm to such shapes enables navigating in many more environments.
Let us for this introduce a general obstacle using nomenclature from graph theory \cite{knuth1997art}:

\begin{definition}[Tree of Obstacles]
    A tree of star-shaped obstacles represents a shape without holes. Each obstacle (node) in the tree can have multiple children (successors), but have exactly one single parent (predecessor), except for the root obstacle which does not have a parent. 
    All obstacles have a non-zero intersection with their parent and children. Obstacles are assigned a level $l$ in the tree, starting from $l=0$ at the root.
\end{definition}


\subsection{Velocity Propagation through Obstacle Tree}
The avoidance velocity $\dot{\vecs \xi}$ in the presence of trees-of-obstacles is obtained through the summed average of a rotation-tree as described in Appendix~\ref{sec:perpendicular_rotation}. The tree is constructed as described in Algorithm~\ref{alg:tree_of_star_avoidance}, and the individual steps are detailed below.

\begin{algorithm}[ht]
\caption{Avoidance of Tree-of-Obstacles} \label{alg:tree_of_star_avoidance}
\begin{algorithmic}[1]
  \renewcommand{\algorithmicrequire}{\textbf{Input:}}
  \renewcommand{\algorithmicensure}{\textbf{Output:}}
  \REQUIRE $\vect f (\cdot)$, $N^{\mathrm{com}}$ obstacle-components
  \ENSURE $\dot {\vecs \xi}$
  \STATE $ \vect t^r(\cdot)$ \COMMENT{Create rotation tree, see Algo.~\ref{alg:rotation_summing_tree}}
  \FOR{$o = 1 \; \textbf{to} \; N^{\mathrm{com}}$}
  \STATE $\vect s_{0} = \vect r(\vecs \xi)$ \COMMENT{Set initial surface point}
  \STATE $w^h_o \gets \Gamma_o(\vect s_{0}) $ \COMMENT{Compute hiding-weight \eqref{eq:hiding_weight}}
  \STATE $w_o \gets \Gamma_o(\vecs \xi)$ \COMMENT{Compute obstacle weights}
  \STATE $c = o$ \COMMENT{Initialize $c$ to obstacle $o$}
  \FOR [From node to root] {$l = l(n) \; \textbf{to} \; 0 $}
  \STATE $\vect s_{p(c)} = b ( \vecs \xi^r_c - \vect s_c) + \vect s_c$ \COMMENT{Propagate reference  \eqref{eq:surface_point_propagation}}
  \STATE $c = p(c)$ \COMMENT{Set iterator $c$ to parent $p$}
  \ENDFOR
  \STATE $\vect f_0 = \vect f(\vecs \xi^r)$ \COMMENT{Set initial tangent}
  \FOR [From root to node] {$c = 1 \; \textbf{to} \; l(n)$}
  \STATE $\vecs v^{r}(\cdot), \beta \gets (\vecs s_{c} - \vecs \xi^r_c), (\vecs s_{p(c)} - \vecs \xi^r_{p(c)})$ \COMMENT{Get rot. \eqref{eq:vector_basis}}
  \STATE $\vect f_{c} = \vecs v^{r}(\vect f_{p(c)}, \beta)$ \COMMENT{Propagate rotated velocity \eqref{eq:vector_rotation_application}}
  \STATE $\vecs f_{c} \Rightarrow \vecs t^{r}(\cdot)$ \COMMENT{Append to tree}
  \ENDFOR
  \STATE $\dot{\vecs \xi}_o \gets \vect f_o$ \COMMENT{Avoidance with propagated velocity \eqref{eq:tangent_evaluation_for_tree}}
  \ENDFOR
  \STATE $w_f = 1 - \sum_o w^h_o w_o$  \COMMENT{Weight of initial velocity $\vecs f(\vecs \xi)$}
  \STATE $\dot{\vecs \xi} \gets \vecs t^r(w_f, w^h_o(\vecs \xi))$ \COMMENT{Evaluate rotation tree, Algo.~\ref{alg:rotation_summing_tree}}
\end{algorithmic}
\end{algorithm}

\subsubsection{Surface Point Propagation} \label{sec:surface_propagation}
The rotational avoidance of each obstacle $o = 1 .. N^{\mathrm{com}}$ of the tree is obtained at the corresponding surface point $\vect s_o$.
The surface points are obtained by propagating the position $\vecs \xi$ through the obstacle tree, starting from the an obstacle $o$ down to the root $r$:
\begin{equation}
  \vect s_{p} = b ( \vecs \xi^r_c - \vect s_c) + \vect s_c
  \quad \text{such that} \quad \Gamma_p(\vect s_{p}) = 1 \label{eq:surface_point_propagation}
\end{equation}
where $p$ is the parent of the component $c$.
The factor $b \in \mathbb{R}_{>0}$ is evaluated such that $\vect s_{p}$ lies on the parent's surface.
Note, that the obstacles are intersecting, hence we have $\Gamma(\vect s_{p}) < 1$. See the surface points of a three-component tree in Figure~\ref{fig:triple_ellipses_obstacle}.

\subsubsection{Velocity Propagation} \label{sec:velocity_propagation}
We can now propagate the velocity $\vect f(\vecs \xi^r)$ iteratively from parent $p(c)$ to the component $c$ until we reach the respective obstacle $o$. The iteration starts at the root $r$ and is done for each obstacle, except the root.
The propagated velocity $\vect f_c$ is obtained as follows:
\begin{equation}
  \vect f_{c} = \vecs v^{r}(\vect f_{p(c)}, \phi_c) 
  \qquad
  o = 1.. N^{\mathrm{obs}} \setminus r
  \label{eq:velocity_propagation}
\end{equation}
where $\vect v^r(\cdot)$ is the vector rotation as described in \eqref{eq:vector_rotation_application}.
The vector rotation is obtained with respect to the vectors $\vect v_i = (\vecs s_{c} - \vecs \xi^r_c)$ and $\vect v_o = (\vecs s_{p(c)} - \vecs \xi^r_{p(c)})$ as described in \eqref{eq:vector_basis}.
When constructing the obstacle tree, the parent of an obstacle needs to be chosen such that $\vect v_i$ and $\vect v_o$ are not anti-collinear. This is ensured if the direction from the component to the parent is never opposing the direction from the parent to the \textit{grand-parent} (parent of the parent): 
\begin{equation}
 \normdotprod{\vecs \xi^r_{p(c)}- \vecs \xi^r_c}{\vecs \xi^r_{p(p(c))}- \vecs \xi^r_{p(c)}} \neq - 1 \label{eq:parent_opposing}
\end{equation}

\begin{figure}[tbh]\centering
\includegraphics[width=0.9\columnwidth]{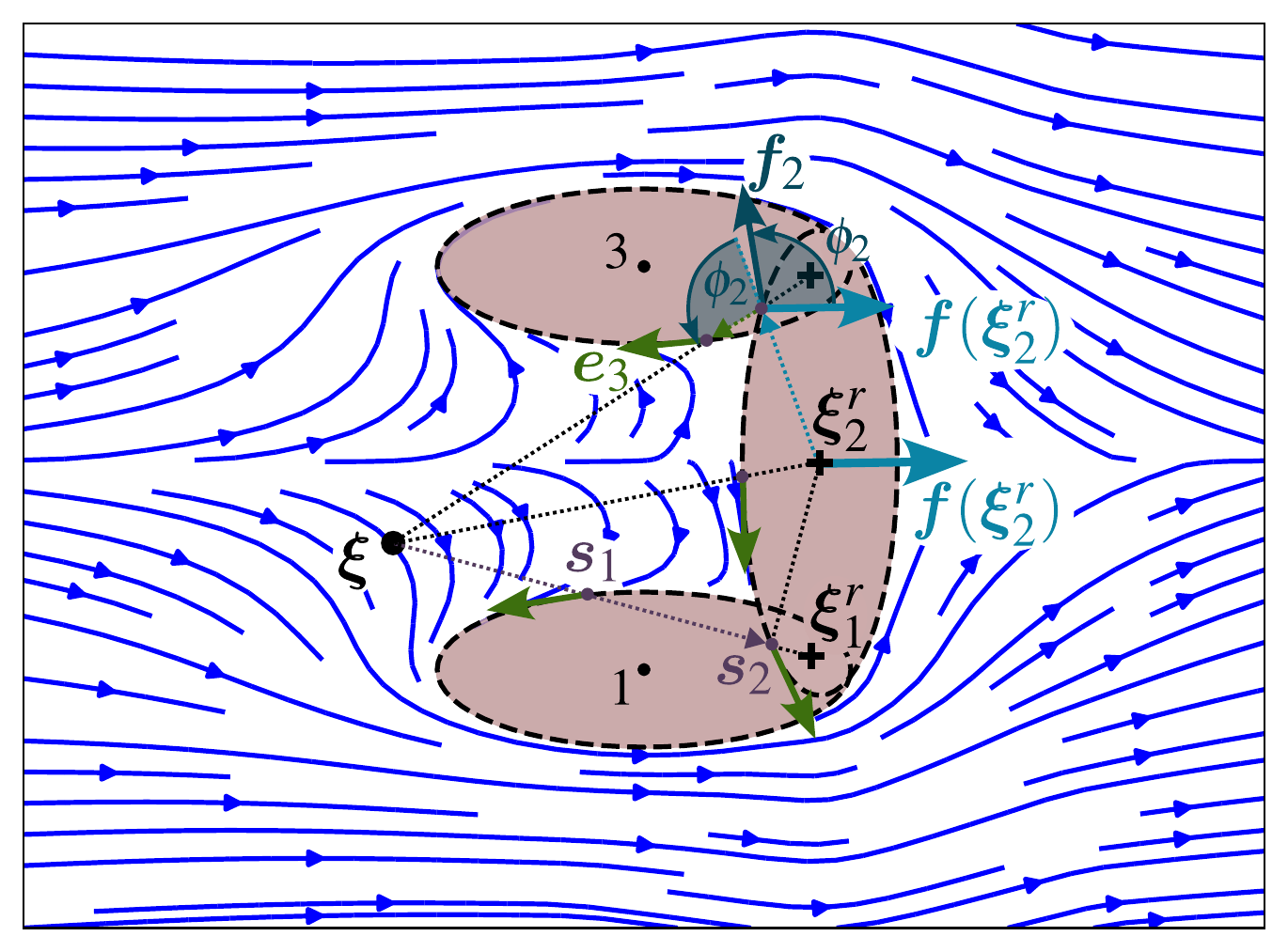}
  \caption{The surface points $\vect s_{(\cdot)}$ (purple) are evaluated for each obstacle up to the root, to then propagate the desired tangent velocity $\vect e$ (green) from the root to each obstacle.}
  \label{fig:triple_ellipses_obstacle}
\end{figure}

\subsubsection{Tangent Evaluation} \label{sec:tangent_evaluation}
Finally, after propagating the velocity to the obstacle $o$, the pseudo tangent is evaluated. For trees-of-stars, we want to enforce the pseudo tangent to be parallel to the surface at all times, hence we adopt \eqref{eq:pulling_weight} as follows:
\begin{equation}
  \begin{split}
  & \text{if} \quad  \normdotprod{- \vect n}{\vect f} > \cos(R^e):  \\
  & \qquad \vect k \left(\sminus \vect n, {\vect e_o} \right) = \vect k \left( \sminus \vect n, {\vect r} \right)+ b \left( \vect k \left(\sminus \vect n,   \vect e_p \right) - \vect k \left( \sminus \vect n, \vect r \right) \right) \\
  & \qquad \text{such that} \quad \| \vect k \left(\sminus \vect n, \vect e_p \right) \| = R^e, \quad b \in \mathbb{R}_{>0} \\
  & \text{otherwise}  \\
  & \qquad \vect k \left(\vect n, {\vect e}_o \right) = \vect k \left( \vect n, - {\vect r} \right)+ b \left( \vect k \left(\vect n,   \vect e_p \right) - \vect k \left( \vect n, - \vect r \right) \right)  \\
  & \qquad \text{such that} \quad \| \vect k \left(\vect n, \vect e_p \right) \| = \pi - R^e, \quad b \in \mathbb{R}_{>0}
  \end{split}
  \label{eq:tangent_evaluation_for_tree}
\end{equation}

\subsubsection{Hiding Weights}
An obstacle $o$ which is occluded by its parent $p(o)$ should not influence the avoidance velocity. For this reason, we introduce the \textit{hiding weight} which reaches zero at full occlusion:
\begin{equation}
  \begin{split}
  w^h_o =
  \begin{cases}
    1 & \text{if} \;\; \Gamma(\vect{s}_{o}) > 1 \\
    \Gamma(\vect{s}_o) ^{\frac{1}{1 - b}} & \text{if} \;\; b < 1\\
    0 & \text{otherwise}
  \end{cases}
  b = \normdotprod{\vecs \xi - \vecs \xi^r_o}{\vecs \xi^r_{p(o)} - \vecs \xi^r_o}
  \end{split}
  \label{eq:hiding_weight}
\end{equation}

\begin{lemma} \label{lemma:rotation_tree}
  Let us assume a tree with obstacle components $o = 1 ..N^{\mathrm{com}}$ and the corresponding reference points $\vecs \xi^r_o$, which all lie within obstacle $o$ and the corresponding parent $p$, i.e., $\vecs \xi^r_o \in \mathcal{X}^i_o \cap \mathcal{X}^i_{p(o)}$. Conversely, the reference of each parent lies outside of the obstacle, i.e.,  $\vecs \xi^r_{p(o)} \in \mathcal{X}^i_{p(o)} \setminus \mathcal{X}^i_{o}$. Moreover, the direction from obstacle to parent is never opposing the direction from the parent to the grandparent as described in \eqref{eq:parent_opposing}.
  Let us assume the dynamics $\vect f(\vecs \xi)$ are locally straight in the surrounding of the obstacle according to Definition~\ref{def:straight_dynamics}. The vector field $\dot{\vecs \xi}$ obtained through the propagation of the velocity $\vect f(\vecs \xi)$ as described in \eqref{eq:velocity_propagation}, with the rotation of the final velocity given by \eqref{eq:tangent_evaluation_for_tree}, and vector tree summing using the weights $w^h_o$ given in \eqref{eq:hiding_weight} ensures collision avoidance according to the boundary condition \eqref{eq:boundary_condition} with the absence of local minima in free-space. 
\end{lemma}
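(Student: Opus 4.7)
The plan is to reduce the tree-of-stars setting back to the single-obstacle case established in Theorem~\ref{theorem:collision_avoidance}, using the propagation construction of \eqref{eq:surface_point_propagation}--\eqref{eq:tangent_evaluation_for_tree} as a bookkeeping device that carries the locally-straight dynamics past every ancestor onto the relevant leaf obstacle. First I would verify that, under the standing assumptions on the reference points ($\vecs \xi^r_o \in \mathcal{X}^i_o \cap \mathcal{X}^i_{p(o)}$ and $\vecs \xi^r_{p(o)} \notin \mathcal{X}^i_o$), the surface-point recursion \eqref{eq:surface_point_propagation} is well posed: a ray starting at $\vect s_c$ in the direction $\vecs \xi^r_c - \vect s_c$ must cross $\mathcal{X}^b_{p(c)}$ exactly once with $b>0$, because $\vect s_c$ is inside $\mathcal{X}^i_{p(c)}$ (by the intersection assumption) while the outgoing half-line eventually leaves $\mathcal{X}^i_{p(c)}$, and the star-shapedness of $\mathcal{X}^i_{p(c)}$ with respect to $\vecs \xi^r_{p(c)}$ gives uniqueness. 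This yields a finite chain of surface points $\vect s_{o}, \vect s_{p(o)}, \dots, \vect s_{r}$ for every component $o$.

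Second, I would show that the velocity propagation \eqref{eq:velocity_propagation} is a sequence of well-defined rotations. The key observation is that by construction $\vecs s_c - \vecs \xi^r_c$ and $\vecs s_{p(c)} - \vecs \xi^r_{p(c)}$ are never anti-collinear: on the component-to-parent direction this is ensured by the non-opposing-grandparent hypothesis \eqref{eq:parent_opposing}, which prevents the two reference directions from being diametrically opposite. Hence the vector rotation $\vect v^r(\cdot,\phi_c)$ from Appendix~\ref{sec:perpendicular_rotation} is defined at every level, and the composed map from $\vect f(\vecs \xi^r)$ to $\vect f_o$ is a continuous rotation. Because $\vect f(\vecs \xi)$ is locally straight around the tree-of-stars, $\vect f_o$ inherits the locally-straight convergence property required by Definition~\ref{def:convergence_dynamics}, so it is an admissible input to the single-obstacle pseudo-tangent construction \eqref{eq:tangent_evaluation_for_tree}.

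Third, I would apply Lemma~\ref{lemma:pseudo_tangent} and Lemma~\ref{lemma:velocity_rotation} componentwise to each $\vect e_o$: on $\mathcal{X}^b_o$ the propagated dynamics $\vect f_o$ plays the role of $\vect c$ and yields a pseudo-tangent satisfying $\dotprod{\vect n_o}{\vect e_o} \geq 0$ on the portion of $\mathcal{X}^b_o$ that is not occluded by $p(o)$. On the occluded portion, the hiding weight \eqref{eq:hiding_weight} vanishes smoothly (as $\Gamma_o(\vect s_o)\to 1$ with $b\to 1^-$ the exponent drives $w^h_o\to 0$ continuously, and $w^h_o\equiv 0$ once fully inside $p(o)$), so the tree-summing never gives weight to a component whose ancestor has already redirected the flow. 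Combining this with the impenetrability argument of Lemma~\ref{lemma:multi_obstacle_weighting}, as $\vecs \xi$ approaches $\mathcal{X}^b_o$ on the exposed side we have $w_o\to 1$ and $w^h_o\to 1$, so the output $\dot{\vecs \xi}$ reduces to the single-obstacle rotated velocity on $o$, which is tangent or outward.

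Finally, I would address the absence of local minima in free space. The rotation-tree summing from Appendix~\ref{sec:perpendicular_rotation} produces a vector whose magnitude is controlled by the magnitude-scaling \eqref{eq:magnitude_scaling} applied per component; away from every boundary all $\lambda_o$ are bounded away from one, $w^h_o$ is continuous, and the only place the summed rotation collapses to zero is where $\vect f(\vecs \xi)=\vect 0$ (the attractor) or on a component's saddle trajectory $\mathcal{X}^s_o$. Since the propagated $\vect f_o$ at each leaf points consistently around the obstacle in the direction inherited from the root's convergence dynamics, the per-component saddle points are isolated and of saddle type, as in the single-obstacle argument after Theorem~\ref{theorem:collision_avoidance}. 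The main obstacle in this plan is the last step: verifying rigorously that no new stationary points are introduced by the weighted rotation-tree sum when several components' influences overlap. I would handle this by showing that in any overlap region at least one $w^h_o w_o$ dominates strictly and pushes the rotated direction into the interior of the tangent half-space given by Lemma~\ref{lemma:velocity_rotation}, so that $\dot{\vecs \xi}=\vect 0$ forces $\vect f(\vecs \xi)=\vect 0$, which lies only at the attractor outside any obstacle.
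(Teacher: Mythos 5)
Your plan follows essentially the same reduction as the paper: carry the locally straight dynamics through the surface-point and velocity propagation onto each component, invoke Lemma~\ref{lemma:pseudo_tangent} and Lemma~\ref{lemma:velocity_rotation} componentwise, use the hiding weight $w^h_o$ to discard occluded components, and fall back on Lemma~\ref{lemma:multi_obstacle_weighting} for impenetrability as one component's weight dominates near its surface. Two points of comparison. First, you add a well-posedness check for the recursion \eqref{eq:surface_point_propagation} (existence and uniqueness of the ray--boundary intersection via star-shapedness of the parent), which the paper omits entirely; that is a genuine strengthening. Second, for the step you correctly identify as the hardest --- showing the weighted combination of per-component rotations introduces no new stationary points in overlap regions --- the paper does not attempt your dominance argument; it closes this step by appealing to Lemma~\ref{lemma:direction_tree}, which states that the rotation-tree summation of smoothly weighted, pairwise non-anti-parallel rotations yields a continuous, minima-free result. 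Your proposed dominance argument is not obviously sufficient as stated (in a region where two components have comparable weights, neither dominates, and you would still need to rule out cancellation of the summed rotation), so if you pursue this route you should either prove that the rotational sum of vectors all lying in a common open half-space cannot vanish, or simply invoke Lemma~\ref{lemma:direction_tree} as the paper does; the latter is exactly the tool built for this purpose.
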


\begin{proof}
  As shown in Lemma~\ref{lemma:pseudo_tangent}, the tangent is defined everywhere as long as the reference direction $\vect r_o(\vecs \xi)$ is not parallel to the propagated velocity $\vect f_o(\vecs \xi)$.
  Moreover, smoothness is ensured due to the adaptable rotation weight $\lambda(\vecs \xi)$ as shown in Theorem~\ref{theorem:collision_avoidance}.
  
  Furthermore, the rotation defined in \eqref{eq:velocity_propagation} is smoothly defined for obstacle-parent-pair, due to the fact that the reference point of the parent is required to lay outside of the child obstacles, and the parent-opposing inequality from \eqref{eq:parent_opposing}. 
  
  This is also ensured for the last surface point $s_o$, i.e., the projection of the position on the obstacle's surface since the hiding weight $w^h_o$ goes to zero if the two vectors are opposing:
  \begin{equation}
    \normdotprod{\vecs s_{c} - \vecs \xi^r_c}{\vecs s_{p(c)} - \vecs \xi^r_{p(c)}}
    \;\; \Rightarrow \;\;
    b = 1
    \;\; \Rightarrow \;\;
    w^h = 0
  \end{equation}
  Hence, the corresponding tangent and the vector can be omitted.

Since all weights and corresponding vectors are smoothly defined, according to Lemma~\ref{lemma:direction_tree} the vector tree evaluation leads to continuous and minima-free dynamics.

Furthermore, from Lemma~\ref{lemma:multi_obstacle_weighting} we know that the impenetrability of the obstacles is preserved as a single component dominates when approaching the corresponding surface. 
  \end{proof}


\begin{figure}[tbh]\centering
\includegraphics[width=0.99\columnwidth]{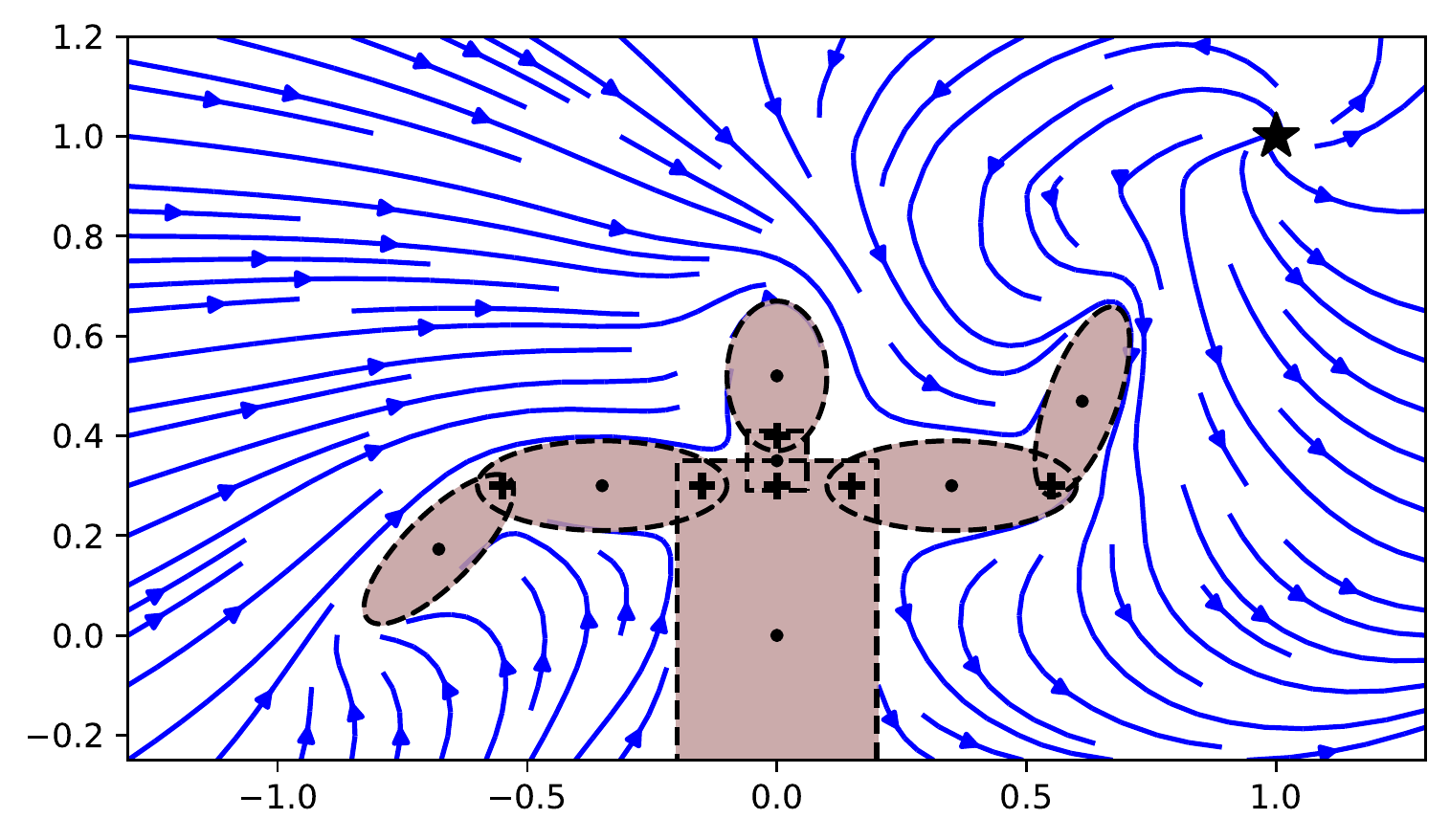}
  \caption{Obstacle avoidance of a two-dimensional human with a total of seven sub-obstacles (corresponding reference points as black crosses) of a circular motion with a single stationary point (black star).}
  \label{fig:rotated_dynamics_multibody_human}
\end{figure}

\subsection{Convergence Sequence}
The obstacle avoidance algorithm is dependent on locally straight dynamics on the surface of the tree-of-obstacles.
Since the general shapes described can be encapsulating directional singularity points, such as attractors, the design of convergence direction requires special care. The computation has to ensure smoothness when getting closer to the directional singularity point $\vecs \xi^a$. It is detailed in Algorithm~\ref{alg:convergence_tree_of_star_avoidance}.

\begin{algorithm}[ht]
\caption{Convergence Direction for Tree-of-Obstacles. The abbreviation pred. refers to the predecessor of the direction tree.} \label{alg:convergence_tree_of_star_avoidance}
\begin{algorithmic}[1]
  \renewcommand{\algorithmicrequire}{\textbf{Input:}}
  \renewcommand{\algorithmicensure}{\textbf{Output:}}
  \REQUIRE $\vect f ({\vecs \xi})$, $N^{\mathrm{com}}$ obstacle-components
  \ENSURE $\vect c(\vecs \xi)$ {Locally straight dynamics}
  \STATE $w^m_1(\vecs \xi)$ \COMMENT{Mapping weight for root component \eqref{eq:mapping_weight}}
  \STATE $\vecs f(\vecs \xi), (\vecs \xi - \vecs \xi^a) \Rightarrow \vecs v^{r, i}(\cdot),  \beta_i$ \COMMENT{Initial rotation \eqref{eq:vector_basis}}
  \STATE $\vecs v^{r, c}(\cdot),  \beta_c \gets \vecs f(\vecs \xi^r), (\vecs \xi^r - \vecs \xi^a)$ \COMMENT{Convergence rot. \eqref{eq:vector_basis}}
  \STATE $\vecs v^{r, a}(\cdot),  \beta_a \gets (\vecs \xi - \vecs \xi^a), (\vecs \xi^r - \vecs \xi^a)$ \COMMENT{Get rotation \eqref{eq:vector_basis}}
  \STATE $\vect t^r_1(\cdot) \gets \vecs v^{r, i}(\cdot), \vecs v^{r, a}(\cdot), \vecs v^{r, c}(\cdot)$ \COMMENT{Create rotation tree}
  \STATE \COMMENT{Tree reduction with Algo.~\ref{alg:rotational_avoidance} using vector-weight pairs:} $\vect t_1^r \left([\vecs f(\vecs \xi^r), w^m_1], [\vecs f(\vecs \xi), (1 -w^m_1)]\right) \Rightarrow \vect t^r(\cdot)$  
  \FOR[Iteration over components]{$n = 1 \; \mathbf{to} \; N^{\mathrm{com}}, i \neq r$}
  \STATE $ c \gets n$ \COMMENT{Set initial node}
  \STATE $w^m_i(\vecs \xi)$ \COMMENT{Mapping weight \eqref{eq:mapping_weight}}
  \FOR[From node to root]{$l=l(n) \; \mathbf{to} \; 0$}
  \STATE $(\vecs \xi^r_{p(c)} - \vecs \xi^r_c) \Rightarrow \vect t^r(\cdot)$ \COMMENT{Append to tree with pred. $c$}
  \STATE $ c \gets p(c)$ \COMMENT{Set current node iterator to its parent}
  \ENDFOR
  \STATE $\vect v^{r, c}(\cdot) \Rightarrow \vect t^r(\cdot)$ \COMMENT{Append to tree with pred. $c=r$}
  \ENDFOR
  \STATE $w^n_o \gets w^m_o$ \COMMENT{Weight normalization \eqref{eq:weight_normalization}}
  \STATE $\vecs c(\vecs \xi) \gets \vect t^r(w^n_o)$ \COMMENT{Tree reduction, Algo.~\ref{alg:rotation_summing_tree}}
\end{algorithmic}
\end{algorithm}

\subsection{Mapping Weight Normalization}
A mapping weight $w^m_o$ as introduced in \eqref{eq:weight_normalization} approaching one indicates that the $\vecs \xi$ is on the surface of the corresponding obstacle $o$. To ensure that this weight remains high while taking into account other obstacles, the normalized weights are defined as follows:
\begin{equation}
  w^n_o = 
  \begin{cases}
    \hat w^n_o / \sum_i^{N^{\mathrm{com}}} \hat w^n_i & \text{if} \;\; \sum_i^{N^{\mathrm{com}}} \hat w^n_i > 1 \\
    \hat w^n_o & \text{otherwise} 
  \end{cases}
  \; , \;\;
    \hat w^n_o =  1 / (1 - w_o) \label{eq:weight_normalization}
\end{equation}
for all obstacle $o = 1 ... N^{\mathrm{obs}}$.



\subsection{Practical Considerations}
As for each obstacle, the algorithm needs to propagate the whole obstacle tree. The presence of multiple obstacles and long trees can lead to many computations.
Therefore, it is advised to adapt the distance function $\Gamma_o$ to approach infinity after a certain distance for outer leaves, as proposed in \eqref{eq:infinity_distance}. However, the distance function of the root $\Gamma_r$ should decrease slower, i.e.,
\begin{equation}
  \left\| \frac{d}{d \vecs \xi} \Gamma_o(\vecs \xi) \right\| \geq  \left\| \frac{d}{d \vecs \xi} \Gamma_r(\vecs \xi)  \right\|
\end{equation}
This has the effect that the algorithm only considers the obstacle core far away, while when approaching the obstacle, the more detailed leaf structure is considered.
This leads to a sparse evaluation tree and enables real-time applicability.

\section{Evaluation} \label{sec:results}
\subsection{Computational Cost}
The most computationally intensive part of the ROAM algorithm is the matrix-vector multiplication in $N$ dimensions for the stereographic projections and unfolding mappings. Therefore, the algorithm's complexity, given $O$ obstacles, $K$ components, and an obstacle tree of level $L$, can be expressed as $\mathcal{O}(N^{2} O K L)$.

\subsection{Obstacle Avoidance While Following a Stable Limit Cycle}
\subsubsection{Setup}
We compare the three algorithms MuMo \cite{huber2022avoiding}, VF-CAPF \cite{yao2022guiding}, and ROAM (proposed approach). The chosen scenario uses initial dynamics and the obstacle distribution as proposed in \cite{yao2022guiding}. The initial dynamics represent a circular limit cycle of the form:
\begin{equation}
  \vect f(\vecs \xi)  =
  \left(
  \begin{bmatrix}
    0  & 1 \\
    -1  & 0
  \end{bmatrix}
  + 2 (R_0 - \| \vecs \xi \|) {\matr I}
  \right) \vecs{\xi} \label{eq:circular_motion_vf}
\end{equation}
where the circle radius is $R_0 =2$, and $\matr I$ is the two-dimensional identity matrix.
The environment contains six convex obstacles, and the agent is aware of their location at all times (see Fig.~\ref{fig:quntative_comparison_roam}).
A grid of 10x10 evenly distributed starting points was constructed, of which 93 were in free space. The trajectory is evaluated for these starting points through Euler integration with a time step $dt=0.01 s$ and a maximum of 500 iterations.\footnote{Source code on \url{https://github.com/hubernikus/nonlinear_obstacle_avoidance.git}, 2022/02/31}
  
\begin{figure}[tbh]
  \centering
  \includegraphics[width=0.8\columnwidth]{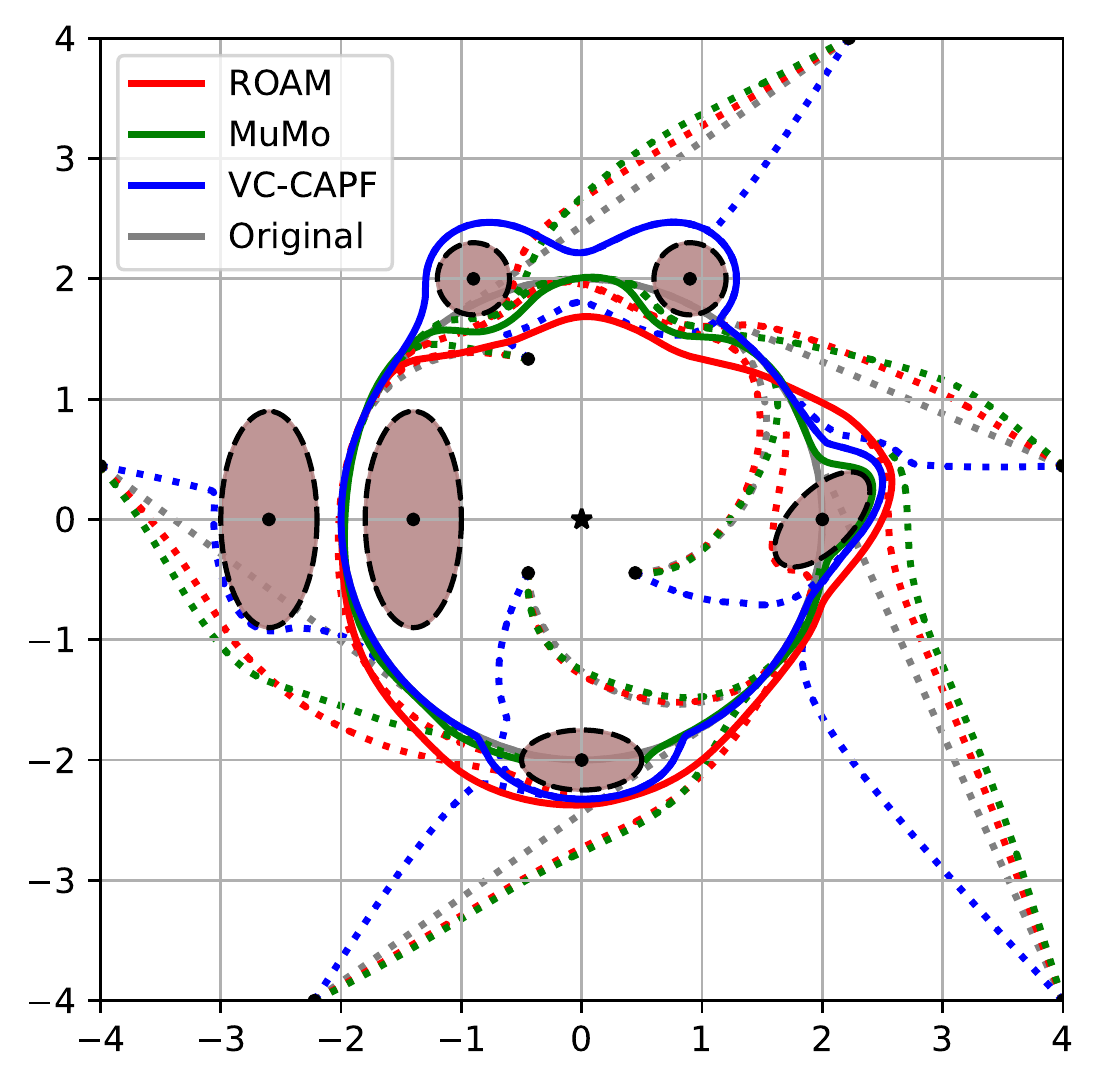}
  \caption{MuMo \cite{huber2022avoiding},  VF-CAPF \cite{yao2022guiding} and ROAM are used to guide a nonlinear, limit-cycle-following vector field around six concave obstacles.
    The resulting limit cycles are visualized as a solid line in the respective color. Additionally, the trajectories from specific starting points (black circles) are visualized with a dashed line.}
    \label{fig:quntative_comparison_roam}
\end{figure}

\subsubsection{Metrics}
The trajectories are compared by observing the local minima, the distance to the desired limit cycle, and the similarity between the velocity after obstacle avoidance and the initial velocity. Furthermore, we look at the evolution of the velocity over time, i.e., the discrete acceleration.

The root mean square error (RMSE), i.e., $\text{RMSE} = \sum^M_{m=1} \| \vect v_i - \vect u_i \|$ is used, as well as the
normalized inverted cosine similarity (NICS)
\begin{equation}
  \mathrm{NICS} = \frac{1}{2} \left(1 -
  \frac{1}{M} \sum_{m=1}^M \normdotprod{\vect v_i}{\vect u_i} \right)
\end{equation}
with $\text{NICS} \in [0, 1]$. Note that the smaller NICS, the higher the similarity of the two vectors.



\subsubsection{Results}
In Table~\ref{tab:comparison_algorithms_limit_cycle}, it can be observed that MuMo is the only method among the three that results in trajectories ending up in local minima on the surface of an obstacle in almost half of the cases. These local minima lie on the limit cycle, suggesting that by increasing the step number, all MuMo trajectories would eventually converge to these local minima. Due to this behavior, MuMo is deemed inappropriate for the proposed scenario. Its limited applicability to nonlinear initial dynamics as pointed out in Table~\ref{tab:comparison_convergence}. 

The focus of the comparison is thus between ROAM and VF-CAPF. ROAM demonstrates better path following, with trajectories maintaining a shorter distance to the reference circle throughout the motion. Moreover, ROAM exhibits lower acceleration along the path, indicating smoother motion with fewer abrupt changes. Additionally, ROAM shows higher similarity to the initial dynamics compared to VF-CAPF.

These findings can be qualitatively observed in Figure~\ref{fig:quntative_comparison_roam}, where VF-CAPF closely follows the initial path (gray) outside the obstacle region and deviates only when in close proximity to the obstacle. The metrics indicate that this behavior leads to higher accelerations and more significant deviations from the initial trajectory overall. On the other hand, MuMO moves more directly towards the limit cycle compared to ROAM, but at the expense of creating local minima on the surface of the bottom obstacle.

\begin{table}[tbh]
  \centering
  \scalebox{0.85}{
    \begin{tabular}{|l|c|c|c||c|}\hline 
      & ROAM & MuMo & VF-CAPF & Original \\
      & (proposed) & &  &  dynamics \\ \hline
      $N^m$ & \textbf{0\%} & 48\% & \bf{0\%} & 0\% \\ \hline
      $\text{RMSE}(\vecs \xi, R_0)$ & 1.46 $\pm$ 1.11 & \bf{1.24 $\pm$ 1.15} & 1.48 $\pm$ 0.87 & 0.93 $\pm$ 0.96 \\ \hline
      $\text{RMSE}(\dot{\vecs \xi}, \vecs f_0)$ & 0.15 $\pm$ 0.07 & \bf{0.04 $\pm$ 0.09} & 0.47 $\pm$ 0.43 & 0.00 $\pm$ 0.00 \\ \hline
      $\text{NICS}(\dot{\vecs \xi}, \vecs f_0)$ & 0.04 $\pm$ 0.02 & \bf{0.01 $\pm$ 0.02} & 0.12 $\pm$ 0.11 & 0.00 $\pm$ 0.00 \\ \hline
      $\text{RMSE}(\dot{\vecs \xi}_t, \dot{\vecs \xi}_{t+1})$ & 2.51 $\pm$ 2.66 & \bf{2.04 $\pm$ 5.98} & 3.47 $\pm$ 2.14 & 0.27 $\pm$ 0.17 \\ \hline
      $\text{NICS}(\dot{\vecs \xi}_t, \dot{\vecs \xi}_{t+1})[10^{-4}]$ & 0.63 $\pm$ 0.66 & \bf{0.51 $\pm$ 1.49} & 1.08 $\pm$ 0.82 & 0.07 $\pm$ 0.04 \\ \hline
    \end{tabular}
    }
    \caption{The different trajectories are compared in (1) the ratio of trajectories which end up in a local minimum on the obstacle $N^m$, (2) the distance to the desired trajectory, i.e., the difference to radius $R_0$. Furthermore, (3) RMSE and NICS between all approaches to the original DS are evaluated, as well as (4) the change of the dynamics over time (corresponds to acceleration). The mean and standard deviation are evaluated over the 93 trajectories.
    }
    \label{tab:comparison_algorithms_limit_cycle}
\end{table}

\subsection{Nonlinear Path Following with Autonomous Wheelchair}

\subsubsection{Experimental Setup}
\def\plotwidth{0.45\columnwidth}
\begin{figure}[t]
  \centering
  \begin{subfigure}{0.90\columnwidth}
    \centering
    \includegraphics[width=\textwidth]{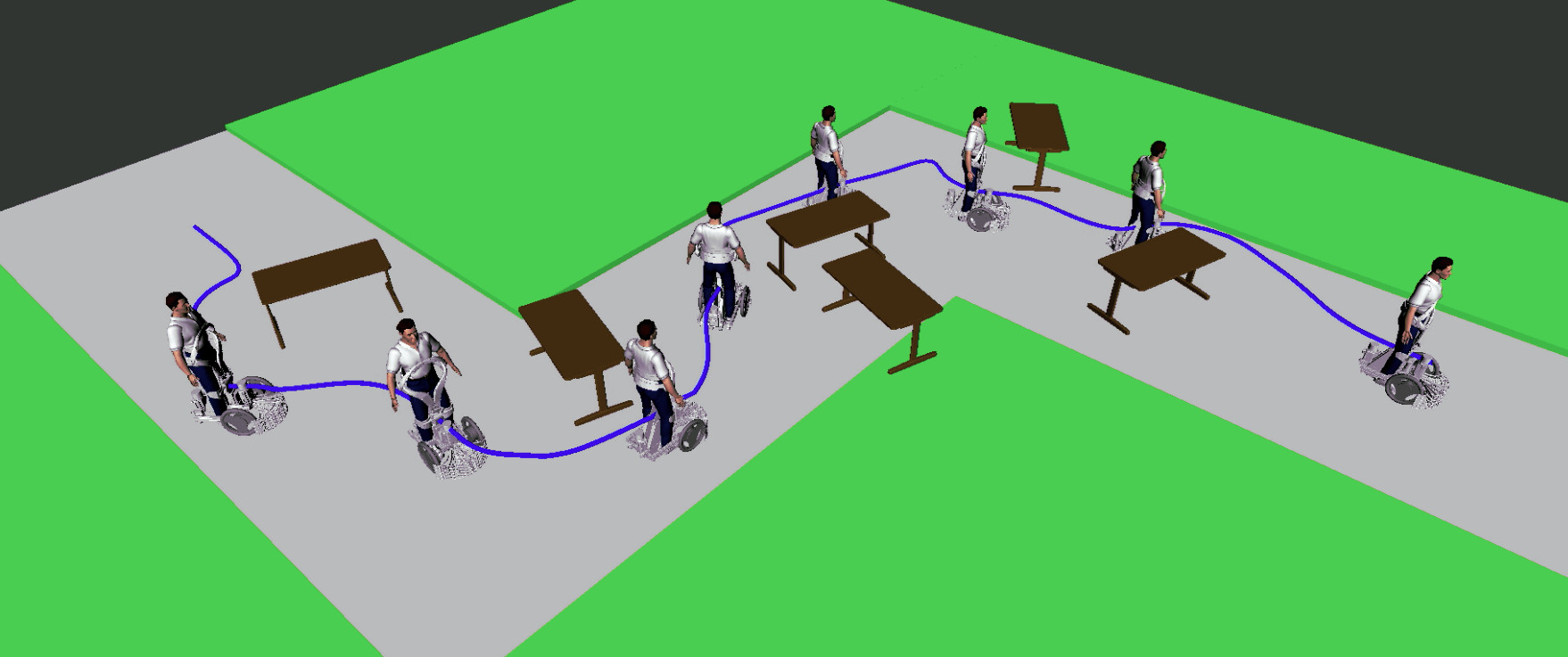}
    \caption{QOLO-robot navigating along a wavy road using global PF}
    \label{fig:qolo_among_tables}
  \end{subfigure} 
  \begin{subfigure}{\plotwidth}
    \centering
    \includegraphics[width=\textwidth]{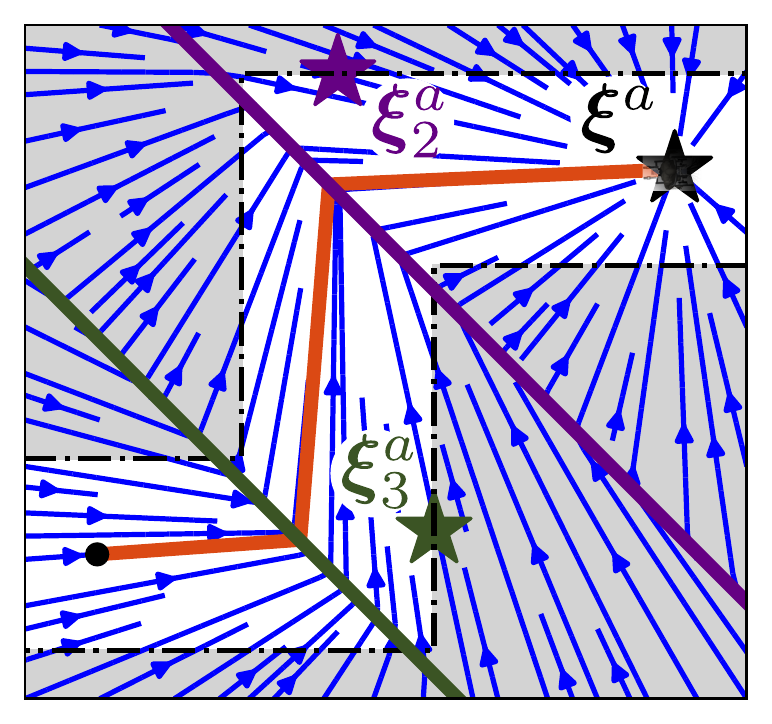}
    \caption{Local straight - initial}
    \label{fig:qolo_wavy_switching_straight_initial}
  \end{subfigure}%
  \begin{subfigure}{\plotwidth}
    \centering
    \includegraphics[width=\textwidth]{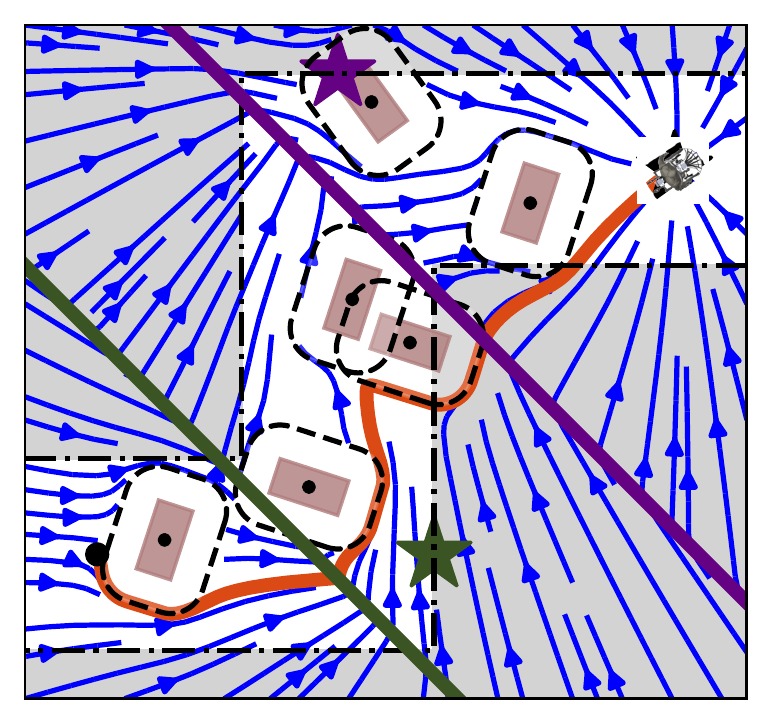}
    \caption{Local straight - avoiding}
    \label{fig:qolo_wavy_switching_straight_avoiding}
  \end{subfigure}
    \begin{subfigure}{\plotwidth}
    \centering
    \includegraphics[width=\textwidth]{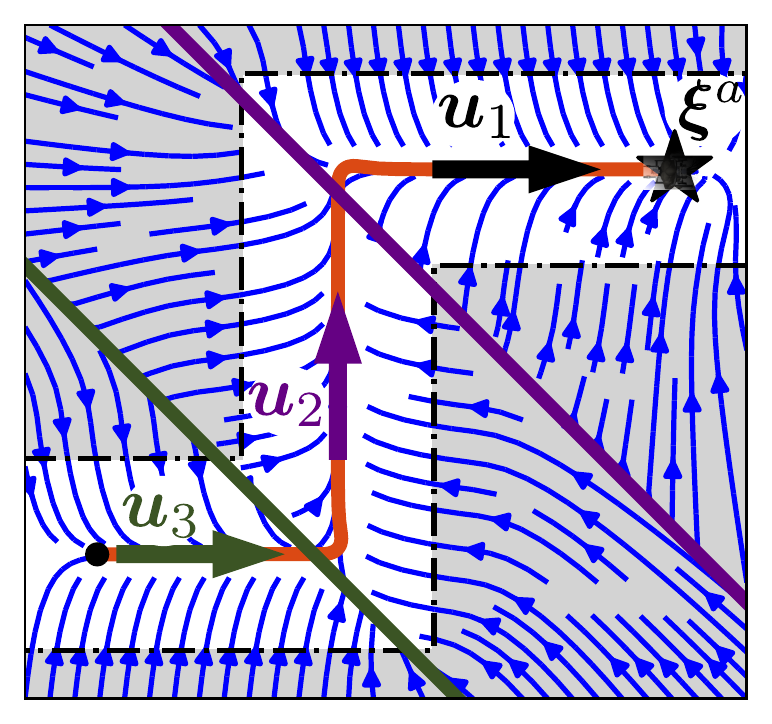}
    \caption{Local PF - initial}
    \label{fig:qolo_wavy_switching_path_initial}
    \end{subfigure}%
  \begin{subfigure}{\plotwidth}
    \centering
    \includegraphics[width=\textwidth]{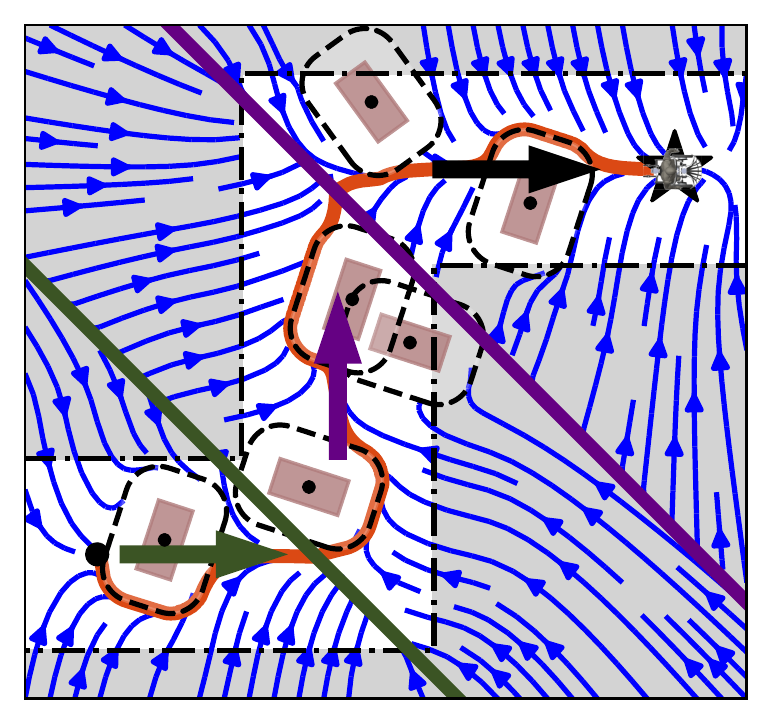}
    \caption{Local PF - avoiding}
    \label{fig:qolo_wavy_switching_path_avoiding}
  \end{subfigure}
  \begin{subfigure}{\plotwidth}
    \centering
    \includegraphics[width=\textwidth]{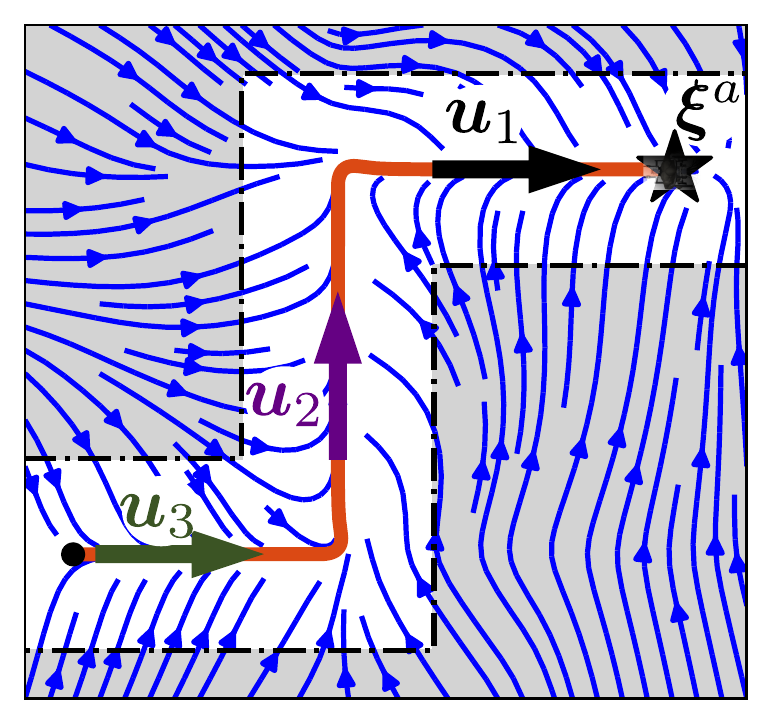}
    \caption{Global PF - initial}
    \label{fig:qolo_wavy_nonlinear_global_initial}
  \end{subfigure}%
  \begin{subfigure}{\plotwidth}
    \centering
    \includegraphics[width=\textwidth]{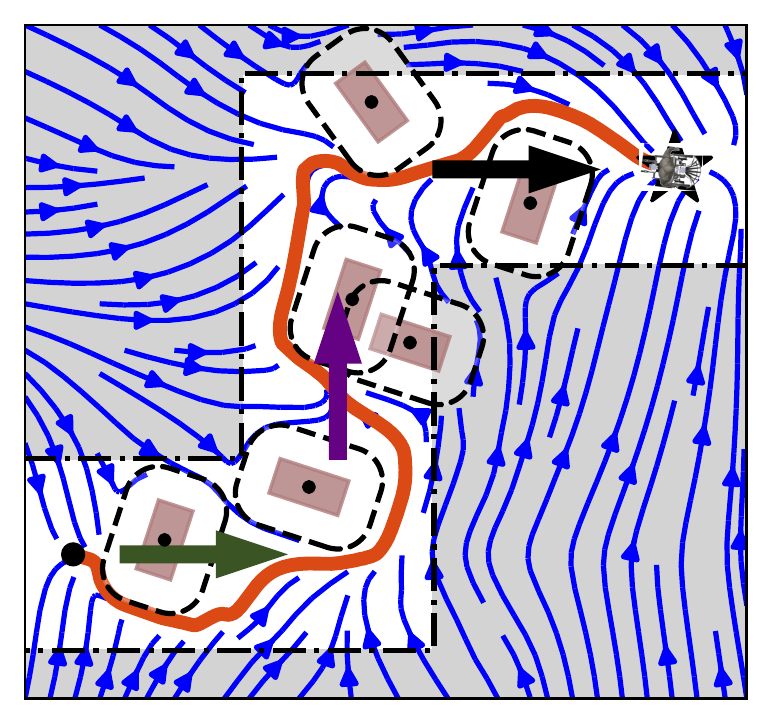}
    \caption{Global PF - avoiding}
    \label{fig:qolo_wavy_nonlinear_global_avoiding}
\end{subfigure}
  \caption{
  The robot is navigating between static tables on a wavy road (gray) through a grass field (green). The initial dynamics and the reference trajectory (orange) are on the left, with the corresponding obstacle dynamics on the right.
  The local attractors  (colored starts) and switching regions (colored lines) are used to create global dynamics (a, c). 
  The global path following (f) uses a single attractor only.
}
    \label{fig:comparison_algorithms}
\end{figure}

In the second evaluation, the autonomous wheelchair QOLO was employed, and various initial vector fields combined with avoidance methods were tested. The control point of the wheelchair was positioned in front of the wheel axis to ensure effective maneuverability. To account for the shape and size of the QOLO wheelchair, a margin of 0.7 meters was added to the obstacles during the evaluation \cite{huber2022avoiding}.

The path followed by the wheelchair consisted of three road segments, denoted as $s=1$, the segment closest to the goal, up to $s=3$. Although there was no strict constraint to remain within the road boundaries, the initial dynamics were specifically designed to guide the wheelchair towards the center of the road, promoting adherence to the desired path (see Fig.~\ref{fig:qolo_among_tables}).

Seven tables were randomly placed along the path, with their centers positioned on the path itself. The tables were positioned away from the starting point and the attractor. Additionally, no more than two tables intersected, including the margin, allowing all table shapes to be represented as star-shapes, as required by MuMo \cite{huber2022avoiding}.

\subsubsection{Navigation Algorithms}
Three approaches are used to navigate in this environment. 

\textbf{Local straight} dynamics \eqref{eq:straight_system} are combined with MuMo \cite{huber2022avoiding}. The initial dynamics consist of three separate vector fields with corresponding local attractors (star), see Fig.~\ref{fig:qolo_wavy_switching_straight_initial}. The attractor switches when transitioning from one region to the next (crossing the line).

\textbf{Local PF} (path following) is combined with ROAM. The local PF dynamics are given as:
\begin{equation}
  \vect f_s (\vecs \xi) = \vect u_s +  \dotprod{\Delta \vecs \xi}{\vect u_s} \vect u_s  - \Delta \vecs \xi
  \;\;\; \text{with} \;\;\;
  \Delta \vecs \xi  = (\vecs \xi - \vecs \xi^a_s)
\end{equation}
for all segments $s = 1 .. 3$, where $\vect u_s \in \mathbb{R}^2$ is the (local) nominal direction pointing along the road segment, and $\vecs \xi^a_s \in \mathbb{R}^2$ is the local attractor. 

\textbf{Global PF} dynamics is evaluated by using directional-tree averaging as described in Appendix~\ref{sec:perpendicular_rotation}.
The root of the direction-tree is given as $\vect v_{0, 1} = \vecs \xi - \vecs \xi^a$. The direction tree is populated iteratively:
\begin{itemize}
    \item $\vect v_{s, 1}(\vecs \xi) = \vect u_s$ with respective parent direction $\vect v_{s-1, 1}$
    \item $\vect v_{s, 2}(\vecs \xi) = \vect f_s (\vecs \xi)$ with respective parent direction $\vect v_{s, 1}$
\end{itemize}
for all segments $s = 1 ..3$.
The final dynamics are obtained through the weighted evaluation described Algorithm~\ref{alg:rotation_summing_tree}, using the following weights
\begin{equation}
  w_{s, 1}(\vecs \xi)= 0 , \;\;  w_{s, 2}(\vecs \xi) = \frac{1}{d_s} (1 + \min(\dotprod{\vecs v_s}{\vecs \xi^a_s - \vecs \xi}, 0))
\end{equation}
where $d_s \in \mathbb{R}_{\geq 0}$ the distance to the line segment $s$. The segment weights $w_s(\vecs \xi) \in [0, 1]$ are normalized if their sum exceeds one.


\subsubsection{Results}
The combination of the global PF with ROAM ensures the convergence of all trajectories, as shown in Table~\ref{tab:comparison_algorithms_path_following}.
The other two methods achieve a convergence rate of approximately 92 \%. This can be attributed to the use of a high-level planner, specifically switching between the local dynamics. Since this conflicts with the guarantees of absences of local minima.
While more sophisticated switching or transitioning methods may exist, to the best of our knowledge, there is no global path sequencer that can guarantee these convergence properties within a finite time.
Furthermore, when using the local potential field (PF) with ROAM, the robot spends less time on the desired path and has a greater average distance to the path boundaries compared to the local straight algorithm combined with MuMo.
However, the average distance traveled remains approximately the same across all methods.


\begin{table}[tbh]
  \centering
    \begin{tabular}{|l|c|c|c|}\hline 
    & Local straight \cite{huber2022avoiding} & Local PF & Global PF  \\ \hline
	Converged [] & 92\% & 92\% & \textbf{100\%} \\ \hline
	Off-track [\%] & 1.49 $\pm$ 0.00 & \bf{0.79 $\pm$ 0.00} & 1.89 $\pm$ 0.00 \\ \hline
    $\Delta d$ [m] & 1.54 $\pm$ 0.04 & \textbf{1.73 $\pm$ 0.02} & 1.65 $\pm$ 0.04 \\ \hline
	Distance [m] & \bf{20.2 $\pm$ 3.4} & 21.0 $\pm$ 3.8 & 20.8 $\pm$ 2.8 \\ \hline
    \end{tabular}
    \caption{
      The three approaches for following the local path are compared based on the following metrics: the convergence ratio to the attractor, the ratio of trajectories deviating from the path, the distance to the road border $\Delta d$, and the total length of the trajectory. The reported values represent the mean and standard deviation calculated from 100 runs with randomly distributed furniture, while keeping the start and endpoints consistent.
    }
    \label{tab:comparison_algorithms_path_following}
\end{table}

\subsection{Obstacle Avoidance in Three Dimensions}

\subsubsection{Spiraling Motion Around Human in Simulation}
Inspired by \eqref{eq:circular_motion_vf}, we propose spiraling dynamics as:
\begin{equation}
	\begin{split}
  \vect f(\vecs \xi)  =
  \matr{B}^T \left(
  \begin{bmatrix}
    0  & 1 \\
    -1  & 0
  \end{bmatrix}
  + 2 \left(R_0 - \| \matr B \tilde{\vecs \xi} \|\right) {\matr I}
\right) \matr{B} \tilde{\vecs{\xi}} + \vecs{p}( {\vecs \xi}) \\
\text{with} \;\; 
\matr B = \begin{bmatrix}
	1 & 0  & 0 \\
 	0 & 0 & 1
 \end{bmatrix} 
 \quad \text{and} \quad 
	\vecs{p}(\vecs \xi) = \begin{bmatrix} 0 &  1 & 0 \end{bmatrix}^T
\end{split}
  \label{eq:spiral_motion_vf}
\end{equation}
with the spiraling radius of $R_0 =$~\SI{0.1}{m} and $\tilde{\vecs \xi} = \vecs \xi - \vecs \xi^a$ the relative position with respect to the center $\vecs \xi^a$.

The obstacle tree representing the human in ROAM consists of components corresponding to the limbs and main body, with the core serving as the root of the tree. When applying ROAM for collision avoidance, see Figure \ref{fig:spiraling_motion}, it can be observed that all trajectories successfully avoid the human without becoming trapped in local minima. Furthermore, the dynamics of the system return to the initial state, far away from the obstacle, both at the beginning and after successfully avoiding the collision. This behavior highlights the effectiveness of ROAM in generating smooth and convergent trajectories while maintaining the desired dynamics of the system.

\begin{figure}[tbh]\centering
\begin{subfigure}{0.49\columnwidth}
\centering
\includegraphics[width=\textwidth, trim={0cm 6cm 0 1cm}, clip]{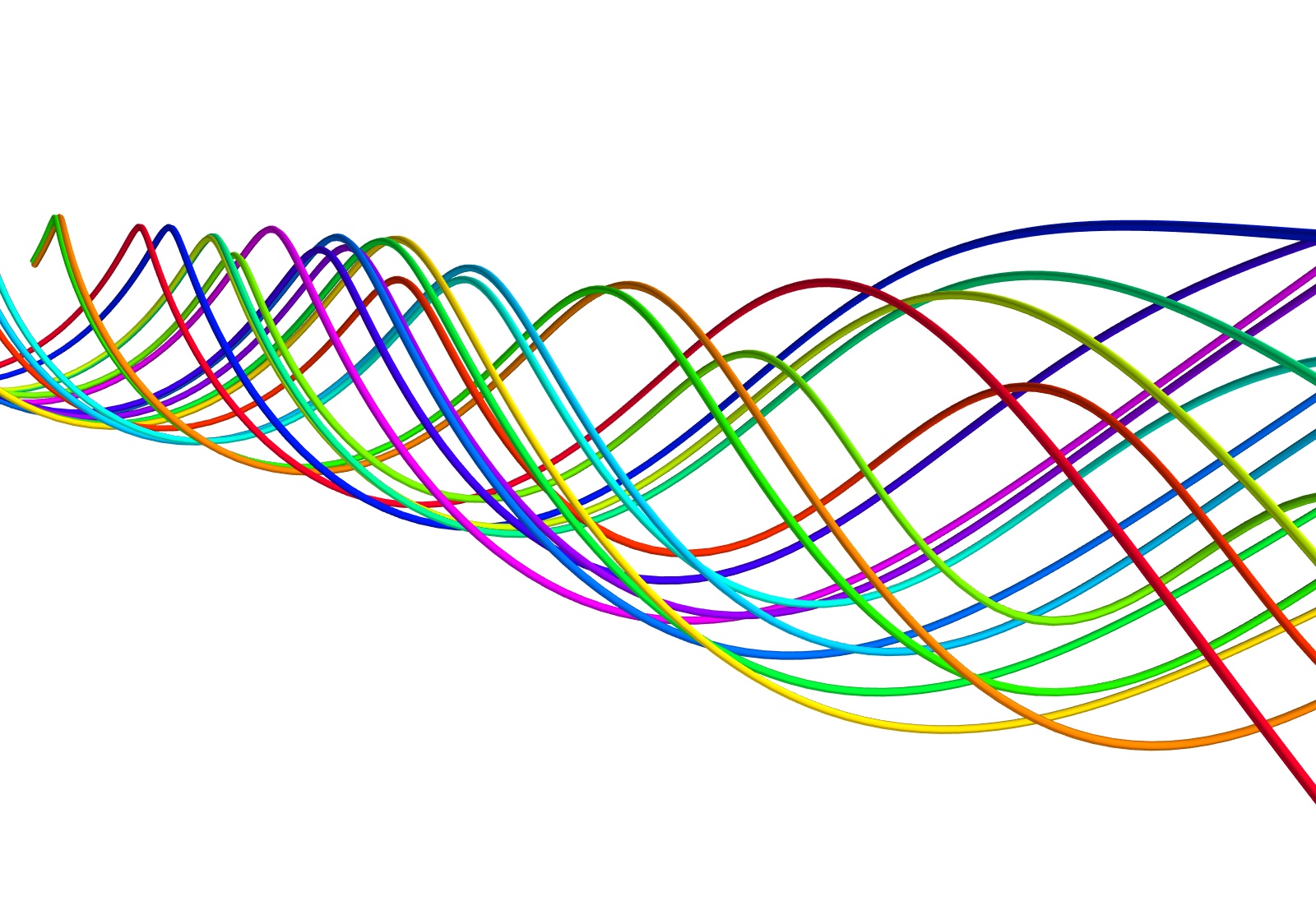}
\caption{Initial spiraling motion $\vect f(\vecs \xi)$}
\label{fig:spiraling_motion_inital}
\end{subfigure} 
\begin{subfigure}{0.49\columnwidth}
  \centering
  \includegraphics[width=\textwidth, trim={0cm 6cm 0 1cm}, clip]{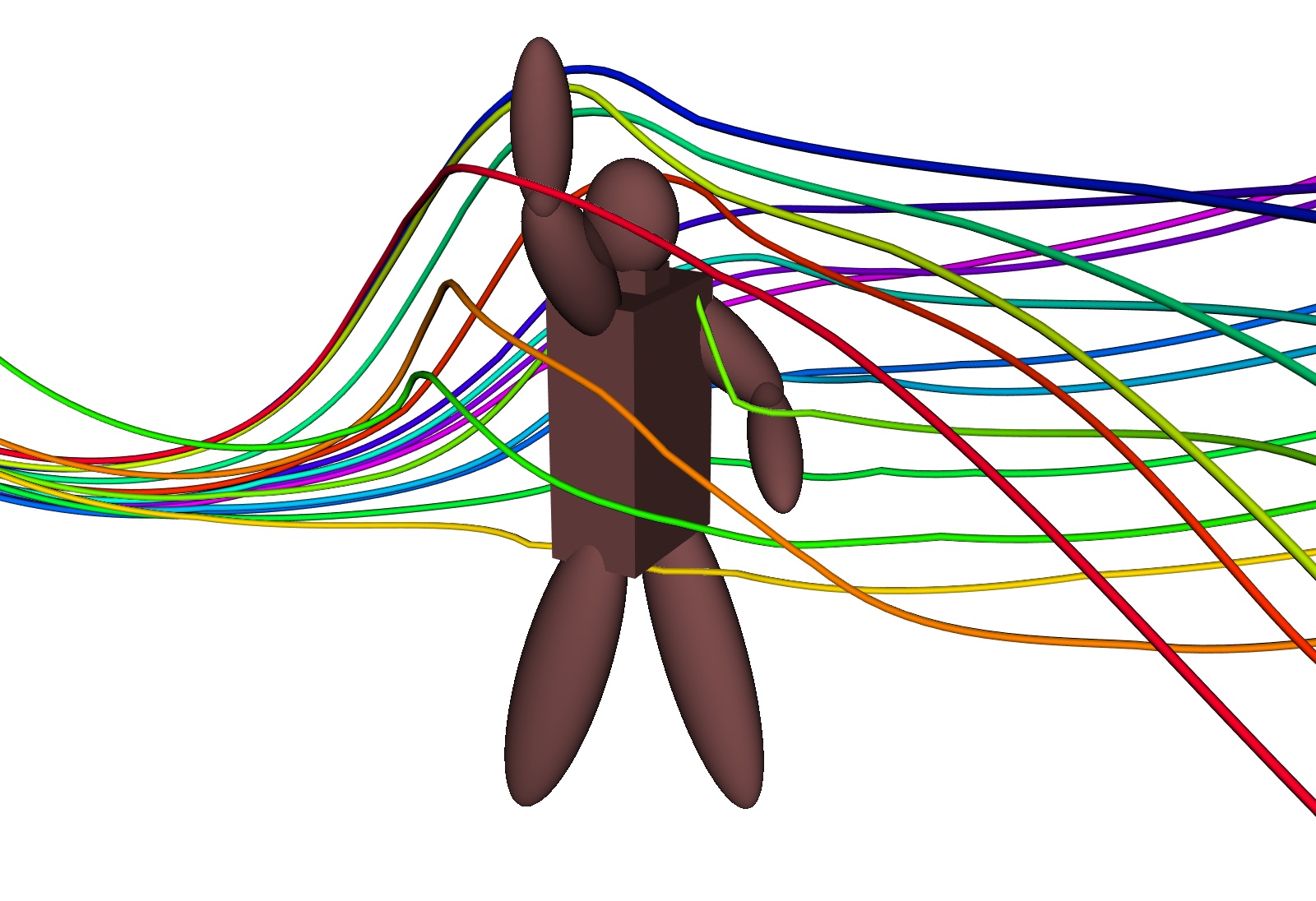}
\caption{Rotated motion around human  $\vecs \xi$
}
  \label{fig:spiraling_motion_avoidance}
\end{subfigure}%
\caption{ROAM is used to guide trajectories from 16 different initial positions and ensures that all trajectories successfully avoid the static human in simulation.}
\label{fig:spiraling_motion}
\end{figure}

\subsubsection{Qualitative Evaluation on Robot Arm} \label{sec:robot_implementation}
Experiments with the real robot were performed using the 7 DoF Panda robot by Franka Emika on a fixed base (see Fig.~\ref{fig:fanka_conveyor_avoidance}).
The scenario chosen is the automated disinfecting of a running conveyor belt, which transports various parcels. The initial dynamics $\vecs f(\vecs \xi)$ are similar to the spiral motion in \eqref{eq:spiral_motion_vf}, but the basis $\matr{B}$ and perpendicular dynamics $\vect p(\vecs \xi)$ given by:
\begin{equation}
\matr B = \begin{bmatrix}
	1 & 0  & 0 \\
 	0 & 1 & 0
 \end{bmatrix} 
 \quad \text{and} \quad 
	\matr{p}(\vecs \xi) = \begin{bmatrix} 0 &  0 & \left(\vecs \xi^a(t) - \vecs \xi\right) \end{bmatrix}^T
\end{equation}

Moreover, the attractor is dynamic and moves back and forth the conveyer belt:
\begin{equation}
	\vecs \xi^a(t) = \begin{bmatrix} 0.5 & 0.6 + 0.1 \sin \left(\frac{\pi}{10} t \right) & 0.3 \end{bmatrix}^T 
\end{equation}
As a result, the initial dynamics $\vecs f(\vecs \xi, t)$ are time-varying, too.

It is assumed that the robot has complete knowledge of the relative position and shape of the conveyor belt. The position and velocity of the parcel are determined using reflective markers (Optitrack). The analytical shape of the objects is known analytically.
Additionally, an operator is handling parcels on the conveyer belt, but the robot is not directly aware of its presence. However, since an impedance controller is employed \cite{kronander2015passive}, the robot is capable of adapting to physical interactions.

\begin{figure}[tbh]\centering
\begin{subfigure}{1.0\columnwidth}
  \centering
  \includegraphics[width=\textwidth]{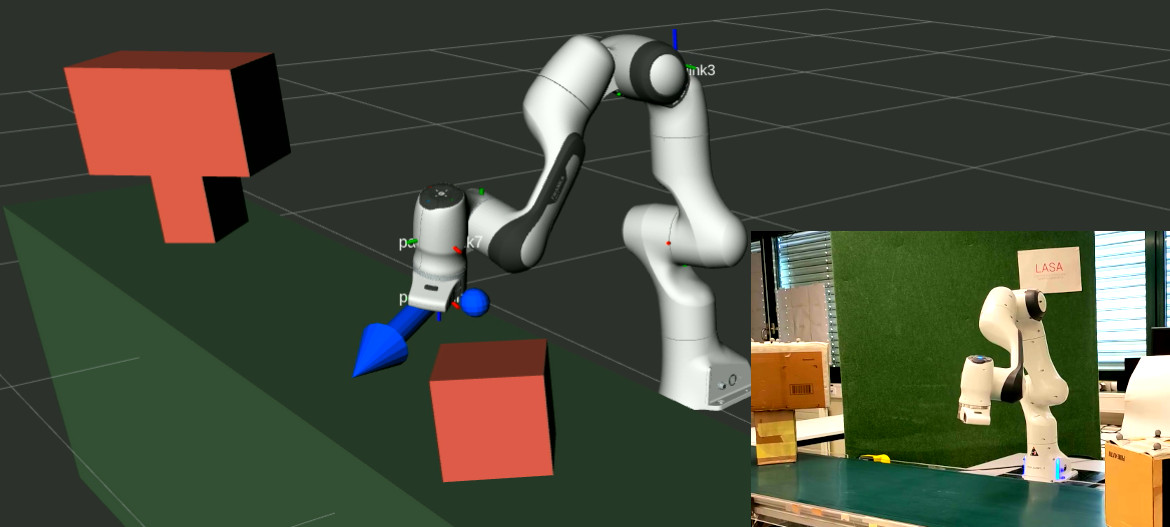}
\end{subfigure}%
\caption{The robot is aware of the two obstacles (brown shapes) as well as the conveyor belt (green block) to obtain the avoidance dynamics (blue arrow). The center of the initial dynamics (blue dot) is moving across the conveyor belt.}
\label{fig:fanka_conveyor_avoidance}
\end{figure}

The robot successfully avoids both the parcels and the conveyor belt while maintaining adherence to the initial dynamics whenever feasible. By utilizing trees-of-stars to represent the concave obstacle and positioning the reference point of the root-component on the conveyor belt, the robot effectively avoids the parcel from above (see Fig.~\ref{fig:franka_avoidance_on_conveyor_belt}). Notably, comparable methods such as MuMo \cite{huber2022avoiding} or VF-CAPF \cite{yao2022guiding} do theoretically not permit the placement of a reference point in trees of obstacles that facilitates collision avoidance in such environments.

\def\figwidth{0.164\textwidth}
\begin{figure*}[bth]\centering
\begin{subfigure}{\figwidth}
\centering
\includegraphics[width=\textwidth, trim={4cm 0cm 10cm 0cm}, clip]{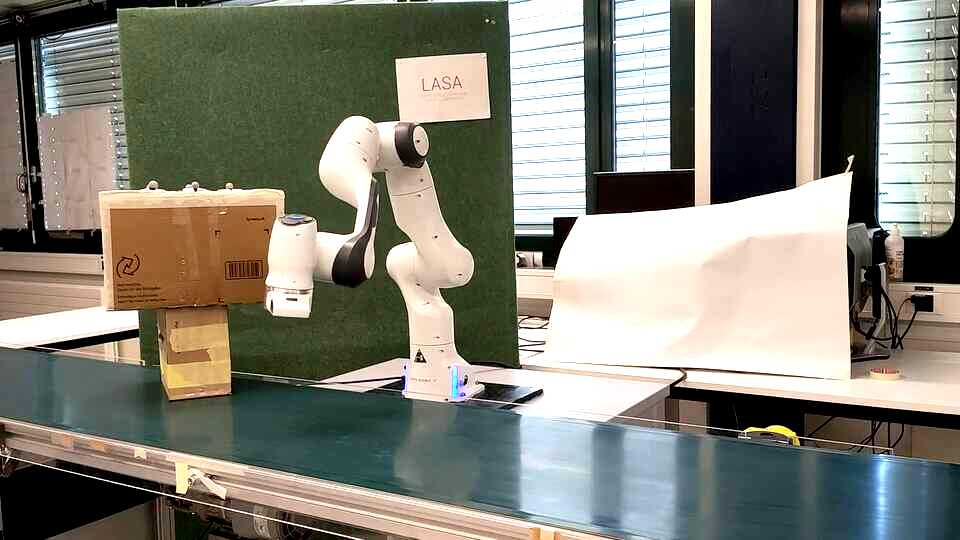}
\end{subfigure}\hfill%
\begin{subfigure}{\figwidth}
\centering
\includegraphics[width=\textwidth, trim={4cm 0cm 10cm 0cm}, clip]{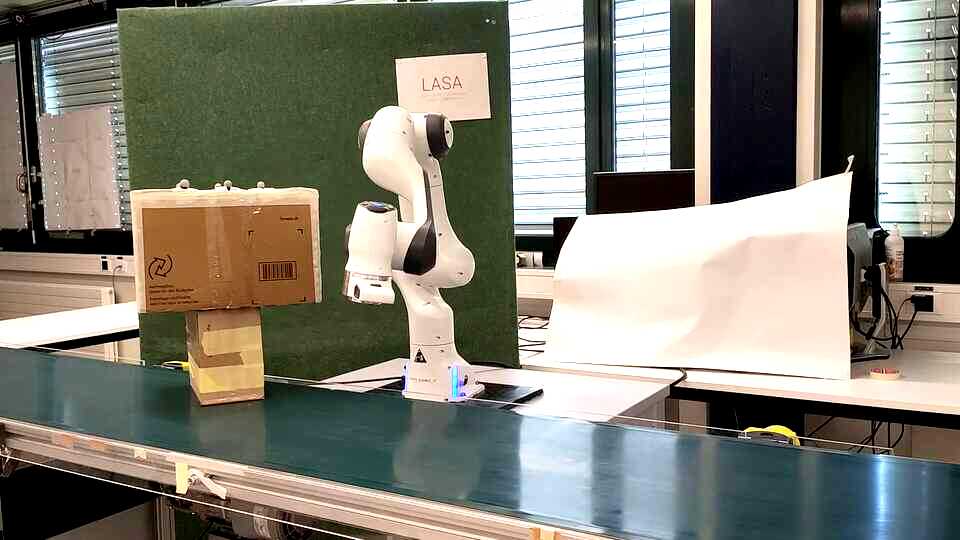}
\end{subfigure}\hfill%
\begin{subfigure}{\figwidth}
\centering
\includegraphics[width=\textwidth, trim={4cm 0cm 10cm 0cm}, clip]{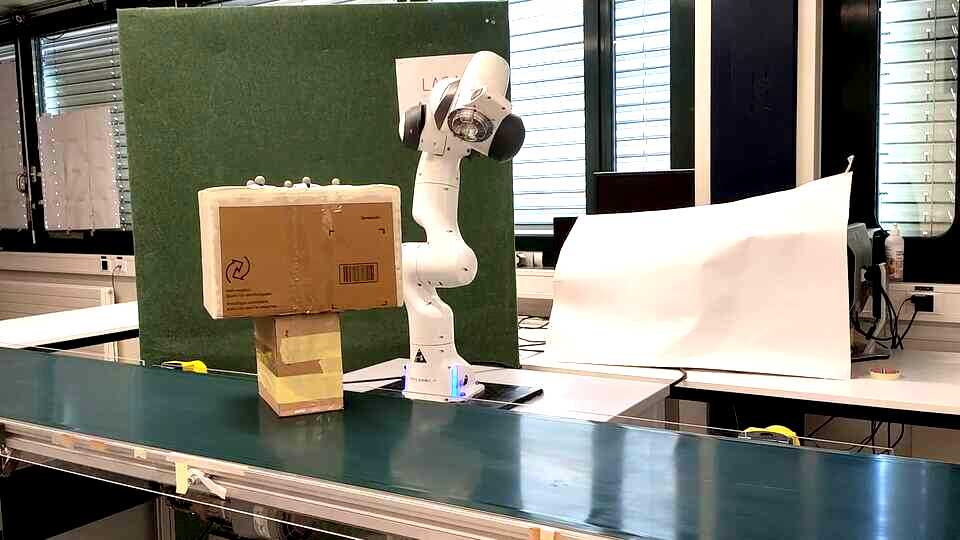}
\end{subfigure}\hfill%
\begin{subfigure}{\figwidth}
\centering
\includegraphics[width=\textwidth, trim={4cm 0cm 10cm 0cm}, clip]{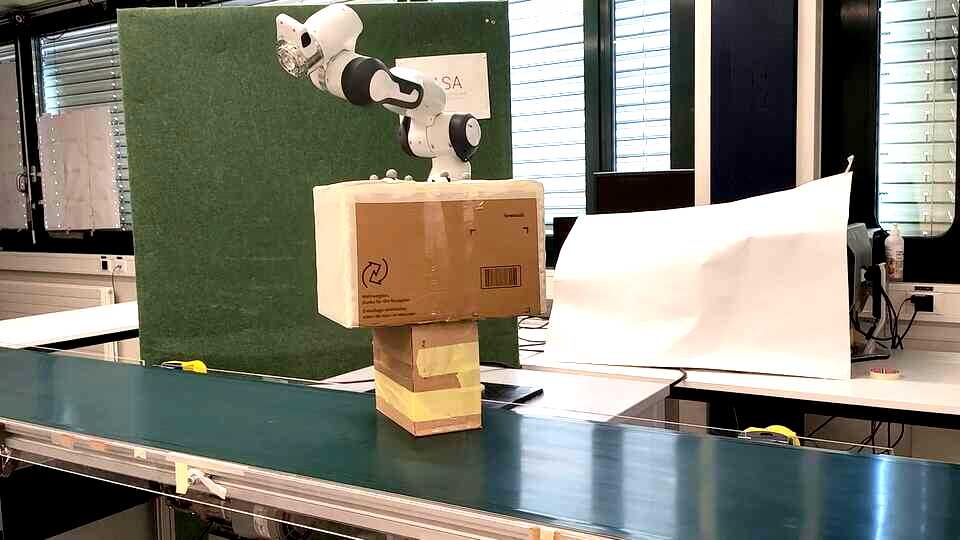}
\end{subfigure}\hfill%
\begin{subfigure}{\figwidth}
\centering
\includegraphics[width=\textwidth, trim={4cm 0cm 10cm 0cm}, clip]{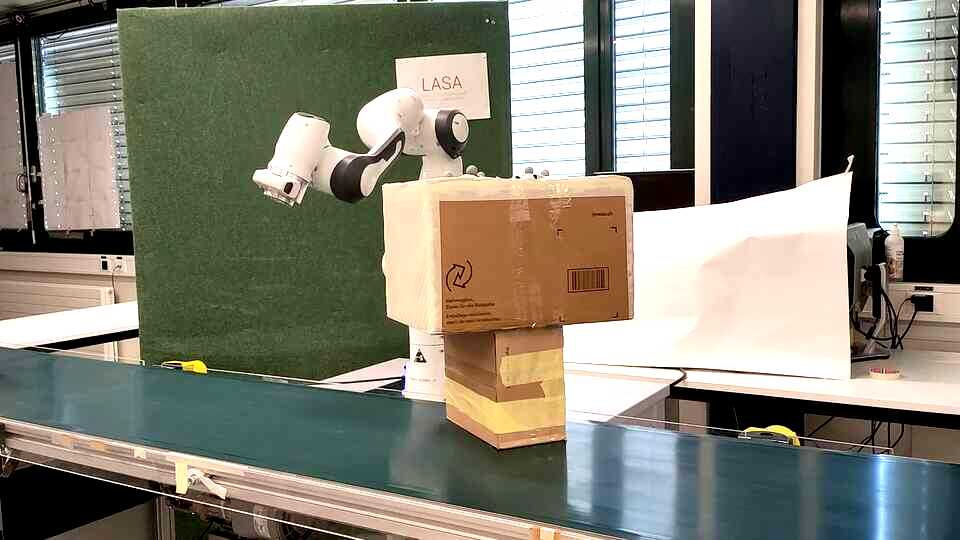}
\end{subfigure}\hfill%
\begin{subfigure}{\figwidth}
\centering
\includegraphics[width=\textwidth, trim={4cm 0cm 10cm 0cm}, clip]{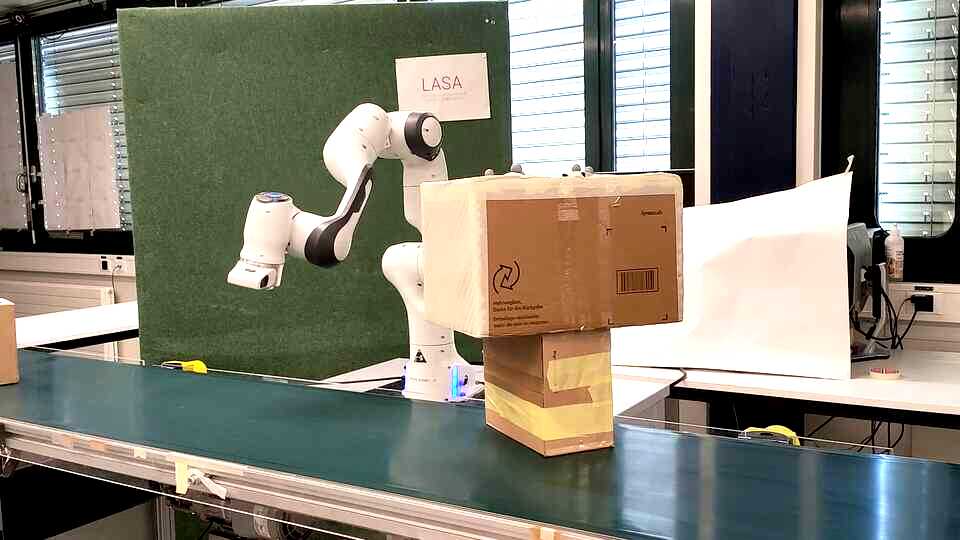}
\end{subfigure}\hfill%
\caption{The 7DoF robot arm adeptly avoids the star-shaped parcels that are being transported on the conveyor belt.}
\label{fig:franka_avoidance_on_conveyor_belt}
\end{figure*}

\section{Discussion}
The proposed rotational obstacle avoidance method (ROAM) has successfully addressed the challenge of avoiding collisions with initially nonlinear dynamics around general concave obstacles without holes. It stands out as the first state-dependent solution for trees-of-stars obstacles that is free from local minima without the need for hyperparameter tuning. Furthermore, ROAM enables obstacle avoidance while attempting to maintain nonlinear dynamics, which is a significant advantage. The algorithm has demonstrated improved convergence and motion similarity compared to the baseline methods in experimental evaluations. Moreover, its low computational cost has allowed its application to dynamic obstacle avoidance scenarios involving a robotic arm.

\subsection{Stationary Points}
ROAM introduces a new stationary point for each obstacle (or obstacle tree). However, due to the topological properties of smooth vector fields, at least one fixed point is created for each obstacle (a hole in space). These fixed points are observed to be saddle points and have no practical significance as they are unstable and the probability of reaching them is effectively zero. Additionally, any noise or perturbation in the system pushes the motion away from these unstable points. It is worth noting that while ROAM is used for dynamic scenarios, the trajectories of these saddle points should be preserved as they reflect the smoothness of the velocity. Smoothness is crucial as it ensures that even with uncertainties in perception or unexpected disturbance, there will be no discontinuity in the desired velocity command $\dot{\vec{\xi}}$. Nevertheless, the removal of saddle points could be achieved by setting the smoothness factor $q$ to 1 in Equation \eqref{eq:pulling_weight} and selecting any tangent direction for $\vec{e}(\vec{\xi})$ when $\vec{c}(\vec{\xi})$ and $\vec{\xi}^r$ are collinear.


\subsection{Trees of Stars}
ROAM relies on a star-shaped (or trees-of-stars) obstacle environment. While in certain real-world scenarios, such division can be achieved based on the rigid-body features of the surroundings (e.g., dividing a human into limbs and core or a table into plate and legs), it is often challenging to determine such divisions for more complex obstacles or in higher-dimensional spaces, such as joint space collision avoidance. Some propositions for extracting star-shaped environments exist \cite{dahlin2022creating}; however, further research is needed to extend these approaches for real-time applications.

However, there exist algorithms to simplify general shapes by approximating them as a union of overlapping spheres \cite{hubbard1996approximating}, which could serve as the basis for constructing the obstacle tree in future work. Additionally, the circular shape of these components reduces the computational complexity of tasks such as checking for component intersections and evaluating the normal direction and distance to the obstacles.

\subsection{Tangent Following on the Surface}
Each tree of stars has exactly two saddle points, which lie on the convergence direction line passing through the root of the tree. At any other surface position, the velocity is tangent to the surface. This can lead to extensive wall-following behaviors for obstacles with multiple levels. To address this, future work should focus on optimizing the selection of saddle points and allowing the rotated velocity to deviate from the obstacle's tangent plane. However, this would require a combination with high-level planning.

\subsection{Higher Dimensional Applications: Joint Limits}
During the experiments with the robotic arm, there were instances where obstacle-free motions led the arm to reach its joint limits or encounter singularities. Future research will investigate a unified framework that combines task-space collision avoidance with joint-space constraint avoidance. ROAM is well-suited for evaluating the desired motion in both representations due to its low computational cost and applicability in higher dimensions.

\subsection{Region of Influence}
Navigation functions for obstacle avoidance \cite{koditschek1990robot, paternain2017navigation} and vector field avoidance algorithms \cite{panagou2014motion, yao2022guiding} often require parameter tuning based on the obstacle distribution to ensure the absence of local minima. In contrast, the introduced method, ROAM, can handle intersecting obstacle regions, and its absence of local minima is guaranteed as long as each distance function $\Gamma_o(\vec{\xi})$ for each obstacle $o$ individually satisfies the limitations outlined in Section~\ref{sec:distance_function}. The distance function proposed in Equation \eqref{eq:distance_function}, which was used throughout this work, is globally active. However, when dealing with dense environments, the linear scaling of ROAM with the number of obstacles can result in high computational costs. To address this, the influence region can be reduced by introducing a distance threshold $d^{\text{max}} \in \mathbb{R}$, such that:
  \begin{equation}
    \lim_{\|\vecs \xi - {\vecs \xi}^b\| \rightarrow d^{\mathrm{max}}}\Gamma(\vecs {\vecs \xi}) \rightarrow \infty
    \quad \text{and} \quad
    \lim_{\|{\vecs \xi} - {\vecs \xi}^b\| \rightarrow 0}\Gamma(\vecs {\vecs \xi}) = 1
  \end{equation}
For example, the distance function can be set as follows:
  \begin{equation}
    \Gamma({\vecs \xi}) =
    \begin{cases}
      d^{\mathrm{max}} / (d^{\mathrm{max}} - \|{\vecs \xi} - {\vecs \xi}^b\|) & \text{if} \;\; \|{\vecs \xi} - {\vecs \xi}^b\| < d^{\mathrm{max}}  \\
      \infty & \text{otherwise}
    \end{cases}
    \label{eq:infinity_distance}
  \end{equation}

\subsection{Application to Robotic Systems}
The proposed algorithm assumes that the robot is a point mass. However, it can be extended to real robots by introducing a margin around the obstacles or using multiple control points, as proposed in \cite{huber2022avoiding}. It is important to note that the latter method does not guarantee full convergence in general scenarios. Alternative methods employ a full-body representation of the agent \cite{mainprice2020interior, koptev2022neural}, which involves sampling or optimization. However, they cannot guarantee finding a feasible path in a finite time. Therefore, future work should focus on extending ROAM to handle analytic avoidance scenarios where trees of stars represent both the agent and the obstacle.


\appendix
\subsection{Vector Rotations}  \label{sec:perpendicular_rotation}
The rotational obstacle avoidance method (ROAM) requires weighted summing between vectors as well as partial (sequential) rotations applied to vector fields.
Herefore, we introduce vector math concepts to simplify these operations.

\subsubsection{Perpendicular Rotation Base} \label{sec:rotation_base}
Let us consider two initial directions $\vect v_i, \vect v_o \in \mathbb{R}^N \setminus \vect 0$. They are used to construct the base vectors $\vect b_i, \vect b_o \in \{\vect b \in \mathbb{R}^N: \|\vect b\| = 1\}$:
\begin{gather}
  {\vect b}_i = \frac{{\vect v}_i }{\| {\vect v}_i \|}
  \, , \;
     {\vect b}_o = \frac{{\hat{\vect b}}_o}{\| {\hat{\vect b}}_o \|}
     \, , \; \quad \beta = \arccos \left( \normdotprod{\vect v_i}{\vect v_o} \right)
     \label{eq:vector_basis} \\
  \;\; \text{with} \quad
  {\hat{\vect b}}_o = \vect v_o - \vect v_i
  \dotprod{\vect v_i}{\vect v_o}
  \;\; , 
  \quad \forall \, \vect v_i, \vect v_o :
  \normdotprod{\vect v_i}{\vect v_o} \neq -1 \nonumber
\end{gather}
The angle $\beta$ and two vectors $\vecs b_i$, $\vecs b_o$ represent the rotation of a vector in $N$ dimensions, this is equivalent to $2 N + 1$ parameters.\footnote{
This compares to $N (N-1) / 2$ parameters required to describe rigid body rotation in $N$ dimensions. Hence, already at $N=4$ the rigid body orientation uses more parameters to describe vector rotation, and stays more efficient for higher dimensions.
}

\subsubsection{Rotating a Vector}
Rotating a vector $\vect v$ entails rotating the component which lies in the plane spanned by $\vect b_i$ and $\vect b_o$, while conserving the part orthogonal to the plane.
\begin{gather}
	{\vect v}_{r} \left(\vect v , \beta, \vect b_{\{i, o\}}\right) 
	 = p_0  \vect b_i \cos(\phi) + \vect b_o \sin(\phi)   + 
	  \vect v - \sum_{j \in \{i, o\}} \vect b_j \dotprod{\vect b_j}{\vect v} \nonumber \\
  \quad \text{with} \quad
  \phi = \phi_0 + \beta
  \;, \;\; \label{eq:vector_rotation_application}
  \tan(\phi_0) =  \frac{\dotprod{\vect b_i}{\vect v}}{\dotprod{\vect b_o}{\vect v}} \;, \\
  \; \; p_0 = \sqrt{\dotprod{\vect b_i}{\vect v}^2 + \dotprod{\vect b_o}{\vect v}^2} \nonumber
\end{gather}
Note that by changing the rotation angle $\beta$, a partial rotation or over-rotation can be applied to a vector, too. 
In this work, the basis is only explicitly stated if not clear from the context.

Moreover, as each rotation basis is defined by the two vectors $\vect b_i$ and $\vect b_o$, a rotation can be rotated, too. For example, a rotation expressed in a relative reference frame can be rotated to the global frame. This allows for the evaluation of weighted rotation sequences, as will be discussed later in this section.

\subsubsection{Weighted Rotation Summing}
Let us assume that we have $N^{\mathrm{vec}} \in \mathbb{N}_{>0}$ vector rotations which are defined with respect to a shared basis vector $\vect b_i$. The rotations have a respective second basis vector $\vect b_{o, v}$ and rotation angle $\beta_v$ with $v \in 1 .. N^{\mathrm{vec}}$. The basis and angle of the weighted average rotation is defined as:
\begin{equation}
  \hat{\vect b}_o = \matr B(\vect b_i) \left( \sum_{v=1}^{N^{\mathrm{vec}}} \matr B^T(\vect b_i) w_v \beta_v \vect b_{o, v} \right)
  \quad \text{and} \quad
   \hat\beta = \| \hat{\vect b}_o \| 
  \label{eq:single_basis_summing}
\end{equation}
where $w_v \in [0, 1]$ are the vector weights, and $\matr B(\vect b_i)$ is an orthonormal basis with respect to the vector $\vect b_i$.

We further define the rotational sum using the symbol $\hat{+}$ as:
\begin{equation}
	\hat{\vect v} = \vect v_0 \hat{+} \sum_{v=1}^{N^{\mathrm{vec}}} w_v \vect v_{v} \label{eq:rotational_summing}
\end{equation}
where the summation is performed according to \eqref{eq:single_basis_summing}, with normalization of the vector pairs $[\vect v_0, \vect v_v]$ to obtain the rotations as described in \eqref{eq:vector_basis}.



\subsubsection{Weighted Rotation Sequence}
Let us assume that we have $N^{\mathrm{rot}}$ sequential rotations such that the output vector $\vect b_{o, n}$ of the element $n$ is equal to the input vector of the next in the sequence $n+1$, i.e., $\vect b_{o, n} = \vect b_{i, n+1}$ with $n = 1 .. N^{\mathrm{rot}} -1$. To evaluate the weighted sequence each basis is adapted with respect to all the parent rotations as:
\begin{equation}
\vect b_{\{i, o\}, c} \gets  \vect v_r((w_n - 1) \beta_n, \, \vect b_{\{i, o\}, c}) \quad n = 1 .. N^{\mathrm{rot}} - 1, \; c > n \label{eq:sequence_invertion}
\end{equation}
The full weighted rotation of the sequence is obtained using the vector rotation as defined in \eqref{eq:vector_basis} with input vector $\vect b_{i, 1}$ and output $\vect v_{o, n} = \vect v_r((1 - w_n) \beta_n, \, \vect b_{o, n})$ with $n=N^{\mathrm{rot}}$.



  

\subsubsection{Direction Tree Evaluation}
Let us define a direction tree with nodes $n = 1 .. N^{\mathrm{node}}$. There exists a single (unique) root node $r$, which has no parent. All other nodes $n$ have single parent $p(n)$, but a node can have multile children, given by the set $\mathcal{C}_n$. The set of all offspring, i.e., including the children of children etc., is referred to as $\mathcal{C}_n^{\mathrm{tot}}$.
Each node has a level $l$, which indicates the discrete distance to the root.
The value of each node corresponds to a unit vector $\vect v_i$, and hence the vector rotation between the parent and the node can be evaluated using \eqref{eq:vector_basis}. 
For given weights $w_n$, the weighted direction is evaluated as described in Algorithm~\ref{alg:rotation_summing_tree}. The rotation tree is interpreted as a combination of weighted summations and sequences, as described earlier in this section.

\begin{algorithm}[ht] 
  \caption{Rotation Tree Summing} \label{alg:rotation_summing_tree}
  \begin{algorithmic}[1]
    \renewcommand{\algorithmicrequire}{\textbf{Input:}}
    \renewcommand{\algorithmicensure}{\textbf{Output:}}
    \REQUIRE $w_n$, $\vect v_n$ for $n = 1 .. N^{\mathrm{node}}$
    \ENSURE Averaged vector-rotation $\vect v^*$
    \FOR [Reverse iteration over levels] {$l = L^{\mathrm{max}} \, .. \, 1$}
    \FOR [For all nodes of the same level] {$n: l_n = l$}
    \STATE $w_{p(n)} \gets w_{p(n)} + w_{n}$ \COMMENT{Update cumulative weight}
    \ENDFOR
    \ENDFOR
	\STATE $\vect v ^*_1 = \vect v_{r}$ \COMMENT{Initialize base of root $r$}
    \FOR[Iteration over levels] {$l = 2 \, .. \, L^{\mathrm{max}}$}
	\STATE $\vect v^*_l = \vect v^*_{l-1} \hat + \sum_{c}^{l_c == l} \vect v_c$ \COMMENT{Weighted average from \eqref{eq:rotational_summing}}
	\STATE $\vect v_{r}(\cdot), \beta \gets \vect v_{l-1}^*, \vect v_{l}^*$ \COMMENT{Rotation from vectors \eqref{eq:vector_basis}}
	\FOR [For all offsprings of $l$] {$c \in \mathcal{C}_l^{\mathrm{tot}}$}
	\STATE $\vect v_{r, c}(\cdot), \beta_c \gets \vect v^*_l, \vect v_c$ \COMMENT{Rotation from vectors \eqref{eq:vector_basis}}
	\STATE $\vect b_{j, c} \gets  \vect v_{r, c}(- \beta_c, \, \vect b_{j, c})$ \COMMENT{Sequence inversion \eqref{eq:sequence_invertion}}
    \STATE $\vect b_{j, c} \gets \vect v_r(\beta, \vect b_{j, c})$ \COMMENT{Apply rotation to $\vect v^*_l$ as \eqref{eq:vector_rotation_application}}
    \ENDFOR
    \ENDFOR
	\STATE $\vect v^* = \vect v^*_{L^{\mathrm{max}}}$
  \end{algorithmic}
\end{algorithm}



\begin{lemma} \label{lemma:direction_tree}
  The vector rotation as described in Algorithm~\ref{alg:rotation_summing_tree} ensures a smooth vector summing for weights $w_n \in [0, 1]$ such that $\sum_{n = 1}^{N^{\mathrm{node}}} w_n = 1$, such that we reach converge to a vector when its corresponding weight is approaching one, i.e., $\lim_{w_i \rightarrow 1}\vect v^* = \vect v_i$. 
\end{lemma}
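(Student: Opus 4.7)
The plan is to verify the two assertions of the lemma separately. For smoothness, I would note that every primitive used by Algorithm~\ref{alg:rotation_summing_tree} --- the basis construction in \eqref{eq:vector_basis}, the planar rotation in \eqref{eq:vector_rotation_application}, the weighted rotational sum in \eqref{eq:single_basis_summing}, and the sequence inversion in \eqref{eq:sequence_invertion} --- is built from smooth elementary functions ($\arccos$, trigonometric functions, inner products, normalisation) on the open domain on which no two combined vectors are anti-collinear. The algorithm is a finite composition of these smooth operations along the tree, so the output $\vect v^\ast$ depends $C^1$-smoothly on the weights $w_n$ and on the input vectors $\vect v_n$ whenever the non-anti-collinearity hypothesis underlying \eqref{eq:vector_basis} is satisfied at every combination step.

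For the convergence property, the key observation is that the weights form a partition of unity, so $w_i \to 1$ forces $w_n \to 0$ for all $n \neq i$. I would first analyse the cumulative weights produced by the reverse level-traversal (lines~1--5): by induction on the level, the stored weight at any node $n$ after this loop equals $\sum_{m \in \{n\}\cup \mathcal{C}_n^{\mathrm{tot}}} w_m$. Under the limit $w_i \to 1$, this cumulative weight tends to $1$ on every ancestor of $i$ (including $i$ itself) and to $0$ on every other node.

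The heart of the argument is then an induction over levels $l = 1, \ldots, L^{\mathrm{max}}$ on the aggregated vector $\vect v^\ast_l$. The base case $\vect v^\ast_1 = \vect v_r$ is immediate. For the inductive step, consider the weighted rotational sum $\vect v^\ast_l = \vect v^\ast_{l-1}\,\hat{+}\,\sum_{c : l_c = l} w_c \vect v_c$. Using \eqref{eq:single_basis_summing}, every term whose cumulative weight vanishes contributes $w_c\,\beta_c\,\vect b_{o,c} \to \vect 0$ to the summed basis, hence does not affect $\hat{\vect b}_o$ or $\hat\beta$ in the limit. Consequently, only the (unique) ancestor $a_l$ of $i$ at level $l$ survives, and --- combined with the inductive hypothesis $\vect v^\ast_{l-1} = \vect v_{a_{l-1}}$ and the fact that the surviving cumulative weight is $1$ --- the rotational sum reduces to a full rotation from $\vect v_{a_{l-1}}$ to $\vect v_{a_l}$, yielding $\vect v^\ast_l = \vect v_{a_l}$. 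Once $l = l_i$, we have $\vect v^\ast_{l_i} = \vect v_i$; for deeper levels all offspring of $i$ carry weight $0$ and the inner sum vanishes, so the aggregated vector is preserved and $\vect v^\ast = \vect v^\ast_{L^{\mathrm{max}}} = \vect v_i$.

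The step I expect to be the main obstacle is controlling the sequence-inversion block (lines~10--12), which rebases every offspring through the rotation $\vect v^\ast_{l-1} \to \vect v^\ast_l$ before the next aggregation. I would handle this by showing that sequence inversion acts by conjugation of the planar rotation and therefore commutes with the vanishing-weight limit: if a descendant $c$ has cumulative weight tending to $0$, the pre-rotated basis $\vect b_{j,c}$ remains bounded and its contribution to \eqref{eq:single_basis_summing} still vanishes; if $c$ lies on the surviving path, the inversion precisely re-expresses $\vect v_c$ in the basis attached to $\vect v^\ast_l$, so the subsequent weighted sum at level $l+1$ rotates $\vect v^\ast_l$ onto $\vect v_c$ as required. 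Together with the finite depth of the tree, this establishes $\lim_{w_i \to 1} \vect v^\ast = \vect v_i$.
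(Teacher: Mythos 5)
Your proposal is correct and follows essentially the same route as the paper: the cumulative-weight pass concentrates all weight on the path from the root to node $i$, so the tree evaluation degenerates to a full rotation sequence ending at $\vect v_i$, with smoothness following from the smoothness of each primitive away from anti-collinear pairs. If anything, your version is more careful than the paper's --- you correctly identify that the surviving unit cumulative weights sit on the \emph{ancestors} of $i$ (the paper's displayed case distinction writes $c \in \mathcal{C}_n$, i.e.\ children, which has the direction of propagation backwards), and you explicitly handle the sequence-inversion step that the paper passes over.
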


\begin{proof}
 When a specific direction-node $n$ has weight $w_n = 1$, it follows from Algorithm~\ref{alg:rotation_summing_tree} all the child-nodes weight one and zero otherwise:
\begin{equation}
 w_c = \begin{cases}
 1  & \text{if} \;\; c \in \mathcal{C}_n \\
 0  & \text{otherwise}
 \end{cases}
\end{equation}
Hence, we have a rotation sequence of the form \eqref{eq:sequence_invertion}. Due to the sum of the weights of all sequence elements being equal to one, the summed weight equals $\vect b_{o,e}$  where $e$ is the index of the ending element.
Note successive vectors cannot be anti-parallel, as denoted in \eqref{eq:vector_basis}.
\end{proof}

Note, that the weighted evaluation of the direction tree can be reduced to a single vector $\vect v^*$, or the whole sequence can be kept by storing all $\vect v^*_{(\cdot)}$. If all node-parent pair had an angle $\beta < \pi$, then this will be the case for the resulting sequence, too.






\renewcommand*{\bibfont}{\footnotesize}
\printbibliography

\end{document}